\documentclass{article}

\PassOptionsToPackage{numbers, compress}{natbib}
 \usepackage[preprint]{neurips_2026}

\usepackage[utf8]{inputenc} 
\usepackage[T1]{fontenc}    
\usepackage[pdfencoding=auto,psdextra,colorlinks,linkcolor=red,citecolor=blue,urlcolor=blue]{hyperref}       
\usepackage{url}            
\usepackage{booktabs}       
\usepackage{amsfonts}       
\usepackage{nicefrac}       
\usepackage{microtype}      
\usepackage{subcaption}
\usepackage{caption}
\usepackage{amsmath}
\usepackage{amssymb}
\usepackage{mathtools}
\usepackage{amsthm}
\usepackage{tabularray}
\usepackage[dvipsnames, table]{xcolor}
\usepackage{amsfonts}
\usepackage{comment}
\usepackage{makecell}
\usepackage{algorithm}
\usepackage{algorithmic}
\usepackage{enumitem}
\usepackage{tcolorbox}
\usepackage{multirow}
\usepackage{placeins}
\usepackage{makecell}

\usepackage[capitalize,noabbrev]{cleveref}
\UseTblrLibrary{booktabs}
\usepackage{bbding}
\input{Definitions}

\definecolor{darkgreen}{RGB}{0,100,0}
\definecolor{darkred}{RGB}{139,0,0}
\definecolor{darksalmon}{rgb}{0.91, 0.59, 0.48}
\definecolor{emerald}{rgb}{0.31, 0.78, 0.47}
\definecolor{green(pigment)}{rgb}{0.0, 0.65, 0.31}

\theoremstyle{plain}
\theoremstyle{definition}
\theoremstyle{remark}

\usepackage[textsize=tiny]{todonotes}

\renewcommand{\citet}{\citep}

\title{TodyComm: Task-Oriented Dynamic Communication for Multi-Round LLM-based Multi-Agent System}

\author{%
  Wenzhe Fan $^{1}$\thanks{Correspondence to: Wenzhe Fan, \texttt{wfan23@uic.edu}.  $^{1}$Department of Compute Science, University of Illinois at Chicago. } \\
  \And
  Tommaso Tognoli $^{1}$ \\
  \And
  Henry Peng Zou $^{1}$ \\
  \AND
  Chunyu Miao $^{1}$ \\
  \And
  Yibo Wang $^{1}$ \\
  \And
  Xinhua Zhang $^{1}$ \\
}

\begin{document}

\maketitle

\vspace{-1em}
\begin{abstract}
Multi-round LLM-based multi-agent systems rely on effective communication structures to support collaboration across rounds. 
However, most existing methods employ a fixed communication topology during inference,
which falls short in many realistic applications where the agents' roles may change \textit{across rounds} due to dynamic adversary, task progression, or time-varying constraints such as communication bandwidth.
In this paper, we propose addressing this issue through TodyComm, a \textbf{t}ask-\textbf{o}riented \textbf{dy}namic \textbf{comm}unication algorithm.
It  produces behavior-driven collaboration topologies that adapt to the dynamics at each round,
optimizing the utility for the task through policy gradient.
Experiments on five benchmarks demonstrate that, under both dynamic adversarial settings and communication budget constraints, TodyComm achieves superior task performance while maintaining token efficiency, scalability, and strong generalizability across varying adversarial conditions.
\end{abstract}

\section{Introduction}
\vspace{-0.8em}
\label{sec:intro}
Large language model (LLM)-based multi-agent systems (MAS) \citep{li2023camel, hong2023metagpt, zhuge2024gptswarm, wu2024autogen, liang2024encouraging, zou2025latent} have emerged as a powerful paradigm for collaborative problem solving. 
Within this research area, multi-agent, multi-round collaboration \citep{du2023improving, liu2024dynamic} constitutes an important interaction mechanism, 
in which agents iteratively exchange information across multiple rounds and refine their responses before producing a final solution. 
Such iterative collaboration enables the integration of diverse reasoning perspectives, supports error correction over time, 
and progressively improves solution quality, making it particularly effective for complex reasoning, planning, and decision-making tasks \citep{zhangcut, parmar-etal-2025-plangen, fan2025evomem, zhaosirius}.

Recent research on multi-round LLM-based multi-agent systems (MAS) has explored a range of methods for enabling effective collaboration across multiple rounds of interaction. 
A prominent line of work focuses on communication topology design, where agent interactions are modeled as graphs that are either predefined or optimized to improve collaboration \citep{ zhangcut,zhangg, zhang2025safesieve,wang-etal-2025-agentdropout,  li2025adaptive}. 
However, these approaches rely on fixed communication graphs \textit{at inference time}, which prevents adaptation to evolving agent behaviors (e.g., historical response, neighboring info and agent-related info, etc) and consequently degrades the task performance, particularly in dynamic settings.

Complementary to topology design, a growing body of work has investigated the security and robustness of multi-round LLM-based MAS. 
Existing approaches \citep{wang-etal-2025-g, miao2025blindguard} primarily rely on graph-based anomaly detection, while \citet{zhou2026resmas} proposes resilient MAS architectures that tolerate agent corruption. 
However, \citet{zhou2026resmas} still depends on a fixed communication graph, and \citet{wang-etal-2025-g} and \citep{miao2025blindguard} do not incorporate task-level feedback (e.g., utility or task accuracy), which can lead to less effective interactions.
Moreover, \citet{wang-etal-2025-g} requires adversity labels during training, which is impractical in real-world settings.

\begin{table*}[t]
\centering
\footnotesize
\renewcommand{\arraystretch}{1}  
\captionsetup{skip=2pt}
\caption{Comparing and contrasting our method with baseline methods. 
``$\ast$'': Edge selection per round indicates that graph structures vary across rounds due to changes in the underlying graph distribution, rather than stochastic sampling from an \textit{invariant} set of edge weights. 
``$\ast\ast$'': Edge selection per round in training'' means the algorithm changes the edges during training, which in turn \textbf{actively impacts the training data}. 
``-'' denotes the absence of adversarial setting in the experiment. ``N.A.'' indicates that no optimization is involved.}
\label{tab:methods}
\begingroup
\setlength{\tabcolsep}{6pt}
\begin{tabular}{l|cccccc}
\toprule
 & \begin{tabular}[c]{@{}c@{}}Edge selection \\ $\textbf{per round}^{\ast}$ \\ in \textbf{inference}\end{tabular}
 & \begin{tabular}[c]{@{}c@{}}Edge selection  \\ $\textbf{per round}^{\ast}$ \\ in $\textbf{training}^{\ast\ast}$\end{tabular}
 & \begin{tabular}[c]{@{}c@{}}Adversary \\ label \textbf{not}\\  required \end{tabular}
 & \begin{tabular}[c]{@{}c@{}}Parametric edge \\ selector based on \\ agent behavior \end{tabular} 
 & \begin{tabular}[c]{@{}c@{}}Task-driven \\ edge \\ selection \end{tabular} 
 & \begin{tabular}[c]{@{}c@{}}Optimization \\ mode\end{tabular} 
 \\
\midrule
G-Designer   & \textcolor{darksalmon}{\XSolidBrush}  & \textcolor{darksalmon}{\XSolidBrush} & \textcolor{green(pigment)}{\large{$\boldsymbol{\checkmark}$}} & \textcolor{darksalmon}{\XSolidBrush} & \textcolor{green(pigment)}{\large{$\boldsymbol{\checkmark}$}}  &RL + aux. loss \\
AgentPrune   & \textcolor{darksalmon}{\XSolidBrush}  & \textcolor{darksalmon}{\XSolidBrush} & \textcolor{green(pigment)}{\large{$\boldsymbol{\checkmark}$}} & \textcolor{darksalmon}{\XSolidBrush} & \textcolor{green(pigment)}{\large{$\boldsymbol{\checkmark}$}} & RL + aux. loss \\
AgentDropout & \textcolor{darksalmon}{\XSolidBrush}   & \textcolor{green(pigment)}{\large{$\boldsymbol{\checkmark}$}} & {\textcolor{gray}{\small\bfseries{-}}} & \textcolor{darksalmon}{\XSolidBrush} & \textcolor{green(pigment)}{\large{$\boldsymbol{\checkmark}$}} & RL + aux. loss \\
SafeSieve  & \textcolor{darksalmon}{\XSolidBrush}  & \textcolor{darksalmon}{\XSolidBrush} & \textcolor{green(pigment)}{\large{$\boldsymbol{\checkmark}$}} & \textcolor{darksalmon}{\XSolidBrush} & \textcolor{green(pigment)}{\large{$\boldsymbol{\checkmark}$}} & N.A. \\
AGP       & \textcolor{darksalmon}{\XSolidBrush}  & \textcolor{darksalmon}{\XSolidBrush} & {\textcolor{gray}{\small\bfseries{-}}} & \textcolor{darksalmon}{\XSolidBrush} & \textcolor{green(pigment)}{\large{$\boldsymbol{\checkmark}$}} &Supervised\\
G-safeguard & \textcolor{green(pigment)}{\large{$\boldsymbol{\checkmark}$}}  & \textcolor{darksalmon}{\XSolidBrush} & \textcolor{darksalmon}{\XSolidBrush} & \textcolor{green(pigment)}{\large{$\boldsymbol{\checkmark}$}}  & \textcolor{darksalmon}{\XSolidBrush} &Supervised\\
BlindGuard   & \textcolor{green(pigment)}{\large{$\boldsymbol{\checkmark}$}}  & \textcolor{darksalmon}{\XSolidBrush} & \textcolor{green(pigment)}{\large{$\boldsymbol{\checkmark}$}} & \textcolor{green(pigment)}{\large{$\boldsymbol{\checkmark}$}} & \textcolor{darksalmon}{\XSolidBrush} & Unsupervised \\
ResMAS       & \textcolor{darksalmon}{\XSolidBrush} & \textcolor{darksalmon}{\XSolidBrush} & \textcolor{green(pigment)}{\large{$\boldsymbol{\checkmark}$}} & \textcolor{green(pigment)}{\large{$\boldsymbol{\checkmark}$}} & \textcolor{green(pigment)}{\large{$\boldsymbol{\checkmark}$}} & SFT + GRPO\\
DyTopo       & \textcolor{green(pigment)}{\large{$\boldsymbol{\checkmark}$}} & \textcolor{darksalmon}{\XSolidBrush} & - & \textcolor{darksalmon}{\XSolidBrush} & \textcolor{darksalmon}{\XSolidBrush} & N.A.\\
Graph-GRPO       & \textcolor{darksalmon}{\XSolidBrush} & \textcolor{darksalmon}{\XSolidBrush} & - & \textcolor{darksalmon}{\XSolidBrush} & \textcolor{green(pigment)}{\large{$\boldsymbol{\checkmark}$}} & GRPO\\
\textbf{TodyComm}       &  \textcolor{green(pigment)}{\large{$\boldsymbol{\checkmark}$}} & \textcolor{green(pigment)}{\large{$\boldsymbol{\checkmark}$}} & \textcolor{green(pigment)}{\large{$\boldsymbol{\checkmark}$}} & \textcolor{green(pigment)}{\large{$\boldsymbol{\checkmark}$}} & \textcolor{green(pigment)}{\large{$\boldsymbol{\checkmark}$}} & RL \\
\bottomrule
\end{tabular}
\endgroup
\vspace{-2em}
\end{table*}
\vspace{-0.2em}

In this paper, we are motivated by scenarios of LLM-based MAS where the agents' communication graph needs to be adapted \textit{across rounds}.
For example, a task must be accomplished through multiple stages,
each of which may require a distinct graph.
The communication channels may impose time-varying constraints on bandwidth.
In a dynamically adversarial setting,
agents may randomly transition to adversarial behavior at unknown rounds, while retaining their original roles and producing plausibly structured but misleading analyses to other agents. 
It reflects realistic collaborative scenarios, where participants may gradually become unreliable due to misunderstanding, bias, fatigue, or external incentives, rather than through explicit role changes.

Despite its importance, learning dynamic communication structures in multi-agent, multi-round settings remains challenging.
Directly optimizing discrete inter-agent graphs across rounds induces a combinatorial action space.
Moreover, determining how communication structures should evolve based on agents’ behaviors in previous rounds is itself non-trivial, as agent outputs are unstructured, stochastic, and context-dependent.

To address these challenges, we propose TodyComm, a \textbf{t}ask-\textbf{o}riented framework that \textbf{dy}namically learns behavior-driven \textbf{comm}unication structures to coordinate LLM-based agents across multiple interaction rounds during both training and inference.
TodyComm formulates the multi-agent multi-round interaction process as a Markov decision process \citep{puterman1990markov}, 
where each episode corresponds to solving a query over $T$ interaction rounds, 
and actions correspond to the sequence of inter-agent communication graphs across rounds together with the final decision graph.


The communication graphs are learned as constrained topologies by parameterizing pairwise edge influences through embeddings of the endpoint nodes, subject to edge masks, an acyclicity constraint, and node-wise degree budgets.
The node embeddings are modeled using gated recurrent networks \citep[GRN,][]{chung2014empirical}, 
allowing temporal dependencies to be captured across rounds while adapting to each agent’s historical contributions, reliability, and interaction context.
The graph structures are optimized using reinforcement learning \citep[RL,][]{williams1992simple}, 
with task utility serving as the reward signal.

We conduct extensive empirical evaluations on five diverse benchmarks, in addition to budget control, scalability and generalization, demonstrating that TodyComm consistently outperforms baseline methods and highlighting the effectiveness of learning dynamic communication topologies from agent behaviors in multi-round LLM-based MAS. 

To the best of our knowledge, TodyComm is the first framework to learn task-oriented, behavior-driven dynamic communication topologies in multi-round LLM-based multi-agent systems,
enabling task-adaptive and round-adaptive communication during both training and inference.

\vspace{-1em}
\section{Related Work on Graph Learning for Multi-agent Communication}
\vspace{-0.8em}

In conventional multi-agent RL, methods such as TarMAC~\citep{das2019tarmac}, IC3Net~\citep{singhlearning}, SchedNet~\citep{kimlearning}, ToM2C~\citep{wangtom2c}, and MAGIC~\citep{niu2021multi} learn communication patterns via attention, gating, or pairwise edge probabilities, enabling agents to modulate their interactions.
LSC \citep{sheng2023learning} further introduces dynamic hierarchical grouping, 
but its topology remains restricted and not optimized as a general graph.
Although these approaches can induce dynamic interaction patterns, 
they do not explicitly optimize communication topology under structural constraints such as bounded node degree.
Moreover, many rely on numeric-valued messages to enable differentiation or weighted combinations, 
which is infeasible for language agents,
especially closed-source LLMs.

In LLM-based MAS, early works assume typically rely on fixed or predefined interaction structures, where agents communicate under static coordination patterns \citep{jiang2023llm, du2023improving, qianscaling, chanchateval, wu2024autogen}. Subsequent methods introduce explicit communication graphs, typically by assigning learnable edge weights over a fixed set of potential connections. For example, GPTSwarm \citep{zhuge2024gptswarm} and Graph-GRPO \citep{cang2026graph} construct communication structures through behavior-\textit{independent} parameters and they focus on the single round communication.

More generally, multi-round MAS often adopt a spatial–temporal graph formulation, where communication is defined over a fixed graph and refined via pruning or sampling. Representative examples include G-Designer \citep{zhangg}, which constructs graphs from predefined connections, and pruning-based approaches such as AgentPrune \citep{zhangcut}, AgentDropout \citep{zhang2025safesieve}, and SafeSieve \citep{wang-etal-2025-agentdropout}, which remove edges or agents during training. These methods rely on graph structures that \textit{remain fixed at inference time}, and the observed dynamism arises primarily from stochastic sampling rather than behavior-driven structural adaptation.
DyTopo \citep{lu2026dytopo} induces a dynamic graph at inference time; however, it remains a rule-based method based on semantic matching, without learning the communication structure from agent behavior or task-driven utility signals.

Several works also study robustness in LLM-based MAS. Methods such as G-Safeguard \citep{wang-etal-2025-g} and BlindGuard \citep{miao2025blindguard} employ graph-based anomaly detection to identify adversarial agents, while ResMAS \citep{zhou2026resmas} designs resilient architectures under agent corruption. 
But they focus on detection or robustness rather than \textit{cooperation driven by task performance}, 
relying on \textit{fixed} communication topologies.


In contrast, we consider a dynamically adversarial setting, \textbf{where agents may transition to adversarial behavior at unknown rounds \underline{during interaction}}.
The number of adversarial agents is controlled by an attack rate, enabling systematic evaluation of whether a system can adapt its communication topology online in response to evolving agent behaviors.
Unlike existing approaches that fix communication structures at inference time or adapt them without regard to task performance, 
TodyComm produces behavior-driven, round-adaptive, and task-oriented communication graphs.
Table~\ref{tab:methods} compares its properties with state-of-the-art methods,
and its connection with multi-agent influence diagrams \citep[MAID,][]{koller2003multi} is presented in Appendix~\ref{sec:app_maid}.

\vspace{-1.2em}
\section{Preliminaries}
\vspace{-0.8em}
In this paper, we adopt multi-round LLM-based multi-agent framework \citep{zhangcut,zhangg}, in which multiple LLM agents interact over multiple rounds to solve a given query.

\textbf{Multi-round LLM-based MAS}\quad
Given a query \(q\), a set of \(N\) agents \(\{A_1, \ldots, A_N\}\) engage in a multi-round interaction over \(T\) rounds. 
Each agent \(A_i := (\text{base}_i, \text{role}_i)\) is instantiated with an underlying LLM model and an assigned role at initialization.
At round \(t = 1\), the agents independently generate an initial response \(\{o_i^{1}\}_{i=1}^N\) to the query without inter-agent communication. 
For rounds \(t \in [2, \ldots, T]\), agents iteratively update their responses by communicating and collaborating with other agents, producing updated outputs \(o_i^{t} := \{\text{sol}_i^t, \text{ana}_i^t\}\), 
where $\text{sol}_i^t$ is the answer such as a one-hot vector for multiple choice,
and $\text{ana}_i^t$ is the embedding of the agent's analysis.

After \(T\) rounds, the final answer is obtained by aggregating the agents’ outputs from the last round \(\{o_i^{T}\}_{i=1}^N\). 
For convenience, we refer to the decision phase as round $T+1$ in the following sections.

\textbf{Multi-round MAS as Graphs}\quad
We model multi-agent interactions over \(T\) rounds using spatial graphs and distinguish two types of structures. 
Inter-agent communication at round \(t \in [2, \ldots, T]\) is modeled as a directed acyclic graph (DAG) \(\mathcal{G}_C^{t} = (\mathcal{V}_C^{t}, \mathcal{E}_C^{t})\), 
where \(\mathcal{V}_C^{t}\) denotes the set of participating agent nodes at round \(t\), 
with \(|\mathcal{V}_C^{t}| \leq N\), and \(\mathcal{E}_C^{t}\) is the set of directed edges specifying message-passing relationships among agents -
messages serve as prompts for the recipient.

The aggregation of agent responses into a final decision is modeled using a separate decision graph \(\mathcal{G}_D = (\mathcal{V}_D, \mathcal{E}_D)\), 
where \(\mathcal{V}_D\) denotes the set of agent nodes contributing to the final decision, with \(|\mathcal{V}_D| \leq N\), 
and \(\mathcal{E}_D\) consists of edges connecting selected agents to the decision node.

\textbf{Multi-agent Communication} \quad 
Given a query \(q\), information is propagated through agent nodes in a topological order, ensuring that each node is executed only after all its predecessors are resolved. Consequently, the inter-agent communication graph \(\mathcal{G}_C^t\) is restricted to be a DAG.
At round \(t \in [2,\ldots, T]\), each node \(v_i\) receives the query \(q\) along with messages from its parent nodes \(\mathcal{N}^t(v_i)\). Each parent \(v_j \in \mathcal{N}^t(v_i)\) provides a message
$
m_j^t = f_m(o_j^t, \text{role}_j),
$
where \(f_m\) is a prompt template function and \(\text{role}_j\) denotes the role assigned to agent \(j\) at initialization. 
Node \(v_i\) then computes its output $o_i^t$ based on its prompt \(p_i^t\),
which is defined using another prompt template function \(f_v\):
$p_i^t \sim f_v \Bigl(
q,\ 
\bigcup\nolimits_{v_j \in \mathcal{N}^t(v_i)} m_j^t
\Bigr)
$.

The final answer to query \(q\), denoted by \(\text{ans}_q\), is produced by a predefined decision function \(f_d\) that aggregates the outputs of selected agents connected to the decision node \(v_d\):
$\text{ans}_q \sim f_d \Bigl(
\bigcup\nolimits_{v_j \in \mathcal{N}^{T+1}(v_d)} o_j^T
\Bigr)$.
The complete execution procedure of the communication graph is summarized in Algorithm~\ref{algo:graph_exec} in Appendix~\ref{sec:app_graph_execution}.

\begin{figure*}[!t] 
    \centering
    \includegraphics[width=\textwidth]{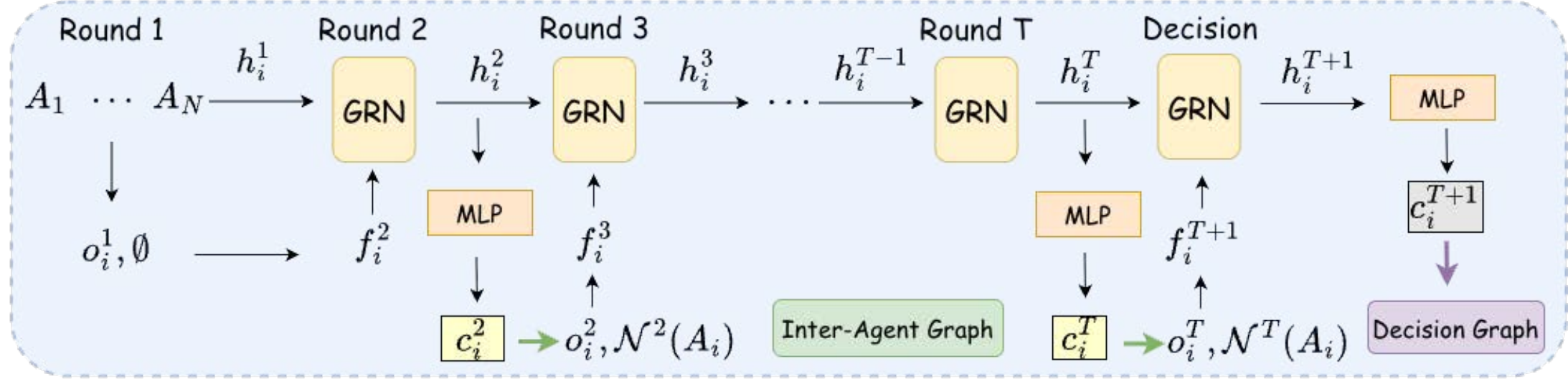}
    \caption{Overview of TodyComm training workflow.
\(\{A_i\}_{i=1}^N\) denote the agents, and \(\{o_i^t\}_{i=1}^N\) their outputs at round \(t\). 
The neighborhood of agent \(A_i\) at round \(t\) is denoted by \(\mathcal{N}^t(A_i)\), initialized as \(\emptyset\) at the first round. 
Node features \(\{f_i^t\}_{i=1}^N\) are constructed from each agent’s output and those of its neighbors from the previous round. Node embeddings \(\{h_i^t\}_{i=1}^N\) are computed via a GRN, with \(h_i^1=\emptyset\). 
Node potentials \(\{c_i^t\}_{i=1}^N\) are then derived from the embeddings,
and contribute to the construction of the communication graph at each round \(t \in [2, T+1]\).
    }
    \label{fig:framework}
    \vspace{-1.2em}
\end{figure*}

\section{Method}
\vspace{-0.8em}
We formulate this multi-agent collaboration as a Markov decision process \citep[MDP,][]{puterman1990markov}, where each episode corresponds to solving a query through $T$ interaction rounds.
The \textbf{actions} correspond to the inter-agent communication graphs $\mathcal{G}_C^t$ for $t \in [2, \ldots, T]$, together with the decision graph \(\mathcal{G}_D\) at round \(T+1\). 
The inter-agent \emph{directed} communication graph \(\mathcal{G}_C^t\) at round \(t\) is represented by binary edge indicators \(\mathcal{E}_C^t \in \{0,1\}^{\mathcal{V}_C^{t} \times \mathcal{V}_C^{t}}\). 
The decision graph \(\mathcal{G}_D\) is represented by binary node indicators \(\mathcal{E}_D \in \{0,1\}^{\mathcal{V}_D}\) with associated decision weights $\{w_i\}_{i=1}^{|\mathcal{V}_D|}$.

The \textbf{state} of the MDP is the response of all agents $\{o_i^t\}_i$,
along with all the relevant information such as $q$ and previous $\{o_i^{\tau}\}_i$ ($\tau < t$).
The \textbf{transition} probability concerns the distribution of the next-round response $\{o_i^{t+1}\}_i$ under the given graph (i.e., action) at $t$. 
The \textbf{reward}, in the simplest form, is the supervised loss on the response of the decision node at round $T+1$,
i.e., the task utility on the final answer.

We denote the policy of constructing the communication graph at round \(t\) by \(\pi_{t}\), and the policy of the decision graph by \(\pi_{T+1}\). 
Their specific construction and parametrization will be detailed in Section~\ref{sec:inter_agent_opt}.
Our goal is to learn policy $\pi_{t}$ that induces communication and decision structures maximizing the expected utility of query $q$ through multiple interaction rounds. 
Given a query $q$, we denote its utility by $U_q(\tau)$,
where $\tau$ is the trajectory of states and actions. 
Parameterizing the policy by $\theta$,
the reinforcement learning (RL) objective is then defined as:
\begin{align}
\label{eq:graph_optimization}
\mathcal{L}(\theta)
=
\max_{\theta \in \Theta}\;
\mathbb{E}_{q \sim \mathcal{Q}}\;
\mathbb{E}_{\tau \sim \pi_{t, \theta}}
\big[ U_q(\tau) \big].
\end{align}

\vspace{-1.6em}
\subsection{Dynamic Edge Optimization}
\vspace{-0.7em}
In our \textbf{dynamically adversarial setting}, agents may turn adversarial at any round, so the communication graph must continuously adapt to preserve cooperation among reliable agents.
Since our policy is the communication graph constructed at each round, the key challenge is therefore \emph{edge selection}: which directed edges to form at each round to form a DAG.
Drawing on the standard graph-learning practice of predicting edges from node features \citep{kipf2016variational, hoff2002latent}, 
we learn per-round node representations that capture each agent’s behavior and messages sent by neighboring agents.
These node features then drive the sequential edge selection decisions,
determining the communication edges among agents and the connections to the decision node.

\vspace{-0.7em}
\subsection{Inter-agent Edges Optimization}
\label{sec:inter_agent_opt}
\vspace{-0.5em}

\textbf{Node Embeddings} \quad
To better capture multi-round interactions and the temporal flow of information across rounds, 
we employ a many-to-one Gated Recurrent Network \citep[GRN,][]{chung2014empirical} to learn per-round node embeddings \(\{h_i^t\}_{t=2}^{T+1}\).
Since each agent produces outputs at every round, the GRN aggregates information across all \(T\) rounds into a unified representation that captures both individual behavior and relational context with neighboring agents.
Following the standard formulation of GRNs, at each round \(t \in [2, \ldots, T+1]\), we denote by \(h_i^{t-1}\) that summarizes information from previous rounds
and by \(f_i^t\) the current node feature of agent \(i\), which is constructed in the embedding space from the agent’s own output \(o_i^{t-1}\) and the outputs of its neighboring agents \(\mathcal{N}^{t-1}(v_i)\) at round \(t-1\).
Let $\{h^1_i\}_{i=1}^N = \emptyset$.
The node embedding at round \(t\) is updated as $h_i^t = \mathrm{GRN}(h_i^{t-1}, f_i^t)$.
The architecture of GRN is illustrated in Figure~\ref{fig:framework},
with pseudo-code available in Algorithm~\ref{algo:node_representation_generation}.

\textbf{Node Feature Construction} \quad
At each round \(t\), we construct the node features \(f_i^t\) for each agent \(i\) based on the agent’s own output \(o_i^{t-1}\) and the outputs of its neighboring agents \(\mathcal{N}^{t-1}(v_i)\) from the previous round.
\(f_i^t\) serves as the input to the GRN. 
To capture both individual behavior and inter-agent interactions, 
we construct it by concatenating three complementary components: self information \(\mathbf{s}_i^{t-1}\), neighborhood information \(\mathbf{n}_i^{t-1}\), and difference information \(\mathbf{d}_i^{t-1}\):
$f_i^t := \big[ \mathbf{s}_i^{t-1} \Vert \mathbf{n}_i^{t-1} \Vert \mathbf{d}_i^{t-1} \big]$.
Detailed feature definitions are deferred to Appendix~\ref{app:definition_node_features}.


\textbf{Edge Construction based on Node Embeddings} \quad
With node embeddings $h_i^t$ at hand,
we could apply the common approach to stochastically determine whether an edge exists between $i$ and $j$ based on $h_i^t$ and $h_j^t$ \citep{kipf2016variational,hoff2002latent}.
However, there are multiple constraints we need to respect:
i) the resulting graph must be a DAG;
ii) extrinsic constraints that forbid an edge;
iii) node-wise degree budgets on outgoing and incoming edges, denoted as $B_{\text{out}}$ and $B_{\text{in}}$.
Denote as $\Gcal_a$ all admissible graphs satisfying these constraints,
and denote as $\Gcal_a(A)$ when the nodes are restricted to a subset $A$.
We next propose a principled construction based on ``projection'' to $\Gcal_a$.

For a fixed interaction history $H_t$, the conditional value of a communication graph can be decomposed into first-order edge contributions plus higher-order edge-interaction terms. 
We assume that, over the sparse budgeted DAGs considered by TodyComm, 
the total higher-order interaction is bounded by $\beta_t$.
\begin{assumption}[Approximate edge-additive task utility at graph level]
\label{assump:edge-additive}
There is a constant $\beta_t \ge 0$ and a latent task-level edge-influence matrix 
$\Delta_t = \{\Delta_{ij}^t\}$
such that for every feasible graph $G$,

\vspace{-2em}
\begin{align}
\label{eq:edge-additive}    
\left|
    F_t(G)-F_t(\emptyset)
    -
    \sum\nolimits_{(i,j)\in E(G)}\Delta_{ij}^t
    \right|
    \le
    \beta_t,
\where
    F_t(G) := \EE[U\mid H_t, G_C^t=G].
\end{align}
Here $F_t(G)$ is the conditional final-task utility of using graph $G$ at round $t$, given the history $H_t$.
\end{assumption}
\begin{assumption}[Scalar-potential edge influence]
\label{assump:scalar-potential}
We follow existing literature \citep{chung2002average,chatterjee2011random,lovasz2010regularity} to assume that the edge influence can be effectively modeled via scalar \textbf{node potentials} $u_i^t\in[0,1]$.
That is, 

\vspace{-1.2em}
\begin{equation}
    \Delta_{ij}^t = u_i^t u_j^t + \xi_{ij}^t \quad \text{for all } i, j \in [1, N],
    \qquad  \text{and}  \quad
    |\xi_{ij}^t|\le \nu_t \text{ (for some } \nu_t > 0).
    \label{eq:scalar-potential}
\end{equation}
\end{assumption}
Note that $u_i^t$ should not be interpreted only as ``agent $i$ is reliable.''
Rather, it encodes a communication potential: how useful agent $i$ is as an endpoint of communication at round $t$. The graph still depends on pairwise products, edge feasibility, acyclicity, and degree budgets.

In practice, we estimate $u^t_i$ through an MLP based on the node embeddings $h^t_i$.
Denote the estimate as $\cwhat_i^t \in [0, 1]$, 
and call it empirical node potential.
We make a standard assumption on the boundedness of the associated total statistical error, model approximation error, and optimization error.
\begin{assumption}[Uniform scalar-potential estimation]
\label{assump:credit-estimation}
At round $t$, the learned potentials satisfy
\begin{equation}
    \max_i |\cwhat_i^t-u_i^t|\le \alpha_t
    \label{eq:credit-estimation}
    \quad
    \text{with probability at least }
    1-\delta_t.
\end{equation}
\end{assumption}

\vspace{-1.2em}
Given $F_t$, $\Delta_t$, or $\cwhat_i^t$,
Assumption~\ref{assump:edge-additive} motivates defining the optimal graphs by
\begin{align}
\label{eq:def_G_star}
    G_{F,t}^{\star}
    \in
    \argmax_{G\in \Gcal_a} F_t(G),
    \quad \ 
    G_{\Delta,t}^{\star}
    \in
    \argmax_{G\in \Gcal_a}
    \sum_{(i,j)\in E(G)} \!\!\! \Delta_{ij}^t,
    \quad \ 
    \widehat G_t
    \in
    \argmax_{G\in \Gcal_a}
    \sum_{(i,j)\in E(G)} \!\!\! \cwhat_i^t \cwhat_j^t.
\end{align}
Existing methods such as \citep{zhuge2024gptswarm,zhangg} solve the discrete optimization approximately by greedy search:
starting from an empty graph $\emptyset$,
enumerate all possible edges in a descending order of $\Delta^t_{ij}$ (or $\cwhat_i^t \cwhat_j^t$),
and add an edge to the graph if, and only if, 
the resulting graph remains admissible in $\Gcal_a$.
We adopt the same method, and the pseudo-code is in Algorithm~\ref{algo:graph_construction}.

While RL is not directly formulated to estimate $u^t_i$,
it could be interpreted as a bi-level optimization where the inner optimization is Equation~\eqref{eq:def_G_star},
and the outer (RL) optimization estimates the best parameters $\cwhat^t_i$ that allow the inner optimization to recover the optimal graph.

Basing edge influence $\Delta^t_{ij}$ on scalar product may yield a large value of $\cwhat^t_i \cwhat^t_j$ when $\cwhat^t_j$ is very high while $\cwhat^t_i$ is quite low.
To introduce another layer of safeguard,
we first screen all nodes to conservatively preclude those with a low value of $c^t_i$;
see Algorithm \ref{algo:screening}.
Denoting the remaining set of nodes as $\widehat A_t$,
we use the true $\Delta^t$ to measure the utility lost (i.e., regret) due to errors in node screening:
\begin{align}
\label{eq:participation-loss}
\Pi_t(\widehat A_t)
    :={}
    \max_{G\in \Gcal_a}
    \sum\nolimits_{(i,j)\in E(G)}\Delta_{ij}^t
    \quad - \ \  
    \max_{G\in \Gcal_a(\widehat A_t)}
    \sum\nolimits_{(i,j)\in E(G)}\Delta_{ij}^t.
\end{align}
Our main theoretical result bounds the regret in the ultimate utility via the screening regret:
\begin{theorem}[Scalar-potential graph projection]
\label{thm:scalar-graph-projection}
Fix a round $t$. Let $\widehat A_t$ be the candidate node set resulting from the screening, and let $\widehat G_t$ be the score-projected graph in Equation~\eqref{eq:def_G_star}. 
Under Assumptions~\ref{assump:edge-additive},~\ref{assump:scalar-potential}, and~\ref{assump:credit-estimation}, with probability at least $1-\delta_t$,
\begin{equation}
    F_t(G_{F,t}^{\star}) - F_t(\widehat G_t)
    \le
    \Pi_t(\widehat A_t)
    +
    2 (2 \alpha_t + \alpha_t^2+\nu_t) \cdot \max\nolimits_{G\in\Gcal_a}|E(G)| 
    +
    2\beta_t.
    \label{eq:main-bound}
\end{equation}
Intuitively,
graph regret
$\le$
screening regret
$+$
product-score estimation loss
$+$
edge-additivity loss.
\end{theorem}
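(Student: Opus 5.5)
The plan is to compare everything through the additive surrogate $S_t(G) := \sum_{(i,j)\in E(G)} \Delta_{ij}^t$ and its empirical counterpart $\widehat S_t(G) := \sum_{(i,j)\in E(G)} \cwhat_i^t \cwhat_j^t$, in three steps. First we pass from $F_t$ to $S_t$ using Assumption~\ref{assump:edge-additive}. Then we pay the screening regret $\Pi_t(\widehat A_t)$ to restrict to $\Gcal_a(\widehat A_t)$. Finally we pass from $S_t$ to $\widehat S_t$ using Assumptions~\ref{assump:scalar-potential} and~\ref{assump:credit-estimation}. Throughout, $\widehat G_t$ is the maximizer of $\widehat S_t$ over the screened admissible family $\Gcal_a(\widehat A_t)$, since screening precedes the projection. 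Write $M := \max_{G\in\Gcal_a}|E(G)|$; every graph in $\Gcal_a(\widehat A_t)$ has at most $M$ edges, because $\Gcal_a(\widehat A_t) \subseteq \Gcal_a$.

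\textbf{Step 1 (edge additivity).} Apply \eqref{eq:edge-additive} to both $G_{F,t}^\star$ and $\widehat G_t$. This gives
$F_t(G_{F,t}^\star) \le F_t(\emptyset) + S_t(G_{F,t}^\star) + \beta_t$ and $F_t(\widehat G_t) \ge F_t(\emptyset) + S_t(\widehat G_t) - \beta_t$. The $F_t(\emptyset)$ terms cancel, so
$F_t(G_{F,t}^\star) - F_t(\widehat G_t) \le S_t(G_{F,t}^\star) - S_t(\widehat G_t) + 2\beta_t$.
Since $G_{F,t}^\star \in \Gcal_a$, we have $S_t(G_{F,t}^\star) \le \max_{G\in\Gcal_a} S_t(G)$. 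By the definition \eqref{eq:participation-loss}, this maximum equals $\Pi_t(\widehat A_t) + S_t(G_A^\star)$, where $G_A^\star \in \argmax_{G\in\Gcal_a(\widehat A_t)} S_t(G)$. It then remains to show $S_t(G_A^\star) - S_t(\widehat G_t) \le 2(2\alpha_t+\alpha_t^2+\nu_t)M$.

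\textbf{Step 2 (uniform edge-score error).} Work on the event of Assumption~\ref{assump:credit-estimation}, which has probability at least $1-\delta_t$. Set $e_i := \cwhat_i^t - u_i^t$, so $|e_i| \le \alpha_t$. Then
$\cwhat_i^t\cwhat_j^t - u_i^t u_j^t = u_i^t e_j + u_j^t e_i + e_i e_j$.
Since $u_i^t, u_j^t \in [0,1]$, the absolute value of this is at most $2\alpha_t + \alpha_t^2$. Combining with \eqref{eq:scalar-potential} gives the per-edge bound
$|\Delta_{ij}^t - \cwhat_i^t\cwhat_j^t| \le \varepsilon_t := 2\alpha_t + \alpha_t^2 + \nu_t$.
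Summing over at most $M$ edges yields $|S_t(G) - \widehat S_t(G)| \le \varepsilon_t M$ for every $G \in \Gcal_a(\widehat A_t)$.

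\textbf{Step 3 (sandwich).} By Step~2 and the optimality of $\widehat G_t$ for $\widehat S_t$ over $\Gcal_a(\widehat A_t)$,
$S_t(G_A^\star) \le \widehat S_t(G_A^\star) + \varepsilon_t M \le \widehat S_t(\widehat G_t) + \varepsilon_t M \le S_t(\widehat G_t) + 2\varepsilon_t M$.
Chaining Steps~1--3 gives \eqref{eq:main-bound} on the same event, hence with probability at least $1-\delta_t$.

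The computations are routine. The main obstacle is bookkeeping: $\widehat G_t$ and $G_A^\star$ must be optimized over the same family $\Gcal_a(\widehat A_t)$, so that the optimality comparison in Step~3 is valid and $\Pi_t$ accounts exactly for the gap back to $\Gcal_a$. If \eqref{eq:def_G_star} is instead read as an unrestricted maximization over $\Gcal_a$, then take $\widehat A_t$ to be all nodes, so that $\Pi_t = 0$, and the same chain goes through. The bound is also stated for the exact maximizer, not for the greedy output of Algorithm~\ref{algo:graph_construction}. Any greedy suboptimality would add an extra additive term to Step~3, and I would note this explicitly rather than absorb it.
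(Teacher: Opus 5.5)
Your proposal is correct and is essentially the paper's proof, with the steps in a different order: the same per-edge bound $|\cwhat_i^t\cwhat_j^t-\Delta_{ij}^t|\le 2\alpha_t+\alpha_t^2+\nu_t$ (the paper's Lemma~\ref{lem:product-perturbation}), the same sandwich against the additive optimum over the common family $\Gcal_a(\widehat A_t)$ yielding $2(2\alpha_t+\alpha_t^2+\nu_t)\max_{G\in\Gcal_a}|E(G)|$, the definition of $\Pi_t(\widehat A_t)$, and the same two-sided use of Assumption~\ref{assump:edge-additive} for the $2\beta_t$ term. The paper's proof also treats $\widehat G_t$ as the maximizer over $\Gcal_a(\widehat A_t)$ rather than over $\Gcal_a$ as written in \eqref{eq:def_G_star}, so your bookkeeping remark is apt; under the unrestricted reading you need not re-choose $\widehat A_t$, since the bound holds for the actual screened set because $\Pi_t(\widehat A_t)\ge 0$.
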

The proof is relegated to Appendix~\ref{sec:proof}.
Interestingly, false positive and false negative in screening impact the final graph in very different ways.
We defer their theoretical analysis to Corollary~\ref{cor:false-positive} and \ref{cor:false-negative}.

To summarize,
the communication graph is learned as a constrained topology by parameterizing pairwise edge influences using scalar potentials associated with the endpoint nodes, 
subject to edge masks, a DAG constraint, and node-wise in-degree and out-degree budgets.
\vspace{-0.7em}
\subsection{Decision Edges Optimization}
\vspace{-0.8em}
After $T$ rounds of communication,
we compute the final node potentials by performing one additional GRN update that aggregates all \(T\) interaction rounds, as described in Algorithm~\ref{algo:node_representation_generation}.
Note that agents may still transition to adversarial behavior at round \(T\).
The final normalized decision weights \(\{w_i\}_{i=1}^N\) are obtained from the node potentials \(\{c_i^{T+1}\}_{i=1}^N\), 
preceded with a node screening.

We finally employ REINFORCE \citep{williams1992simple} to optimize the policy for multi-agent, multi-round collaboration.
The parameters include those in the GRN, 
and the MLP that maps embedding $h^t_i$ to potential $\cwhat^t_i$.

\vspace{-0.8em}
\section{Experiment}
\vspace{-0.8em}
\label{sec:experiment}
\textbf{Dataset} \quad
We evaluate TodyComm on five benchmarks over multiple reasoning domains:
(i) \textit{Commonsense Reasoning}: MMLU \citep{hendrycksmeasuring}, ARC-Challenge \citep{clark2018think};
(ii) \textit{Mathematical Reasoning}: GSM8K \citep{cobbe2021training};
(iii) \textit{Scientific Reasoning}: OpenBookQA \citep{mihaylov2018can}, MedQA \citep{yang2025llm}.

\textbf{Dynamically Adversarial Setting} \quad
We consider a dynamically adversarial setting in which agents may transition to adversarial behavior at unknown rounds without announcing. Unless stated otherwise, 6 agents collaborate over 4 rounds, with adversarial transitions occurring at round 3 or 4. Attack rates of $<50\%$, $=50\%$, and $>50\%$ correspond to 2, 3, and 4 adversarial agents, respectively. Deviations from these settings will be explicitly noted.

\textbf{Impact of Adversarial Agents} \quad
Adversarial agents produce incorrect answers with plausible but misleading analyses, following the prompt in Appendix~\ref{app:adversarial_prompts}. 
This is a \textit{targeted} attack, as an agent deliberately prompts its LLM to output a \textit{specific} wrong answer.
Before training, we evaluated the impact of adversarial communication,
where a reliable agent receives message from a single adversarial agent. 
As shown in Table~\ref{tab:attack_single}, even a single adversarial sender causes substantial accuracy drops:
47.42\% on MMLU, 35.30\% on ARC-Challenge, 32.91\% on GSM8K, 61.11\% on OpenBookQA, and 22.00\% on MedQA. 
Table~\ref{tab:adv_scaling_among_six} further analyzes adversarial scaling by varying the number of adversarial agents among six senders and measuring the impact on the receiving agent.

\begin{table*}[ht!]
\centering
\caption{Performance comparison with baselines across five benchmarks. 
\(\boldsymbol{\downarrow}\) next to a dataset name indicates performance degradation of a reliable agent after receiving a message from an adversarial agent (see also Table~\ref{tab:attack_single}). 
All results are reported in percentages, 
with the best results shown in bold.
}
\label{tab:main_result}
\vspace{-6pt}
\resizebox{\textwidth}{!}{ 
\begin{tblr}{
  colspec = {l | l *{15}{c}}, 
  cell{1}{1,2} = {r=2}{c}, 
  cell{1}{3,6,9,12,15} = {c=3}{c}, 
  cell{3,5,7,9,11}{1} = {r=2}{l}, 
  cell{3,5,7,9}{1} = {r=2}{l},   
  cell{11}{1} = {r=2}{l},        
  row{3,5,7,9,11} = {abovesep=2pt, belowsep=1pt}, 
  row{4,6,8,10,12} = {abovesep=1pt, belowsep=2pt}, 
  hline{1} = {0.08em}, 
  hline{3} = {0.05em}, 
  hline{13} = {0.08em}, 
  hline{2} = {3-17}{0.03em},
  hline{5,7,9,11} = {-}{0.02em}, 
  vline{3,6,9,12,15} = {1-12}{0.05em},
  row{1-2} = {font=\bfseries},
  colsep = 3pt,
}
Methods & Metrics & MMLU ($\downarrow$ 47.2\%) & & & ARC-C ($\downarrow$ 35.30\%) & & & GSM8K ($\downarrow$ 32.91\%) & & & OpenBookQA ($\downarrow$ 61.11\%)& & & MedQA ($\downarrow$ 22.00\%) & & \\
 & & $<50\%$ & $=50\%$ & $>50\%$ & $<50\%$ & $=50\%$ & $>50\%$ & $<50\%$ & $=50\%$ & $>50\%$ & $<50\%$ & $=50\%$ & $>50\%$ & $<50\%$ & $=50\%$ & $>50\%$ \\

\SetCell[r=2]{l} {Random \\ Graph} & Acc & 
{59.92 \\ \hfill \scriptsize $\pm 3.29$} & {45.97 \\ \hfill \scriptsize $\pm 2.72$} & {42.05 \\ \hfill \scriptsize $\pm 3.72$} & 
{79.15 \\ \hfill \scriptsize $\pm 1.51$} & {64.10 \\ \hfill \scriptsize $\pm 3.80$} & {64.88 \\ \hfill \scriptsize $\pm 4.93$} & 
{83.01 \\ \hfill \scriptsize $\pm 0.92$} & {76.33 \\ \hfill \scriptsize $\pm 2.16$} & {68.26 \\ \hfill \scriptsize $\pm 0.69$} & 
{64.33 \\ \hfill \scriptsize $\pm 4.25$} & {42.17 \\ \hfill \scriptsize $\pm 0.58$} & {36.33 \\ \hfill \scriptsize $\pm 3.01$} & 
{58.33 \\ \hfill \scriptsize $\pm 2.02$} & {48.33 \\ \hfill \scriptsize $\pm 1.53$} & {42.00 \\ \hfill \scriptsize $\pm 2.65$} \\
 & Token & 620 & 619 & 610 & 642 & 614 & 628 & 515 & 569 & 553 & 490 & 553 & 550 & 764 & 822 & 784 \\

\hline
\SetCell[r=2]{l} {Complete \\ Graph} & Acc & 
{64.94 \\ \hfill \scriptsize $\pm 2.01$} & {46.63 \\ \hfill \scriptsize $\pm 4.82$} & {41.39 \\ \hfill \scriptsize $\pm 3.94$} & 
{81.17 \\ \hfill \scriptsize $\pm 0.77$} & {67.67 \\ \hfill \scriptsize $\pm 2.42$} & {64.33 \\ \hfill \scriptsize $\pm 2.05$} & 
{81.00 \\ \hfill \scriptsize $\pm 0.87$} & {75.46 \\ \hfill \scriptsize $\pm 0.81$} & {66.34 \\ \hfill \scriptsize $\pm 2.10$} & 
{67.83 \\ \hfill \scriptsize $\pm 2.52$} & {42.50 \\ \hfill \scriptsize $\pm 2.78$} & {37.33 \\ \hfill \scriptsize $\pm 1.26$} & 
{59.67 \\ \hfill \scriptsize $\pm 2.57$} & {43.67 \\ \hfill \scriptsize $\pm 3.69$} & {41.75 \\ \hfill \scriptsize $\pm 2.47$} \\
 & Token & 741 & 768 & 749 & 793 & 785 & 769 & 663 & 696 & 682 & 621 & 655 & 643 & 976 & 961 & 940 \\

\hline
\SetCell[r=2]{l} G-Designer & Acc & 
{64.92 \\ \hfill \scriptsize $\pm 1.72$} & {52.07 \\ \hfill \scriptsize $\pm 1.72$} & {45.97 \\ \hfill \scriptsize $\pm 1.87$} & 
{83.05 \\ \hfill \scriptsize $\pm 0.16$} & {72.91 \\ \hfill \scriptsize $\pm 0.98$} & {67.22 \\ \hfill \scriptsize $\pm 0.47$} & 
{87.23 \\ \hfill \scriptsize $\pm 0.72$} & {81.12 \\ \hfill \scriptsize $\pm 2.26$} & {73.00 \\ \hfill \scriptsize $\pm 4.77$} & 
{73.67 \\ \hfill \scriptsize $\pm 2.09$} & {40.83 \\ \hfill \scriptsize $\pm 2.62$} & {41.00 \\ \hfill \scriptsize $\pm 0.82$} & 
{\textbf{67.00} \\ \hfill \scriptsize $\pm 1.47$} & {48.33 \\ \hfill \scriptsize $\pm 2.09$} & {43.17 \\ \hfill \scriptsize $\pm 0.94$} \\
 & Token & 652 & 684 & 678 & 585 & 617 & 610 & 465 & 509 & 501 & 534 & 566 & 563 & 803 & 841 & 829 \\

\hline
\SetCell[r=2]{l} AgentPrune & Acc & 
{69.28 \\ \hfill \scriptsize $\pm 0.53$} & {55.34 \\ \hfill \scriptsize $\pm 3.63$} & {53.38 \\ \hfill \scriptsize $\pm 2.46$} & 
{86.73 \\ \hfill \scriptsize $\pm 0.79$} & {74.02 \\ \hfill \scriptsize $\pm 0.63$} & {69.79 \\ \hfill \scriptsize $\pm 1.76$} & 
{\textbf{88.73} \\ \hfill \scriptsize $\pm 0.16$} & {80.59 \\ \hfill \scriptsize $\pm 0.82$} & {70.91 \\ \hfill \scriptsize $\pm 0.53$} & 
{73.83 \\ \hfill \scriptsize $\pm 0.24$} & {44.17 \\ \hfill \scriptsize $\pm 1.03$} & {39.17 \\ \hfill \scriptsize $\pm 0.24$} & 
{66.65 \\ \hfill \scriptsize $\pm 0.41$} & {48.50 \\ \hfill \scriptsize $\pm 1.47$} & {45.67 \\ \hfill \scriptsize $\pm 0.85$} \\
 & Token & 361 & 408 & 417 & 301 & 353 & 369 & 345 & 429 & 444 & 388 & 451 & 471 & 520 & 579 & 590 \\

\hline
\SetCell[r=2]{l} {TodyComm \\ (Ours)}  & Acc & 
{\textbf{72.11} \\ \hfill \scriptsize $\pm 0.82$} & {\textbf{68.85} \\ \hfill \scriptsize $\pm 0.31$} & {\textbf{64.71} \\ \hfill \scriptsize $\pm 0.00$} & 
{\textbf{88.52} \\ \hfill \scriptsize $\pm 0.16$} & {\textbf{85.95} \\ \hfill \scriptsize $\pm 1.52$} & {\textbf{81.05} \\ \hfill \scriptsize $\pm 0.88$} & 
{\textbf{88.68} \\ \hfill \scriptsize $\pm 0.45$} & {\textbf{88.10} \\ \hfill \scriptsize $\pm 0.89$} & {\textbf{83.19} \\ \hfill \scriptsize $\pm 0.69$} & 
{\textbf{84.83} \\ \hfill \scriptsize $\pm 1.31$} & {\textbf{84.17} \\ \hfill \scriptsize $\pm 0.24$} & {\textbf{80.50} \\ \hfill \scriptsize $\pm 0.41$} & 
{61.00 \\ \hfill \scriptsize $\pm 0.41$} & {\textbf{60.17} \\ \hfill \scriptsize $\pm 0.47$} & {\textbf{57.50} \\ \hfill \scriptsize $\pm 0.71$} \\
 & Token & 397 & 405 & 414 & 350 & 361 & 366 & 311 & 332 & 346 & 428 & 448 & 459 & 560 & 565 & 566 \\
\end{tblr}
}
\vspace{-0.7em}
\end{table*}


\textbf{Baselines} \quad
We consider four baselines in communication topology design: Random Graph \citep{qianscaling}, Complete Graph \citep{qianscaling}, G-Designer \citep{zhangg} and AgentPrune \citep{zhangcut}.

\textbf{Implementation Details} \quad
We report overall task accuracy (Acc), the average token usage per agent per round for each query (Token).
All experiments were conducted with GPT-4.1-nano under the default temperature setting, unless otherwise specified.
Implementation details, 
including dataset statistics, training/test splits,
and hyperparameter settings, are provided in Appendix~\ref{sec:implementation_details}.

\begin{figure*}[t!]
    \centering
    \begin{subfigure}[t]{0.24\linewidth}
        \centering
        \includegraphics[width=\linewidth]{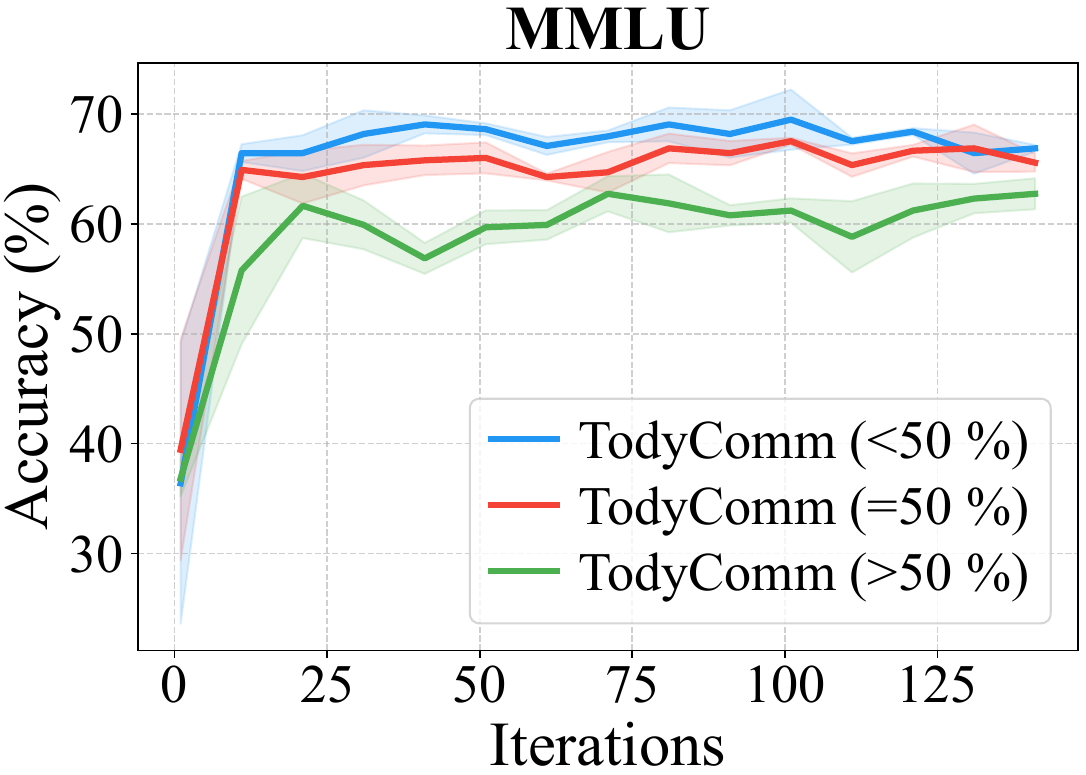}
    \end{subfigure}
    \hfill
    \begin{subfigure}[t]{0.24\linewidth}
        \centering
        \includegraphics[width=\linewidth]{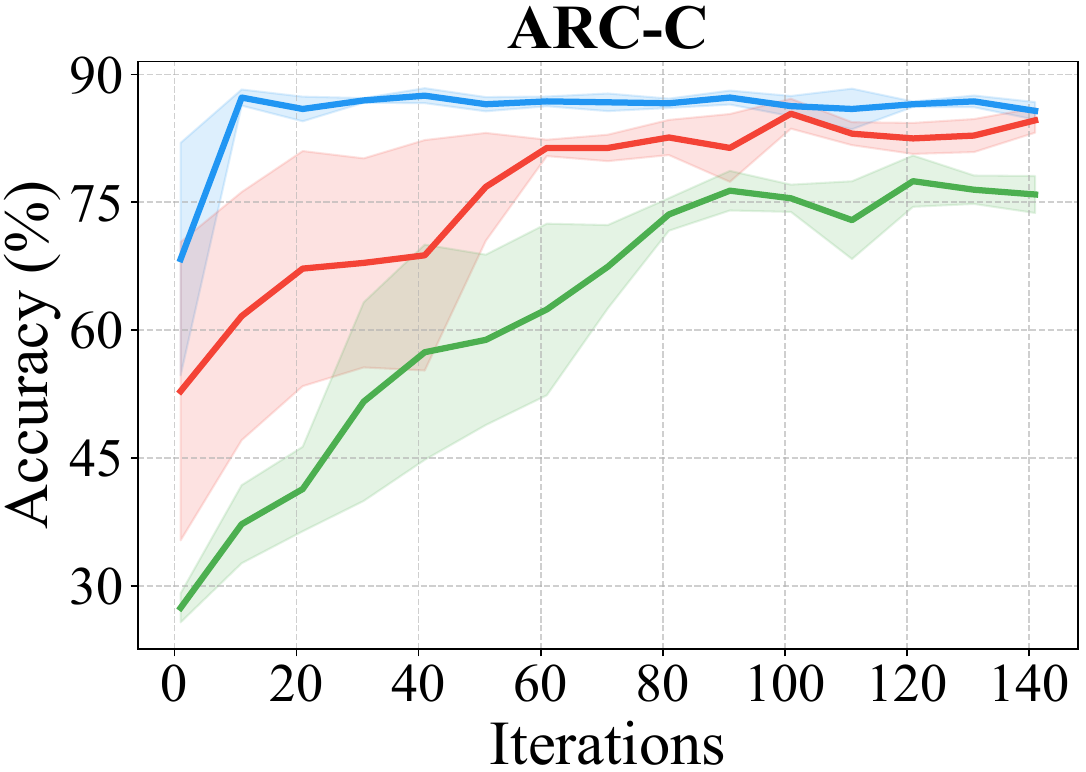}
    \end{subfigure}
    \hfill
    \begin{subfigure}[t]{0.24\linewidth}
        \centering
        \includegraphics[width=\linewidth]{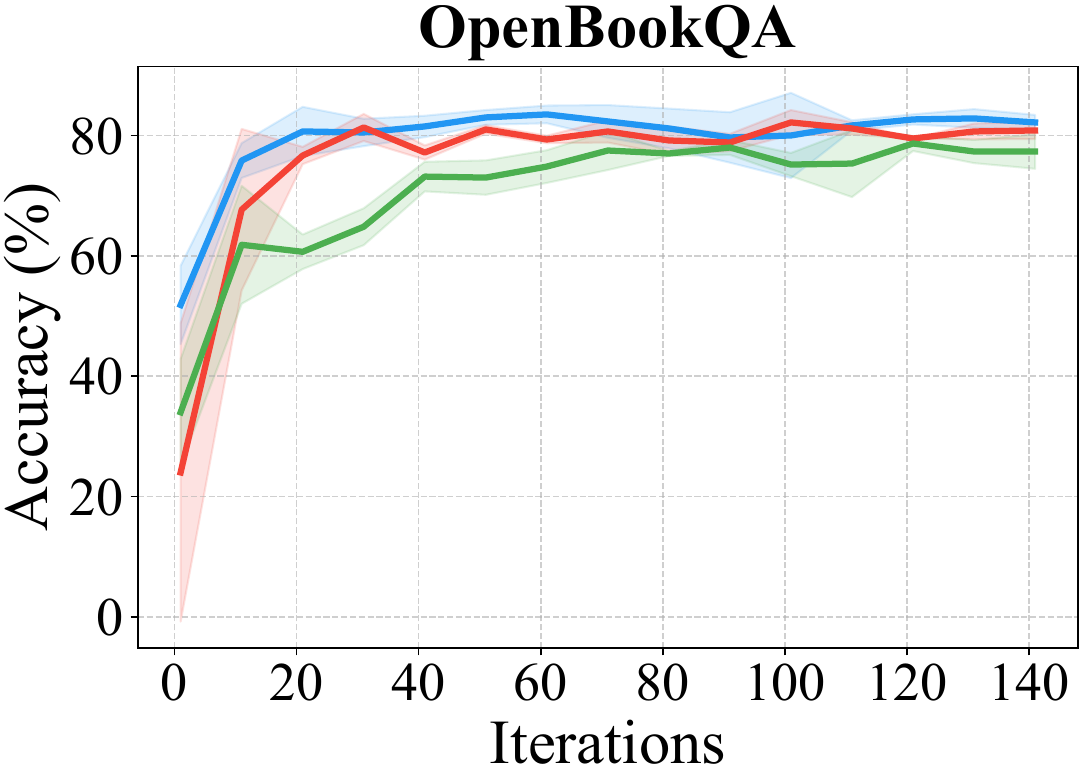}
    \end{subfigure}
    \hfill
    \begin{subfigure}[t]{0.24\linewidth}
        \centering
        \includegraphics[width=\linewidth]{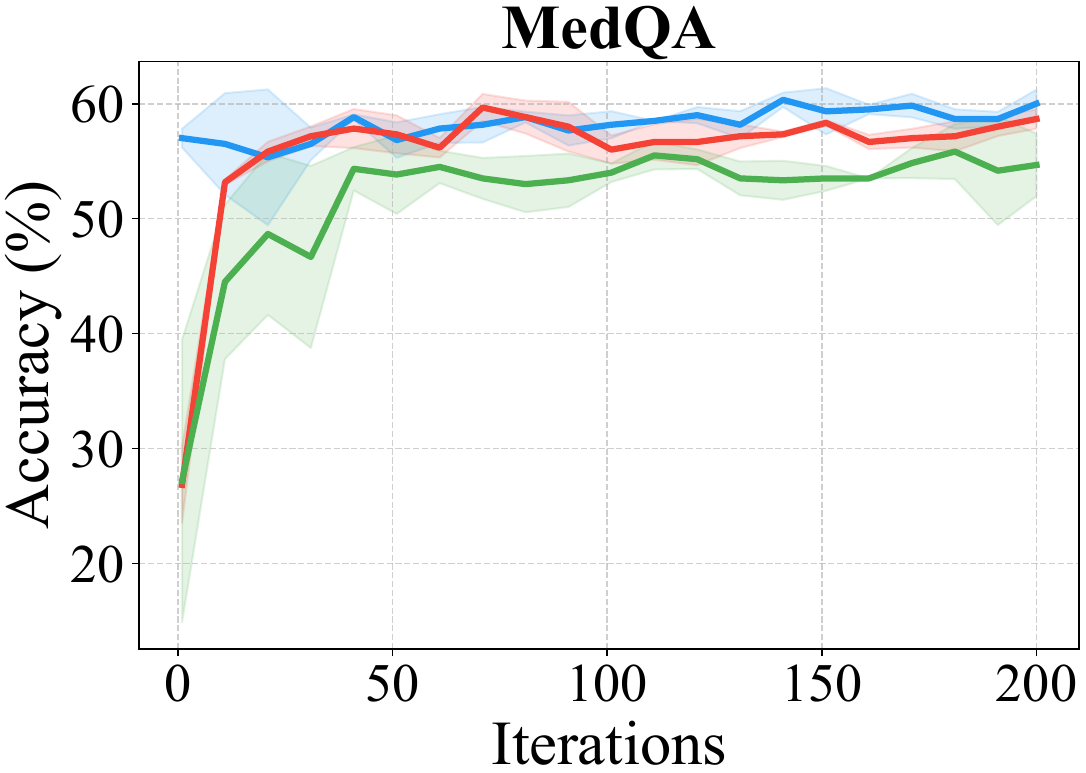}
    \end{subfigure}
    \caption{Training curves across four benchmarks: test  accuracy v.s. training iterations.}
    \label{fig:train_acc_adacc}
    \vspace{-1em}
\end{figure*}

\vspace{-1.2em}
\subsection{Main Result}
\vspace{-1em}

\textbf{Task accuracy} \quad
The results in Table~\ref{tab:main_result} show that TodyComm significantly outperforms the baseline methods in accuracy when the attack rate is 50\% or higher.
When the attack rate is below 50\%,
its performance as compared with the baselines is either significantly better (MMLU, ARC-Challenge, and OpenBookQA) or comparable  (GSM8K). 
Moreover, TodyComm exhibits low variance across three random seeds in all settings, achieving zero variance on MMLU under $>50\%$ attack and a maximum variance of only 1.52\%, indicating stable and reliable learning.

In contrast, Random and Complete Graphs rely on static topologies, 
while G-Designer constructs spatial graphs from predefined role information.
AgentPrune updates structures only at fixed training intervals. 
Consequently, they cannot adapt to agent's varying behavior that alternates between adversarial and normal across inference rounds and they connect all agents to the decision node. 

Figure~\ref{fig:train_acc_adacc} plots test accuracy against training iterations, showing that TodyComm consistently converges across benchmarks. 
Figure~\ref{fig:overfitting} demonstrates training vs. test accuracy on MMLU, indicating no overfitting or data leakage.

\textbf{Token Usage in Inference} \quad
In terms of token usage, TodyComm achieves comparable efficiency to AgentPrune, 
both of which are more token-efficient than the other baselines. AgentPrune starts from a complete graph and prunes edges at fixed training intervals. 
To optimize its performance, we set the pruning interval to one during training. 
TodyComm, in contrast, tends to avoid connectivity with adversarial agents, 
leading to similar token usage. 
Averaging accuracy and token consumption across all the five benchmarks,
Table~\ref{tab:average_results_main_result} further confirms that TodyComm achieves the best overall performance in both task accuracy and token efficiency.

\textbf{Training Efficiency and Token usage} \quad
Table~\ref{tab:training_efficiency} records accuracy and token usage at both the performance parity point and the peak point, where the performance parity point is defined as the point at which TodyComm rapidly converges to near-peak accuracy, after which further improvement becomes slow and marginal. 
Across all three attack rates and four benchmarks, 
the performance of TodyComm at its parity point already surpasses the peak accuracy of both baselines while consuming fewer training tokens. 
Figure~\ref{fig:training_efficiency} further confirms this advantage, showing that TodyComm achieves higher accuracy gain per million training tokens compared to GDesigner and AgentPrune.

\vspace{-0.8em}
\subsection{Node-wise In-degree and Out-degree Budgets}
\begin{table*}[t!]
\centering
\caption{Performance comparison under node-wise in-degree and out-degree budget constraints across four benchmarks.
\(\boldsymbol{\downarrow}\) next to a dataset name indicates performance degradation under a single-agent attack (see also Table~\ref{tab:attack_single}).
$\textcolor{darkgreen}{\boldsymbol{\downarrow}}$ indicates an increase in task accuracy or a reduction in token usage.
$\textcolor{darkred}{\boldsymbol{\uparrow}}$ indicates a decrease in task accuracy or an increase in token usage. 
$\textcolor{black}{\boldsymbol{\rightarrow}}$ denotes no change. 
All comparisons in $\textcolor{darkgreen}{\boldsymbol{\downarrow}}$, $\textcolor{darkred}{\boldsymbol{\uparrow}}$, and $\textcolor{black}{\boldsymbol{\rightarrow}}$ are made to the no-budget setting. 
All results are reported in percentages.
}
\vspace{-6pt}
\label{tab:budget_result}
\resizebox{\textwidth}{!}{%
\begin{tblr}{
  colspec = {l | l *{12}{c}}, 
  cell{1}{1,2} = {r=2}{c},        
  cell{1}{3,6,9,12} = {c=3}{c},   
  cell{3,5}{1} = {r=2}{l},
  row{3,5} = {abovesep=3pt, belowsep=1pt}, 
  row{4,6} = {abovesep=1pt, belowsep=3pt}, 
  hline{1,7} = {0.08em},            
  hline{3} = {0.05em},              
  hline{2} = {3-14}{0.03em},        
  hline{5} = {-}{0.02em},           
  vline{3,6,9,12} = {1-7}{0.05em},  
  row{1-2} = {font=\bfseries},
  colsep = 4pt,
}
Methods & Metrics & MMLU ($\downarrow$ 47.2\%) & & & ARC-C ($\downarrow$ 35.30\%)  & & & OpenBookQA ($\downarrow$ 61.11\%)& & & MedQA ($\downarrow$ 22.00\%) & & \\
   & & $<50\%$ & $=50\%$ & $>50\%$ & $<50\%$ & $=50\%$ & $>50\%$ & $<50\%$ & $=50\%$ & $>50\%$ & $<50\%$ & $=50\%$ & $>50\%$ \\

  \SetCell[r=2]{l} budget = 1
  & Acc         & {71.90 \\ \hfill \scriptsize \textcolor{darkred}{$\boldsymbol{\downarrow}$0.21}} & {69.28 \\ \hfill \scriptsize \textcolor{darkgreen}{$\boldsymbol{\uparrow}$0.43}} & {70.59 \\ \hfill \scriptsize \textcolor{darkgreen}{$\boldsymbol{\uparrow}$5.88}} & {88.96 \\ \hfill \scriptsize \textcolor{darkgreen}{$\boldsymbol{\uparrow}$0.44}} & {86.62 \\ \hfill \scriptsize \textcolor{darkgreen}{$\boldsymbol{\uparrow}$0.67}} & {83.28 \\ \hfill \scriptsize \textcolor{darkgreen}{$\boldsymbol{\uparrow}$2.23}} & {85.50 \\ \hfill \scriptsize \textcolor{darkgreen}{$\boldsymbol{\uparrow}$0.67}} & {83.50 \\ \hfill \scriptsize \textcolor{darkred}{$\boldsymbol{\downarrow}$0.67}} & {83.50 \\ \hfill \scriptsize \textcolor{darkgreen}{$\boldsymbol{\uparrow}$3.00}} & {61.00 \\ \hfill \scriptsize \textcolor{black}{$\boldsymbol{\rightarrow}$0.00}} & {60.50 \\ \hfill \scriptsize \textcolor{darkgreen}{$\boldsymbol{\uparrow}$0.33} }  & {56.50 \\ \hfill \scriptsize \textcolor{darkred}{$\boldsymbol{\downarrow}$1.00}} \\
  & Token       & {378 \\ \hfill \scriptsize \textcolor{darkgreen}{$\boldsymbol{\downarrow}$4.79} }   & {390 \\ \hfill \scriptsize \textcolor{darkgreen}{$\boldsymbol{\downarrow}$3.70}}   & {397 \\ \hfill \scriptsize \textcolor{darkgreen}{$\boldsymbol{\downarrow}$4.11}}   & {338 \\ \hfill \scriptsize \textcolor{darkgreen}{$\boldsymbol{\downarrow}$3.43}}   & {346 \\ \hfill \scriptsize \textcolor{darkgreen}{$\boldsymbol{\downarrow}$4.16}}   & {358 \\ \hfill \scriptsize \textcolor{darkgreen}{$\boldsymbol{\downarrow}$2.19}}   & {316 \\ \hfill \scriptsize \textcolor{darkgreen}{$\boldsymbol{\downarrow}$26.17}}   & {326 \\ \hfill \scriptsize \textcolor{darkgreen}{$\boldsymbol{\downarrow}$27.23}}   & {337 \\ \hfill \scriptsize \textcolor{darkgreen}{$\boldsymbol{\downarrow}$26.58}}   & {539 \\ \hfill \scriptsize \textcolor{darkgreen}{$\boldsymbol{\downarrow}$3.75}}   & {546 \\ \hfill \scriptsize \textcolor{darkgreen}{$\boldsymbol{\downarrow}$3.36}}   & {551 \\ \hfill \scriptsize \textcolor{darkgreen}{$\boldsymbol{\downarrow}$2.65}}   \\

  \hline

  \SetCell[r=2]{l} budget = 2
  & Acc         & {69.93 \\ \hfill \scriptsize \textcolor{darkred}{$\boldsymbol{\downarrow}$2.18} } & {67.32 \\ \hfill \scriptsize \textcolor{darkred}{$\boldsymbol{\downarrow}$1.53}} & {65.36 \\ \hfill \scriptsize \textcolor{darkgreen}{$\boldsymbol{\uparrow}$0.65}} & {87.96 \\ \hfill \scriptsize \textcolor{darkred}{$\boldsymbol{\downarrow}$0.56}} & {87.63 \\ \hfill \scriptsize \textcolor{darkgreen}{$\boldsymbol{\uparrow}$1.68}} & {84.95 \\ \hfill \scriptsize \textcolor{darkgreen}{$\boldsymbol{\uparrow}$3.90}} & {85.50 \\ \hfill \scriptsize \textcolor{darkgreen}{$\boldsymbol{\uparrow}$0.67}} & {83.50 \\ \hfill \scriptsize \textcolor{darkred}{$\boldsymbol{\downarrow}$0.67} } & {81.00 \\ \hfill \scriptsize \textcolor{darkgreen}{$\boldsymbol{\uparrow}$0.50}} & {62.00 \\ \hfill \scriptsize \textcolor{darkgreen}{$\boldsymbol{\uparrow}$1.00}} & {59.00 \\ \hfill \scriptsize \textcolor{darkred}{$\boldsymbol{\downarrow}$1.17} } & {58.00 \\ \hfill \scriptsize \textcolor{darkgreen}{$\boldsymbol{\uparrow}$0.50}} \\
  & Token       & {393 \\ \hfill \scriptsize \textcolor{darkgreen}{$\boldsymbol{\downarrow}$1.01}}   & {402 \\ \hfill \scriptsize \textcolor{darkgreen}{$\boldsymbol{\downarrow}$0.74}}   & {408\\ \hfill \scriptsize \textcolor{darkgreen}{$\boldsymbol{\downarrow}$1.45}}   & {349 \\ \hfill \scriptsize \textcolor{darkgreen}{$\boldsymbol{\downarrow}$0.29}}   & {349 \\ \hfill \scriptsize \textcolor{darkgreen}{$\boldsymbol{\downarrow}$3.32}}   & {372 \\ \hfill \scriptsize \textcolor{darkred}{$\boldsymbol{\uparrow}$1.64}}   & {328 \\ \hfill \scriptsize \textcolor{darkgreen}{$\boldsymbol{\downarrow}$23.36}}   & {338 \\ \hfill \scriptsize \textcolor{darkgreen}{$\boldsymbol{\downarrow}$24.55}}   & {348\\ \hfill \scriptsize \textcolor{darkgreen}{$\boldsymbol{\downarrow}$24.18}}   & {550 \\ \hfill \scriptsize \textcolor{darkgreen}{$\boldsymbol{\downarrow}$1.79}}   & {554 \\ \hfill \scriptsize \textcolor{darkgreen}{$\boldsymbol{\downarrow}$1.95}}   & {564 \\ \hfill \scriptsize \textcolor{darkgreen}{$\boldsymbol{\downarrow}$0.35}} \\
\end{tblr}
}
\end{table*}
\vspace{-0.8em}

Communication cost often necessitates degree budgets for each agent.
Previously, 
when $k \in \{2,3,4\}$ out of 6 agents become adversarial, the maximum number of reliable outgoing edges per agent is $5-k$.
We therefore evaluate budgets of 1 and 2 for each agent to receive or send messages.
As shown in Table~\ref{tab:budget_result}, moderate budgets preserve or even improve task accuracy while consistently reducing token usage.
In particular, when four agents are adversarial, 
a budget of one substantially improves accuracy upon the no-budget setting, indicating that restricting communication helps suppress unreliable messages, while a budget of 2 provides smaller but consistent gains.
The training curves for budget control are presented in Figure~\ref{fig:budget_training_curve}.

Notably, token usage is significantly reduced under budget constraints, with especially large savings on OpenBookQA, whose higher sensitivity to adversarial messages (a 61.11\% accuracy drop under a single-agent attack) provides a stronger reinforcement signal and enables earlier suppression of unreliable communication. 

\vspace{-0.9em}
\subsection{Scalability in conjunction with degree budget}
\vspace{-0.8em}
\begin{table*}[t!]
\centering
\caption{Scalability performance on MMLU measured by accuracy. \#Agents means the number of agents. \#RAgents means the number of reliable agents.}
\vspace{-6pt}
\label{tab:scalibility_result}
\footnotesize
\setlength{\tabcolsep}{2.5pt} 
\renewcommand{\arraystretch}{1}  
\begin{tabular}{l|c|c|c|c|c|c|c|c|c}
\toprule
\#Agents  & 6 agents & 6 agents & 10 agents & 10 agents & 15 agents & 15 agents & 15 agents & 20 agents & 20 agents \\
\midrule
Attack rate & 0.6      & 0.6      & 0.6       & 0.6       & 0.7       & 0.7       & 0.6       & 0.6       & 0.8       \\
\#RAgents      & 2        & 2        & 4         & 4         & 4        & 4        & 6         & 8        & 4        \\
Budget      & 2        & 1        & 4         & 3         & 4         & 3         & 5         & 5         & 3         \\
Acc         & 65.36    & 70.59    & 62.75     & 67.32     & 62.75     & 61.44     & 67.97     & 68.63     & 54.90     \\
Token       & 408      & 397      & 438       & 429       & 452       & 440       & 460       & 469       & 443       \\
\bottomrule
\end{tabular}
\end{table*}
We tested the flexibility of our method with $N \in \{ 6, 10, 15, 20\}$ agents,
using varied values of node degree budget ($B_{\text{out}}=B_{\text{in}}$).
As shown in Table~\ref{tab:scalibility_result}, 
at an attack rate of 0.6 and under a reasonable budget, 
task accuracy remains relatively stable as the number of agents increases. 
Inference token usage grows only slightly from 6 to 20 agents due to explicit budget control.
The training curves for scalability are presented in Figure~\ref{fig:scalability_training_curve}.

When the number of reliable agents is fixed and more adversarial agents are added, performance degrades but remains at a reasonable level even under highly imbalanced settings (e.g., \#reliable : \#adversarial $= 1:4$). 

We also tested the impact of budget,
ranging over \#RAgents (\# reliable agent), \#RAgents$-1$, and values smaller than \#RAgents.
While the performance exhibits minor fluctuations across these choices, smaller budgets generally offer a favorable trade-off between task performance and token efficiency. 
Thus, we recommend using a modest budget to balance scalability and performance.

\vspace{-0.8em}
\subsection{Generalization across attack mechanism, attack outbreak mode, task domain, \& \# Agent}
\vspace{-0.8em}

\begin{table*}[t]
\centering
\caption{Attack Mechanism Generalization with Qwen3-8B.}
\vspace{-6pt}
\label{tab:attack_mechanism_generalization}
\resizebox{\textwidth}{!}{
\renewcommand{\arraystretch}{1.2}
\begin{tabular}{c | c c c | c c c | c c c | c c c}
\toprule
\multirow{2}{*}{\textbf{Methods}}
& \multicolumn{3}{c|}{\textbf{MMLU}}
& \multicolumn{3}{c|}{\textbf{ARC-C}}
& \multicolumn{3}{c|}{\textbf{OBQA}}
& \multicolumn{3}{c}{\textbf{MedQA }} \\
\cmidrule(lr){2-4} \cmidrule(lr){5-7} \cmidrule(lr){8-10} \cmidrule(lr){11-13}
& \multicolumn{1}{c}{$<50\%$} & \multicolumn{1}{c}{$=50\%$} & \multicolumn{1}{c|}{$>50\%$}
& \multicolumn{1}{c}{$<50\%$} & \multicolumn{1}{c}{$=50\%$} & \multicolumn{1}{c|}{$>50\%$}
& \multicolumn{1}{c}{$<50\%$} & \multicolumn{1}{c}{$=50\%$} & \multicolumn{1}{c|}{$>50\%$}
& \multicolumn{1}{c}{$<50\%$} & \multicolumn{1}{c}{$=50\%$} & \multicolumn{1}{c}{$>50\%$} \\
\hline
\rowcolor{gray!10}
\multicolumn{13}{c}{\rule{0pt}{0.9em}\textbf{\textit{Targeted Attack}}\rule[-0.2em]{0pt}{0pt}} \\
\hline
G-Designer
& 51.63 & 37.26 & 22.53
& 56.86 & 46.49 & 39.47
& 52.50 & 26.00 & 21.00
& 43.00 & 25.00 & 17.50 \\
AgentPrune
& 67.97 & 39.87 & 24.18
& 87.63 & 49.50 & 38.13
& 71.50 & 34.50 & 19.00
& 63.50 & 35.00 & 20.50 \\
TodyComm (Ours)
& \textbf{69.28} & \textbf{67.32} & \textbf{60.13}
& \textbf{88.29} & \textbf{81.94} & \textbf{79.56}
& \textbf{84.00} & \textbf{80.50} & \textbf{70.50}
& \textbf{64.50} & \textbf{60.00} & \textbf{53.50} \\
\hline
\rowcolor{gray!10}
\multicolumn{13}{c}{\rule{0pt}{0.9em}\textbf{\textit{Untargeted Attack}}\rule[-0.2em]{0pt}{0pt}} \\
\hline
G-Designer
& 54.92 & 41.18 & 33.99
& 65.55 & 56.19 & 46.82
& 64.50 & 52.00 & 36.00
& 43.00 & 23.50 & 17.50 \\
AgentPrune
& 67.32 & 47.06 & 33.33
& 87.96 & 66.22 & 46.49
& 72.00 & 56.00 & 42.00
& 61.50 & 43.50 & 24.00 \\
TodyComm (Ours)
& \textbf{67.97} & \textbf{67.97} & \textbf{59.48}
& \textbf{88.29} & \textbf{83.61} & \textbf{74.58}
& \textbf{83.00} & \textbf{81.50} & \textbf{73.50}
& \textbf{63.50} & \textbf{61.50} & \textbf{55.00} \\
\bottomrule
\end{tabular}
}
\vspace{-0.7em}
\end{table*}

We introduced a new attack variant called the \textit{untargeted} attack, 
in which adversarial agents prompt their LLMs with option sets that exclude the correct answer. 
This contrasts with the \textit{targeted} attack used in the main results. 
In this experiment, we used Qwen3-8B and measured the performance drop of vanilla single agent upon receiving harmful responses from one adversarial agent. 
The drops are substantial across all benchmarks, where ``T'' and ``U'' denote targeted and untargeted attacks respectively: MMLU (T$\downarrow$ 80.37\%, U$\downarrow$ 90.65\%), ARC-C (T$\downarrow$ 87.40\%, U$\downarrow$ 83.97\%), OBQA (T$\downarrow$ 93.37\%, U$\downarrow$ 94.59\%), and MedQA (T$\downarrow$ 93.85\%, U$\downarrow$ 93.08\%), highlighting the vulnerability of small models and the importance of our dynamic design. 

Table~\ref{tab:attack_mechanism_generalization} shows that TodyComm maintains strong performance under both attack types, while G-Designer and AgentPrune degrade significantly at higher attack rates, demonstrating TodyComm's generalization across attack mechanisms.

Beyond evaluating new attack mechanisms, we further assess generalization across multiple dimensions: 
(i) \textbf{number of agents}: training with fewer agents and evaluating with more (Table~\ref{tab:num_agent_generalization}); 
(ii) \textbf{task domain}: training on one dataset and evaluating on another (Table~\ref{tab:cross_dataset}); 
(iii) \textbf{adversary outbreak}: training under one outbreak mode (\textit{random} or \textit{fixed}) and evaluating under the other across different datasets (Table~\ref{tab:attack_inception}); and 
(iv) \textbf{attack mechanism}: training on one type of attack (\textit{targeted} or \textit{untargeted attack}) and evaluating on another (Table~\ref{tab:diff_attack_in_train_infer}). We also examine combinations of these factors (Tables~\ref{tab:3factors_generalization} and~\ref{tab:4factors_generalization}). The results consistently demonstrate the strong generalizability of TodyComm.

\vspace{-0.5em}
\subsection{Ablation Study}
\vspace{-0.8em}
\label{sec:ablation}

\textbf{Inter-agent Graph Construction} \quad
Table~\ref{tab:ablation_method_design} reports the ablation results in constructing the communication graph.
The variant of ``w/o learning'' has access to oracle reliability labels and sets node potential $c_i^t$ to 1 for reliable agents.
It yields only 49.70\% accuracy when the attack rate exceeds 50\%.
A vanilla agent achieves 70.6\% accuracy on MMLU, and the probability that two agents produce the same correct answer is around 49\%. When the two agents disagree, the final answer is selected uniformly at random, causing performance to degrade toward chance level (\(49.70\%\)). 
The variant of ``w random ordering'' constructs edges in a random order rather than following the learned sequence, degrading performance in all attack rates. 
These results highlight the importance of both learning edge probabilities and constructing edges in the proper sequence for effective communication and decision making.
Details are provided in Appendix~\ref{app:graph_construction_ablation}.

\textbf{Node Features} \quad 
We conducted ablation on node features,
demonstrating that different components contribute to the effectiveness of TodyComm.
Details are provided in Appendix~\ref{app:node_features_ablation}.

\textbf{Choice of LLM}   \quad
We evaluated TodyComm on three additional LLMs of varying sizes and model families to further validate its robustness. 
Combined with the results on GPT-4.1-nano and Qwen3-8B, TodyComm demonstrates strong and consistent robustness across five diverse LLMs. Details are provided in Appendix \ref{app:choice_llms}.

\textbf{Choice of Embedding Model} \quad
We evaluated our method with multiple embedding models to demonstrate robustness to the choice of embedding. Details are provided in Appendix~\ref{app:choice_embedding}.

\textbf{Case Study} \quad
A case study in Appendix~\ref{sec:case_study} visualizes the dynamics of node potentials and graph evolution for six agents interacting over four rounds under a budget of 2.

\vspace{-1em}
\section{Conclusion}
\vspace{-1em}
\label{sec:conclusion}
In this paper, we present a comprehensive study of multi-round LLM-based multi-agent systems and propose TodyComm, a framework that learns round-adaptive communication topologies from agent behaviors.
Extensive experiments on diverse benchmarks and attack rates/mechanisms, together with the node-wise budget control, scalability and generalization, demonstrate the effectiveness of TodyComm.
We also theoretically bounded the regret in the ultimate utility via the screening regret.
The current work focuses on the adversarial setting,
while dynamic graph can also be useful in collaborative settings.
We will extend TodyComm to such scenarios in the future work.

\clearpage
\newpage

\bibliographystyle{xinhua_num}
\bibliography{reference}

@inproceedings{zhangcut,
  title={Cut the Crap: An Economical Communication Pipeline for LLM-based Multi-Agent Systems},
  author={Zhang, Guibin and Yue, Yanwei and Li, Zhixun and Yun, Sukwon and Wan, Guancheng and Wang, Kun and Cheng, Dawei and Yu, Jeffrey Xu and Chen, Tianlong},
  booktitle={The Thirteenth International Conference on Learning Representations},
  year = {2025}
}

@inproceedings{zhangg,
  title={G-Designer: Architecting Multi-agent Communication Topologies via Graph Neural Networks},
  author={Zhang, Guibin and Yue, Yanwei and Sun, Xiangguo and Wan, Guancheng and Yu, Miao and Fang, Junfeng and Wang, Kun and Chen, Tianlong and Cheng, Dawei},
  booktitle={Forty-second International Conference on Machine Learning},
  year = {2025}
}

@article{chung2014empirical,
  title={Empirical evaluation of gated recurrent neural networks on sequence modeling},
  author={Chung, Junyoung and Gulcehre, Caglar and Cho, KyungHyun and Bengio, Yoshua},
  journal={arXiv preprint arXiv:1412.3555},
  year={2014}
}

@inproceedings{du2023improving,
  title={Improving factuality and reasoning in language models through multiagent debate},
  author={Du, Yilun and Li, Shuang and Torralba, Antonio and Tenenbaum, Joshua B and Mordatch, Igor},
  booktitle={Forty-first International Conference on Machine Learning},
  year={2023}
}

@inproceedings{liu2024dynamic,
  title={A dynamic llm-powered agent network for task-oriented agent collaboration},
  author={Liu, Zijun and Zhang, Yanzhe and Li, Peng and Liu, Yang and Yang, Diyi},
  booktitle={First Conference on Language Modeling},
  year={2024}
}

@inproceedings{zhuge2024gptswarm,
  title={{GPTSwarm}: Language agents as optimizable graphs},
  author={Zhuge, Mingchen and Wang, Wenyi and Kirsch, Louis and Faccio, Francesco and Khizbullin, Dmitrii and Schmidhuber, J{\"u}rgen},
  booktitle={Forty-first International Conference on Machine Learning},
  year={2024}
}

@inproceedings{hong2023metagpt,
  title={{MetaGPT}: Meta programming for a multi-agent collaborative framework},
  author={Hong, Sirui and Zhuge, Mingchen and Chen, Jonathan and Zheng, Xiawu and Cheng, Yuheng and Wang, Jinlin and Zhang, Ceyao and Wang, Zili and Yau, Steven Ka Shing and Lin, Zijuan and others},
  booktitle={The Twelfth International Conference on Learning Representations},
  year={2023}
}

@inproceedings{liang2024encouraging,
  title={Encouraging divergent thinking in large language models through multi-agent debate},
  author={Liang, Tian and He, Zhiwei and Jiao, Wenxiang and Wang, Xing and Wang, Yan and Wang, Rui and Yang, Yujiu and Shi, Shuming and Tu, Zhaopeng},
  booktitle={Proceedings of the 2024 conference on empirical methods in natural language processing},
  pages={17889--17904},
  year={2024}
}

@inproceedings{parmar-etal-2025-plangen,
    title = "{P}lan{GEN}: A Multi-Agent Framework for Generating Planning and Reasoning Trajectories for Complex Problem Solving",
    author = "Parmar, Mihir  and
      Liu, Xin  and
      Goyal, Palash  and
      Chen, Yanfei  and
      Le, Long  and
      Mishra, Swaroop  and
      Mobahi, Hossein  and
      Gu, Jindong  and
      Wang, Zifeng  and
      Nakhost, Hootan  and
      Baral, Chitta  and
      Lee, Chen-Yu  and
      Pfister, Tomas  and
      Palangi, Hamid",
    editor = "Christodoulopoulos, Christos  and
      Chakraborty, Tanmoy  and
      Rose, Carolyn  and
      Peng, Violet",
    booktitle = "Proceedings of the 2025 Conference on Empirical Methods in Natural Language Processing",
    month = nov,
    year = "2025",
    address = "Suzhou, China",
    publisher = "Association for Computational Linguistics",
    url = "https://aclanthology.org/2025.emnlp-main.1042/",
    doi = "10.18653/v1/2025.emnlp-main.1042",
    pages = "20640--20666",
    ISBN = "979-8-89176-332-6"
}

@inproceedings{fan2025evomem,
  title={EvoMem: Improving Multi-Agent Planning with Dual-Evolving Memory},
  author={Fan, Wenzhe and Yan, Ning and Mortazavi, Masood S},
  booktitle={NeurIPS 2025 Workshop on Bridging Language, Agent, and World Models for Reasoning and Planning},
  year = {2025}
}

@article{zou2025latent,
  title={Latent collaboration in multi-agent systems},
  author={Zou, Jiaru and Yang, Xiyuan and Qiu, Ruizhong and Li, Gaotang and Tieu, Katherine and Lu, Pan and Shen, Ke and Tong, Hanghang and Choi, Yejin and He, Jingrui and others},
  journal={arXiv preprint arXiv:2511.20639},
  year={2025}
}

@inproceedings{zhaosirius,
  title={SiriuS: Self-improving Multi-agent Systems via Bootstrapped Reasoning},
  author={Zhao, Wanjia and Yuksekgonul, Mert and Wu, Shirley and Zou, James},
  booktitle={Advances in Neural Information Processing Systems},
   year = {2025}
}

@article{li2023camel,
  title={{CAMEL}: Communicative agents for" mind" exploration of large language model society},
  author={Li, Guohao and Hammoud, Hasan and Itani, Hani and Khizbullin, Dmitrii and Ghanem, Bernard},
  journal={Advances in Neural Information Processing Systems},
  volume={36},
  pages={51991--52008},
  year={2023}
}

@article{zhang2025safesieve,
  title={{SafeSieve}: From Heuristics to Experience in Progressive Pruning for LLM-based Multi-Agent Communication},
  author={Zhang, Ruijia and Zhao, Xinyan and Wang, Ruixiang and Chen, Sigen and Zhang, Guibin and Zhang, An and Wang, Kun and Wen, Qingsong},
  journal={arXiv preprint arXiv:2508.11733},
  year={2025}
}

@inproceedings{wang-etal-2025-agentdropout,
    title = "{A}gent{D}ropout: Dynamic Agent Elimination for Token-Efficient and High-Performance {LLM}-Based Multi-Agent Collaboration",
    author = "Wang, Zhexuan  and
      Wang, Yutong  and
      Liu, Xuebo  and
      Ding, Liang  and
      Zhang, Miao  and
      Liu, Jie  and
      Zhang, Min",
    editor = "Che, Wanxiang  and
      Nabende, Joyce  and
      Shutova, Ekaterina  and
      Pilehvar, Mohammad Taher",
    booktitle = "Proceedings of the 63rd Annual Meeting of the Association for Computational Linguistics (Volume 1: Long Papers)",
    month = jul,
    year = "2025",
    address = "Vienna, Austria",
    publisher = "Association for Computational Linguistics",
    url = "https://aclanthology.org/2025.acl-long.1170/",
    doi = "10.18653/v1/2025.acl-long.1170",
    pages = "24013--24035",
    ISBN = "979-8-89176-251-0"
}

@inproceedings{wang-etal-2025-g,
    title = "{G}-Safeguard: A Topology-Guided Security Lens and Treatment on {LLM}-based Multi-agent Systems",
    author = "Wang, Shilong  and
      Zhang, Guibin  and
      Yu, Miao  and
      Wan, Guancheng  and
      Meng, Fanci  and
      Guo, Chongye  and
      Wang, Kun  and
      Wang, Yang",
    editor = "Che, Wanxiang  and
      Nabende, Joyce  and
      Shutova, Ekaterina  and
      Pilehvar, Mohammad Taher",
    booktitle = "Proceedings of the 63rd Annual Meeting of the Association for Computational Linguistics (Volume 1: Long Papers)",
    month = jul,
    year = "2025",
    address = "Vienna, Austria",
    publisher = "Association for Computational Linguistics",
    url = "https://aclanthology.org/2025.acl-long.359/",
    doi = "10.18653/v1/2025.acl-long.359",
    pages = "7261--7276",
    ISBN = "979-8-89176-251-0"
}

@article{miao2025blindguard,
  title={{BlindGuard}: Safeguarding llm-based multi-agent systems under unknown attacks},
  author={Miao, Rui and Liu, Yixin and Wang, Yili and Shen, Xu and Tan, Yue and Dai, Yiwei and Pan, Shirui and Wang, Xin},
  journal={arXiv preprint arXiv:2508.08127},
  year={2025}
}

@article{clark2018think,
  title={Think you have solved question answering? try arc, the ai2 reasoning challenge},
  author={Clark, Peter and Cowhey, Isaac and Etzioni, Oren and Khot, Tushar and Sabharwal, Ashish and Schoenick, Carissa and Tafjord, Oyvind},
  journal={arXiv preprint arXiv:1803.05457},
  year={2018}
}

@inproceedings{hendrycksmeasuring,
  title={Measuring Massive Multitask Language Understanding},
  author={Hendrycks, Dan and Burns, Collin and Basart, Steven and Zou, Andy and Mazeika, Mantas and Song, Dawn and Steinhardt, Jacob},
  booktitle={International Conference on Learning Representations},
  year = {2021}
}

@article{cobbe2021training,
  title={Training verifiers to solve math word problems},
  author={Cobbe, Karl and Kosaraju, Vineet and Bavarian, Mohammad and Chen, Mark and Jun, Heewoo and Kaiser, Lukasz and Plappert, Matthias and Tworek, Jerry and Hilton, Jacob and Nakano, Reiichiro and others},
  journal={arXiv preprint arXiv:2110.14168},
  year={2021}
}

@inproceedings{yang2025llm,
  title={{LLM-MedQA}: Enhancing medical question answering through case studies in large language models},
  author={Yang, Hang and Chen, Hao and Guo, Hui and Chen, Yineng and Lin, Ching-Sheng and Hu, Shu and Hu, Jinrong and Wu, Xi and Wang, Xin},
  booktitle={2025 International Joint Conference on Neural Networks (IJCNN)},
  pages={1--8},
  year={2025},
  organization={IEEE}
}

@inproceedings{mihaylov2018can,
  title={Can a Suit of Armor Conduct Electricity? A New Dataset for Open Book Question Answering},
  author={Mihaylov, Todor and Clark, Peter and Khot, Tushar and Sabharwal, Ashish},
  booktitle={Proceedings of the 2018 Conference on Empirical Methods in Natural Language Processing},
  pages={2381--2391},
  year={2018}
}

@inproceedings{jiang2023llm,
  title={{LLM-Blender}: Ensembling Large Language Models with Pairwise Ranking and Generative Fusion},
  author={Jiang, Dongfu and Ren, Xiang and Lin, Bill Yuchen},
  booktitle={Proceedings of the 61st Annual Meeting of the Association for Computational Linguistics (Volume 1: Long Papers)},
  pages={14165--14178},
  year={2023}
}

@inproceedings{qianscaling,
  title={Scaling Large Language Model-based Multi-Agent Collaboration},
  author={Qian, Chen and Xie, Zihao and Wang, YiFei and Liu, Wei and Zhu, Kunlun and Xia, Hanchen and Dang, Yufan and Du, Zhuoyun and Chen, Weize and Yang, Cheng and others},
  booktitle={The Thirteenth International Conference on Learning Representations},
  year = {2025}
}

@inproceedings{chanchateval,
  title={{ChatEval}: Towards Better LLM-based Evaluators through Multi-Agent Debate},
  author={Chan, Chi-Min and Chen, Weize and Su, Yusheng and Yu, Jianxuan and Xue, Wei and Zhang, Shanghang and Fu, Jie and Liu, Zhiyuan},
  booktitle={The Twelfth International Conference on Learning Representations},
  year = {2024}
}

@inproceedings{wu2024autogen,
  title={{AutoGen}: Enabling next-gen LLM applications via multi-agent conversations},
  author={Wu, Qingyun and Bansal, Gagan and Zhang, Jieyu and Wu, Yiran and Li, Beibin and Zhu, Erkang and Jiang, Li and Zhang, Xiaoyun and Zhang, Shaokun and Liu, Jiale and others},
  booktitle={First Conference on Language Modeling},
  year={2024}
}

@article{zhou2026resmas,
  title={{ResMAS}: Resilience Optimization in LLM-based Multi-agent Systems},
  author={Zhou, Zhilun and Liu, Zihan and Liu, Jiahe and Shao, Qingyu and Wang, Yihan and Shao, Kun and Jin, Depeng and Xu, Fengli},
  journal={arXiv preprint arXiv:2601.04694},
  year={2026}
}

@inproceedings{li2025adaptive,
  title={Adaptive Graph Pruning for Multi-Agent Communication},
  author={Li, Boyi and Zhao, Zhonghan and Lee, Der-Horng and Wang, Gaoang},
  booktitle = {Proceedings of the 27th European Conference on Artificial Intelligence (ECAI)},
  year={2025}
}

@article{puterman1990markov,
  title={Markov decision processes},
  author={Puterman, Martin L},
  journal={Handbooks in operations research and management science},
  volume={2},
  pages={331--434},
  year={1990},
  publisher={Elsevier}
}

@article{williams1992simple,
  title={Simple statistical gradient-following algorithms for connectionist reinforcement learning},
  author={Williams, Ronald J},
  journal={Machine learning},
  volume={8},
  number={3},
  pages={229--256},
  year={1992},
  publisher={Springer}
}

@article{kipf2016variational,
  title={Variational graph auto-encoders},
  author={Kipf, Thomas N and Welling, Max},
  journal={arXiv preprint arXiv:1611.07308},
  year={2016}
}

@article{hoff2002latent,
  title={Latent space approaches to social network analysis},
  author={Hoff, Peter D and Raftery, Adrian E and Handcock, Mark S},
  journal={Journal of the american Statistical association},
  volume={97},
  number={460},
  pages={1090--1098},
  year={2002},
  publisher={Taylor \& Francis}
}

@article{koller2003multi,
title = {Multi-agent influence diagrams for representing and solving games},
journal = {Games and Economic Behavior},
volume = {45},
number = {1},
pages = {181-221},
year = {2003},
author = {Daphne Koller and Brian Milch},
}

@inproceedings{das2019tarmac,
  title={Tarmac: Targeted multi-agent communication},
  author={Das, Abhishek and Gervet, Th{\'e}ophile and Romoff, Joshua and Batra, Dhruv and Parikh, Devi and Rabbat, Mike and Pineau, Joelle},
  booktitle={International Conference on machine learning},
  pages={1538--1546},
  year={2019},
  organization={PMLR}
}

@inproceedings{wangtom2c,
  title={ToM2C: Target-oriented Multi-agent Communication and Cooperation with Theory of Mind},
  author={Wang, Yuanfei and Xu, Jing and Wang, Yizhou and others},
  booktitle={International Conference on Learning Representations},
  year={2022},
}

@inproceedings{singhlearning,
  title={Learning when to Communicate at Scale in Multiagent Cooperative and Competitive Tasks},
  author={Singh, Amanpreet and Jain, Tushar and Sukhbaatar, Sainbayar},
  booktitle={International Conference on Learning Representations},
  year={2019}
}

@inproceedings{kimlearning,
  title={Learning to Schedule Communication in Multi-agent Reinforcement Learning},
  author={Kim, Daewoo and Moon, Sangwoo and Hostallero, David and Kang, Wan Ju and Lee, Taeyoung and Son, Kyunghwan and Yi, Yung},
  booktitle={International Conference on Learning Representations},
  year={2019}
}

@inproceedings{niu2021multi,
  title={Multi-Agent Graph-Attention Communication and Teaming.},
  author={Niu, Yaru and Paleja, Rohan R and Gombolay, Matthew C},
  booktitle={Aamas},
  volume={21},
  pages={20th},
  year={2021}
}

@article{lu2026dytopo,
  title={DyTopo: Dynamic Topology Routing for Multi-Agent Reasoning via Semantic Matching},
  author={Lu, Yuxing and Hu, Yucheng and Zhao, Xukai and Cao, Jiuxin},
  journal={arXiv preprint arXiv:2602.06039},
  year={2026}
}

@article{cang2026graph,
  title={Graph-GRPO: Stabilizing Multi-Agent Topology Learning via Group Relative Policy Optimization},
  author={Cang, Yueyang and Zhang, Xiaoteng and Zhao, Erlu and Ji, Zehua and Liu, Yuhang and He, Yuchen and Ning, Zhiyuan and Yijun, Chen and Que, Wenge and Shi, Li},
  journal={arXiv preprint arXiv:2603.02701},
  year={2026}
}

@article{chung2002average,
  title={The average distances in random graphs with given expected degrees},
  author={Chung, Fan and Lu, Linyuan},
  journal={Proceedings of the National Academy of Sciences},
  volume={99},
  number={25},
  pages={15879--15882},
  year={2002},
  publisher={National Academy of Sciences}
}

@article{chatterjee2011random,
  title={Random graphs with a given degree sequence.},
  author={Chatterjee, Sourav and Diaconis, Persi and Sly, Allan},
  journal={Annals of applied probability: an official journal of the Institute of Mathematical Statistics},
  volume={21},
  number={4},
  pages={1400--1435},
  year={2011},
  publisher={Institute of Mathematical Statistics}
}

@incollection{lovasz2010regularity,
  title={Regularity partitions and the topology of graphons},
  author={Lov{\'a}sz, L{\'a}szl{\'o} and Szegedy, Bal{\'a}zs},
  booktitle={An Irregular Mind: Szemer{\'e}di is 70},
  pages={415--446},
  publisher={Springer},
  year={2010}
}

@inproceedings{sheng2023learning,
  title={Learning Structured Communication for Multi-Agent Reinforcement Learning},
  author={Sheng, Junjie and Wang, Xiangfeng and Jin, Bo and Li, Wenhao and Yan, Junchi and Wang, Jun and Zha, Hongyuan},
  booktitle={Proceedings of the 22nd International Conference on Autonomous Agents and Multiagent Systems (AAMAS)},
  year={2023},
  publisher={International Foundation for Autonomous Agents and Multiagent Systems}
}
\clearpage
\newpage


\appendix
\section{Impact Statement}
\label{app:impact_statement}
This work aims to advance LLM-based multi-agent systems by studying how agents can adapt their communication structures in multi-round settings. The proposed method is evaluated in controlled experimental environments and is intended primarily for research purposes. While it may contribute to improving the robustness and efficiency of collaborative AI systems, it does not introduce direct real-world deployment mechanisms. We do not identify immediate harmful applications specific to this work, but responsible use and further evaluation are important when extending such methods to real-world scenarios.

\section{Graph Execution Algorithm}
\label{sec:app_graph_execution}
\FloatBarrier
\begin{algorithm}[ht!]
   \caption{Graph Execution}
   \label{algo:graph_exec}
\begin{algorithmic}[1]
   \REQUIRE query $q$, inter-agent communication graph $\{\mathcal{G}_C^t\}_{t=2}^{T+1}$, decision graph $\mathcal{G}_D$, round $t \in [2, T]$, nodes $\{v_i\}_{i=1}^N$, decision node $v_d$.
   \FOR{$t = 2, \ldots, T$}
   \FOR{each node $v_i$ in $\text{topologicalSort}(\mathcal{G}_C^t)$}
      \STATE \textcolor{blue}{$\triangleright$ messages from neighbors in inter-agent communication graph at round $t-1$}
      \STATE $\mathbf{m}^t = \{f_m(o_j^{t}, \text{role}_j) \mid v_j \in \mathcal{N}^t(v_i)\}$
      \STATE $p_i^t \sim f_v(q, \mathbf{m}^t)$
      \STATE $o_i^t \sim \text{LLM}_i(p_i^t)$
   \ENDFOR
   \ENDFOR
   \STATE \textcolor{blue}{$\triangleright$ final decision}
   \STATE $\text{ans}_q \leftarrow f_d(\{ o_j^T \mid v_j \in \mathcal{N}^{T+1}(v_d)\})$
   \STATE \textbf{Output:} $\text{ans}_q$
\end{algorithmic}
\end{algorithm}

\section{Inter-agent Communication Graph Construction Algorithm}
\label{sec:app_inter_agent_graph_construction}
\FloatBarrier
\begin{algorithm}[ht!]
\caption{Node Representation Learning and Node Potential Learning over $T$ Rounds}
\label{algo:node_representation_generation}
\begin{algorithmic}[1]
\REQUIRE Query $q$, $\text{GRN}$, $\text{MLP}$, number of rounds $T$, nodes $\{v_i\}_{i=1}^N$
\STATE Initialize hidden states $\{h_i^1\}_{i=1}^N \leftarrow \emptyset$ 
\FOR{$t = 2$ \ldots $T+1$}
    \FOR{$i = 1$ \ldots $N$}
        \STATE $f_i^t \leftarrow  o_i^{t-1}$ and outputs of $\mathcal{N}^{t-1}(v_i)$
        \STATE $h_i^t \leftarrow \text{GRN}(h_i^{t-1}, f_i^t)$
        \STATE $c_i^t \leftarrow \mathrm{MLP}(h_i^t)$
    \ENDFOR
\ENDFOR
\STATE \textbf{Output:} \(\{\{h_i^t\}_{i=1}^{N}\}_{t=2}^{T+1}\), \(\{\{c_i^t\}_{i=1}^{N}\}_{t=2}^{T+1}\)
\end{algorithmic}
\end{algorithm}

\FloatBarrier
\begin{algorithm}[ht!]
\caption{Preliminary Screening with $\epsilon$-Greediness}
\label{algo:screening}
\begin{algorithmic}[1]
\REQUIRE Node potential $\mathbf{c}^t = \{c_i^t\}_{i=1}^N$, exploration rate $\epsilon$, flag \textit{training}, $t \in [2, \ldots, T+1]$
\IF{\textit{training}}
    \STATE {\color{blue}$\triangleright$ Sample edge activity from the policy distribution}
    \STATE $\pi^t \leftarrow \mathrm{Bernoulli}(\mathbf{c}^t)$
    \STATE $\mathbf{a}^t \sim \pi^t$ where $\mathbf{a}^t=(a_1^t,\ldots,a_N^t)$

    \STATE {\color{blue}$\triangleright$ $\epsilon$-greedy exploration}
    
    \STATE $\boldsymbol{\xi}^t \gets (\xi_1^t, \ldots, \xi_N^t)$,
    where $\xi_i^t \sim \mathrm{Bernoulli}(\epsilon)$,
    
    \STATE $\mathbf{a}^t \leftarrow \lvert \mathbf{a}^t - \boldsymbol{\xi}^t \rvert$

\ELSE
    \STATE {\color{blue}$\triangleright$ Deterministic execution at inference}
    \STATE $\kappa \leftarrow \mathrm{Mean}(\mathbf{c}^t) - \tau$ (where hyperparameter $\tau > 0$ instills optimism in including agents)
    \STATE $\mathbf{a}^t \leftarrow \mathbb{I}[\mathbf{c}^t > \kappa]$
\ENDIF
\STATE \textbf{Output:} $\mathbf{a}^t$ 
\end{algorithmic}
\end{algorithm}

\FloatBarrier
\label{app:graph_construction}
\FloatBarrier
\begin{algorithm}[ht!]
\caption{Inter-Agent Communication Graph Construction}
\label{algo:graph_construction}
\begin{algorithmic}[1]
\REQUIRE Node potential $\{\cwhat_i^t\}_{i=1}^N$, screening results $\{a_i^t\}_{i=1}^N$,
edge mask $\mathcal{M}$, optional budgets $B_{\text{out}}, B_{\text{in}}$
\STATE Select participating agents $\mathcal{V}^t_C \leftarrow \{ i \mid a_i^t = 1 \}$
\STATE Sort agents in $\mathcal{V}_C^t$ by descending potential $c_i^t$
\STATE Initialize graph $\mathcal{G}_C^t \leftarrow (\mathcal{V}^t_C, \emptyset)$
\STATE Initialize degrees $d_{\text{out}}(v_i)\leftarrow 0$ and $d_{\text{in}}(v_i)\leftarrow 0$ for all $v_i \in \mathcal{V}^t_C$

\STATE Let $\Lcal$ be a sorted list of $(v_i, v_j) \in \mathcal{V}_C^t \times \mathcal{V}_C^t$, 
where the entries are sorted in a descending order of $\cwhat^t_i \cwhat^t_j$.
$(v_i, v_j)$ precedes $(v_j, v_i)$ if $c_i^t \ge c_j^t$.

\FOR{(sender $v_i$, receiver $v_j$) pairs enumerated from $\Lcal$}
        \STATE \textcolor{blue}{$\triangleright$ no self cycle}
        \IF{$v_i =v_j$}
            \STATE \textbf{continue}
        \ENDIF
        \STATE \textcolor{blue}{$\triangleright$ check screening result}
        \IF{$a_i^t = 0 \;\text{or}\; a_j^t = 0$}
            \STATE \textbf{continue}
        \ENDIF
        \STATE \textcolor{blue}{$\triangleright$ predefined mask constraint}
        \IF{$\mathcal{M}(i,j)=0$}
            \STATE \textbf{continue}
        \ENDIF
        \STATE \textcolor{blue}{$\triangleright$ (Optional) budget constraint}
        \IF{$d_{\text{out}}(v_i) \ge B_{\text{out}}$ \OR $d_{\text{in}}(v_j) \ge B_{\text{in}}$}
            \STATE \textbf{continue} 
        \ENDIF
        \STATE \textcolor{blue}{$\triangleright$ DAG constraint}
        \IF{adding edge $(i \rightarrow j)$ creates a cycle in $\mathcal{G}_C^t$}
            \STATE \textbf{continue}
        \ENDIF
        \STATE Add edge $(i \rightarrow j)$ to $\mathcal{G}_C^t$
        \STATE \textcolor{blue}{$\triangleright$ (Optional) budget count}
        \STATE $d_{\text{out}}(v_i) \leftarrow d_{\text{out}}(v_i) + 1$
        \STATE $d_{\text{in}}(v_j) \leftarrow d_{\text{in}}(v_j) + 1$
\ENDFOR
\STATE \textbf{Output:} $\mathcal{G}_C^t$
\end{algorithmic}
\end{algorithm}

\clearpage
\newpage

\section{Additional Results on Training Details}
\begin{table*}[ht!]
\centering
\caption{Training token usage at performance parity point and peak point. The \textit{performance parity point} is defined as the point where TodyComm rapidly converges to a level close to its peak accuracy, after which further improvement is slow and marginal. For each method and attack rate, we report the mean training tokens to reach performance parity (Tok\textsubscript{par}), accuracy at parity point (Acc\textsubscript{par}), tokens to reach peak performance (Tok\textsubscript{peak}), and peak accuracy (Acc\textsubscript{peak}). Tokens are in millions. ``-'' indicates the metric is not applicable for baselines.}
\label{tab:training_efficiency}
\resizebox{\textwidth}{!}{%
\begin{tabular}{ll cccc cccc cccc cccc}
\toprule
& & \multicolumn{4}{c}{\textbf{MMLU}} & \multicolumn{4}{c}{\textbf{ARC}} & \multicolumn{4}{c}{\textbf{OpenBookQA}} & \multicolumn{4}{c}{\textbf{MedQA}} \\
\cmidrule(lr){3-6} \cmidrule(lr){7-10} \cmidrule(lr){11-14} \cmidrule(lr){15-18}
\textbf{Method} & \textbf{Attack Rate}
  & Tok\textsubscript{par} & Acc\textsubscript{par} & Tok\textsubscript{peak} & Acc\textsubscript{peak}
  & Tok\textsubscript{par} & Acc\textsubscript{par} & Tok\textsubscript{peak} & Acc\textsubscript{peak}
  & Tok\textsubscript{par} & Acc\textsubscript{par} & Tok\textsubscript{peak} & Acc\textsubscript{peak}
  & Tok\textsubscript{par} & Acc\textsubscript{par} & Tok\textsubscript{peak} & Acc\textsubscript{peak} \\
\midrule

\multirow{3}{*}{TodyComm}
 & $<$50\% & 14.00 & 69.94 & 47.16 & 72.11 & 1.32  & 87.74 & 5.81  & 88.52 & 2.91  & 81.83 & 20.51 & 84.83 & 7.81  & 59.67 & 39.38 & 61.00 \\
 & $=$50\% & 9.14  & 66.01 & 54.47 & 68.85 & 12.52 & 82.72 & 24.36 & 85.95 & 5.55  & 80.50 & 12.47 & 84.17 & 16.39 & 58.83 & 23.55 & 60.17 \\
 & $>$50\% & 17.68 & 64.05 & 34.75 & 64.71 & 25.90 & 78.48 & 30.59 & 81.05 & 10.10 & 78.50 & 12.76 & 80.50 & 12.21 & 56.00 & 19.47 & 57.50 \\
\midrule

\multirow{3}{*}{GDesigner}
 & $<$50\% & - & - & 30.09 & 64.92 & - & - & 21.48 & 83.05 & - & - & 15.79 & 73.67 & - & - & 23.59 & 67.00 \\
 & $=$50\% & - & - & 31.39 & 52.07 & -  & - & 2.21  & 72.91 & - & - & 20.96 & 40.83 & - & - & 33.05 & 48.33 \\
 & $>$50\% & - & - & 22.33 & 45.97 & - & - & 26.59 & 67.22 & - & - & 26.71 & 41.00 & - & - & 32.88 & 43.17 \\
\midrule

\multirow{3}{*}{AgentPrune}
 & $<$50\% & - & - & 40.90 & 69.28 & - & - & 20.93 & 86.73 & - & - & 21.19 & 73.83 & - & - & 33.02 & 66.65 \\
 & $=$50\% & - & - & 36.80 & 55.34 & - & - & 24.15 & 74.02 & - & - & 14.29 & 44.17 & - & - & 32.70 & 48.50 \\
 & $>$50\% & - & - & 51.33 & 53.38 & - & - & 19.88 & 69.79 & -  & - & 8.60  & 39.17 & - & - & 22.86 & 45.67 \\

\bottomrule
\end{tabular}%
}
\end{table*}

\begin{figure}[ht!]
    \centering
    \begin{subfigure}{0.24\linewidth}
        \includegraphics[width=\linewidth]{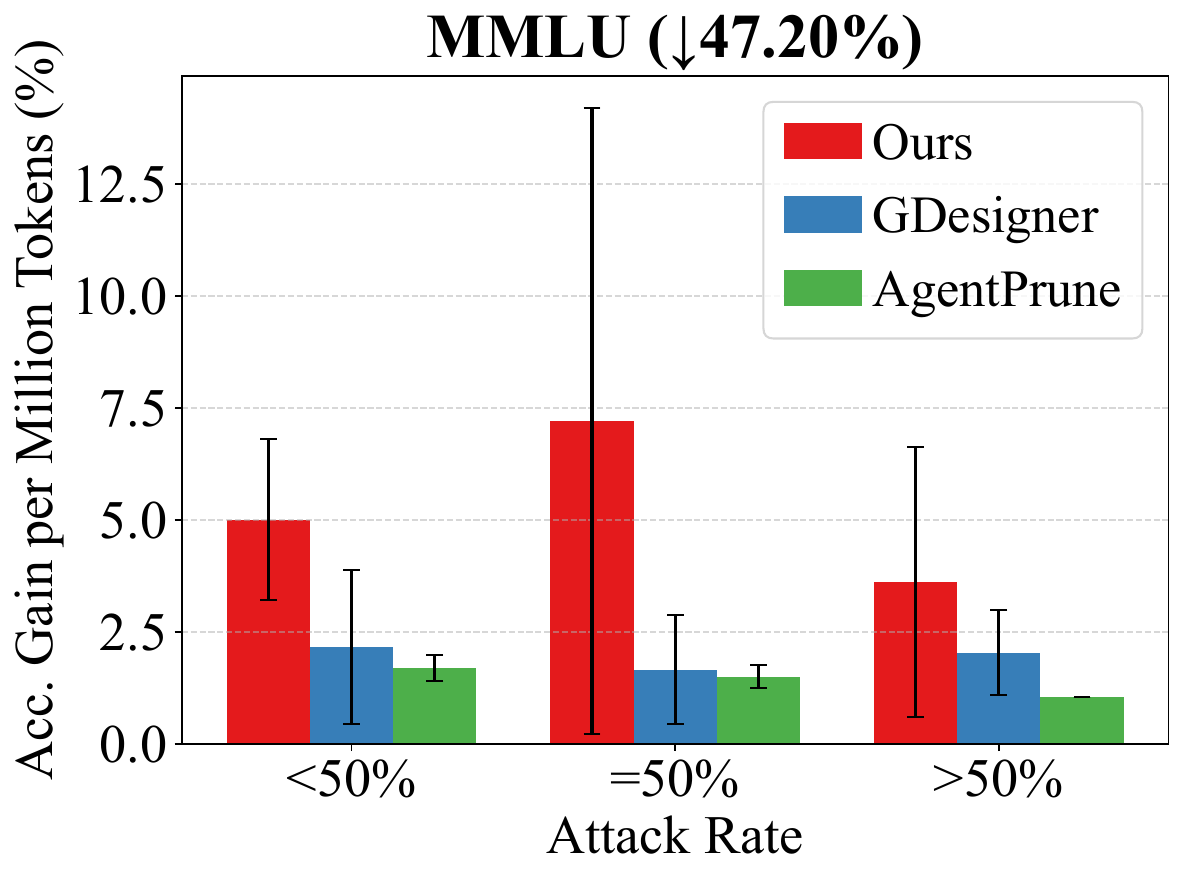}
    \end{subfigure}
    \begin{subfigure}{0.24\linewidth}
        \includegraphics[width=\linewidth]{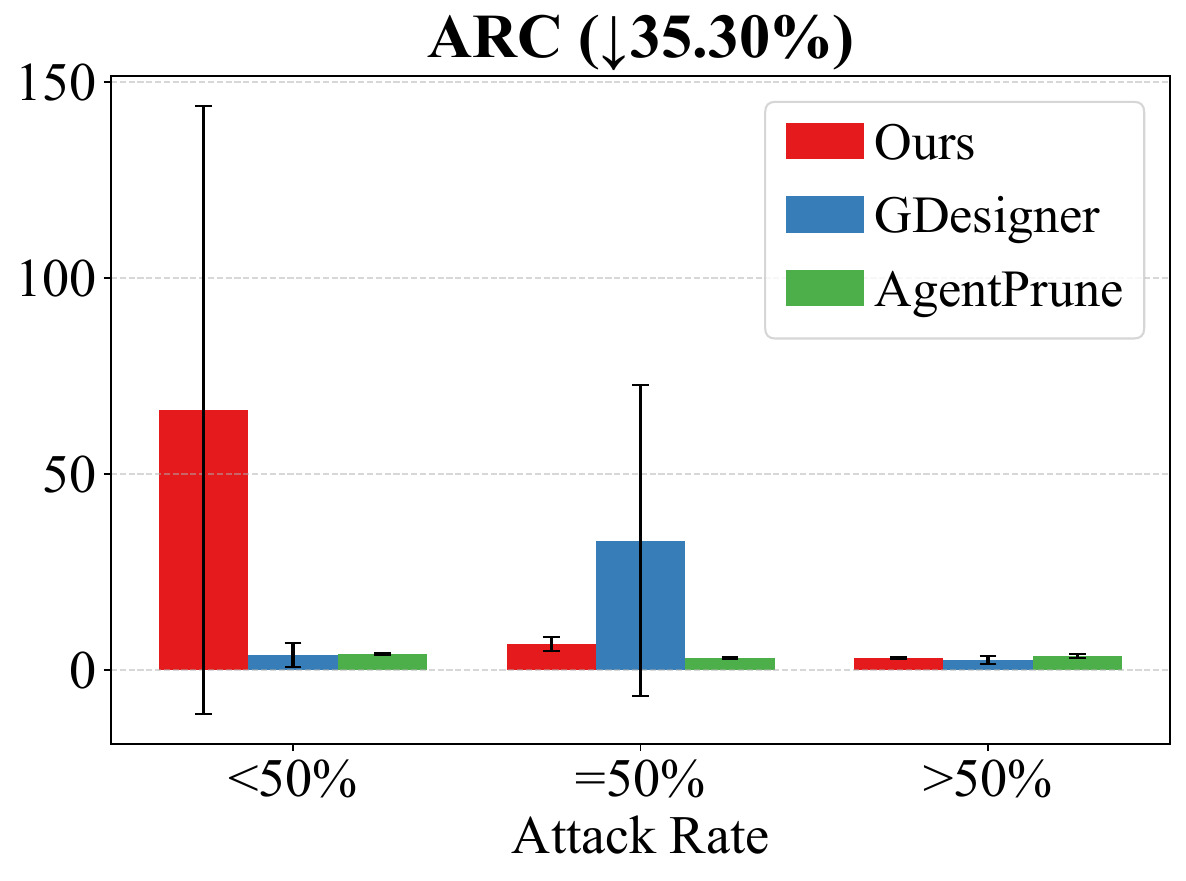}
    \end{subfigure}
    \begin{subfigure}{0.24\linewidth}
        \includegraphics[width=\linewidth]{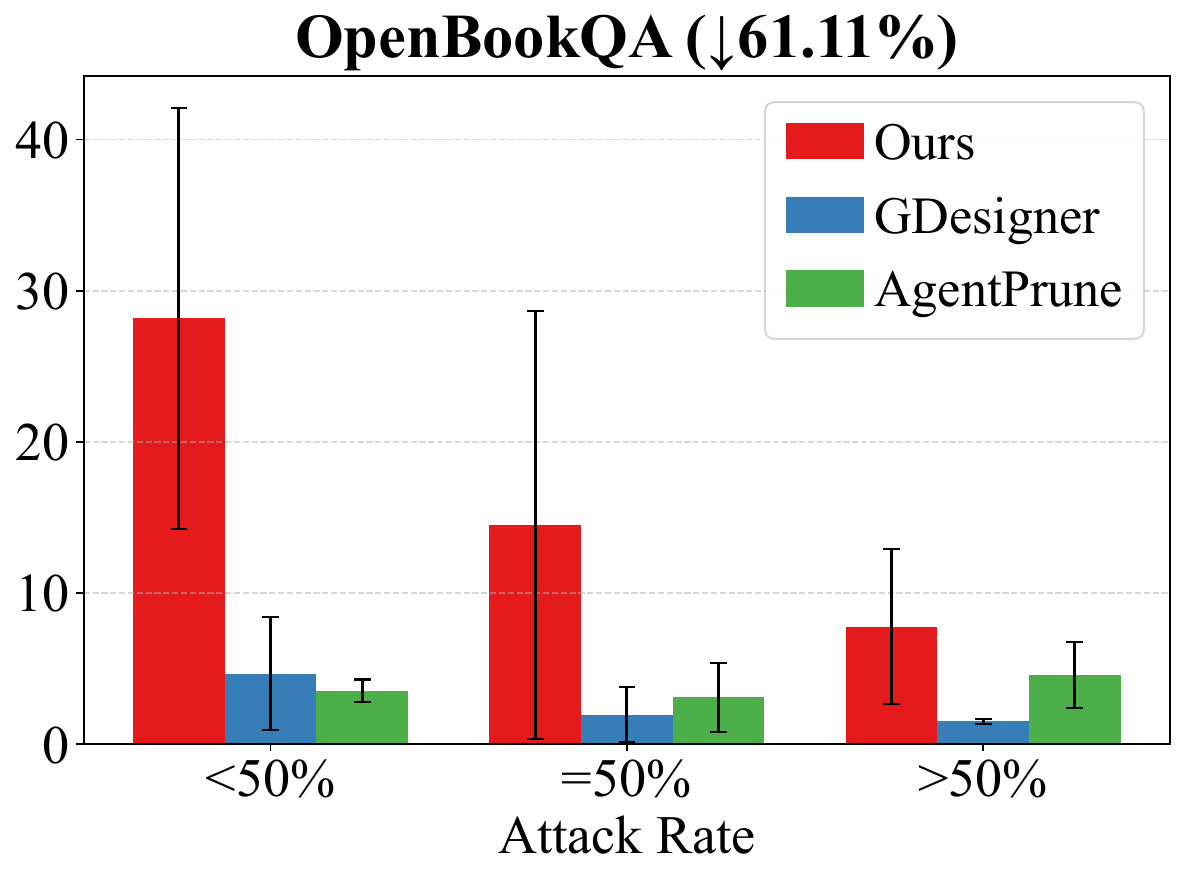}
    \end{subfigure}
    \begin{subfigure}{0.24\linewidth}
        \includegraphics[width=\linewidth]{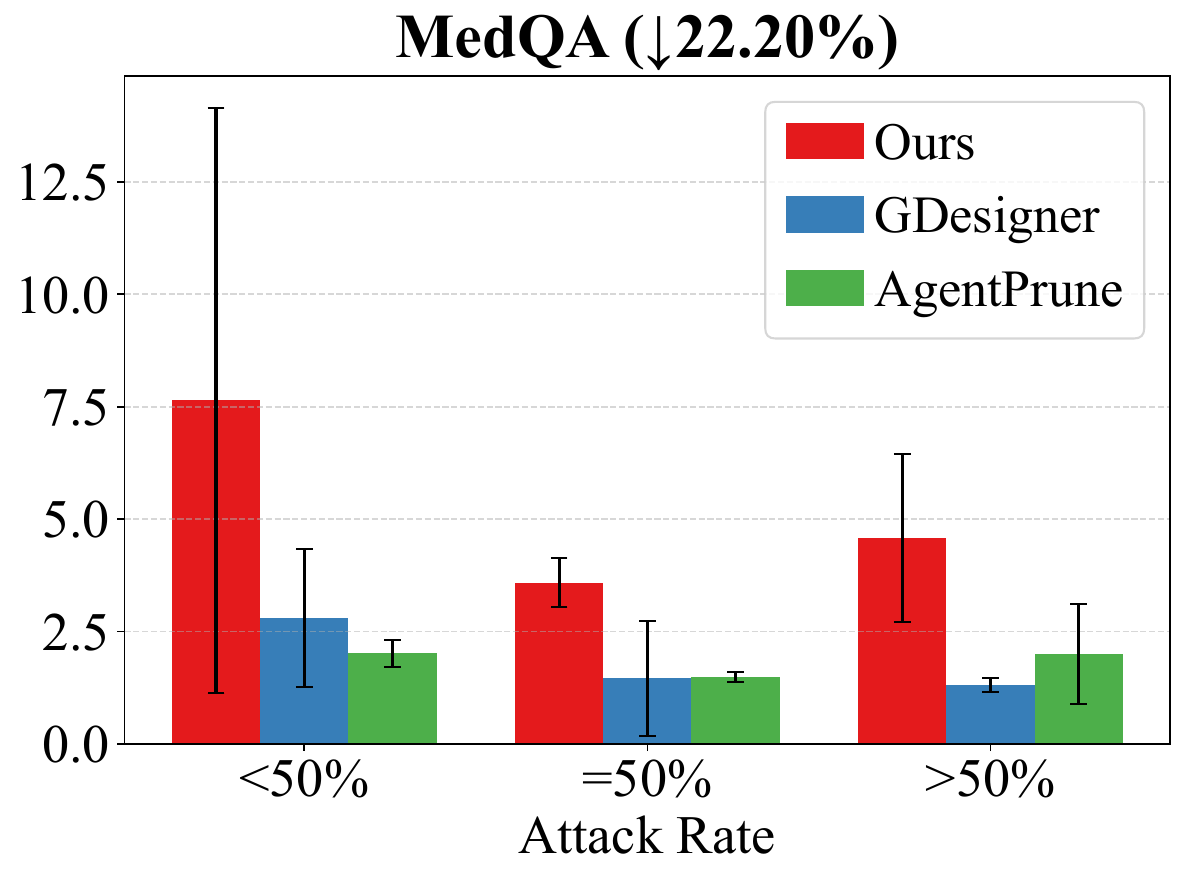}
    \end{subfigure}
    \caption{Accuracy gain per million training tokens across four benchmarks. TodyComm achieves higher accuracy with fewer training tokens than GDesigner and AgentPrune across all benchmarks and adversary rates in most cases.}
    \label{fig:training_efficiency}
\end{figure}

\begin{figure*}[ht!]
    \centering
    \begin{subfigure}{0.32\textwidth}
        \centering
        \includegraphics[width=0.9\linewidth]{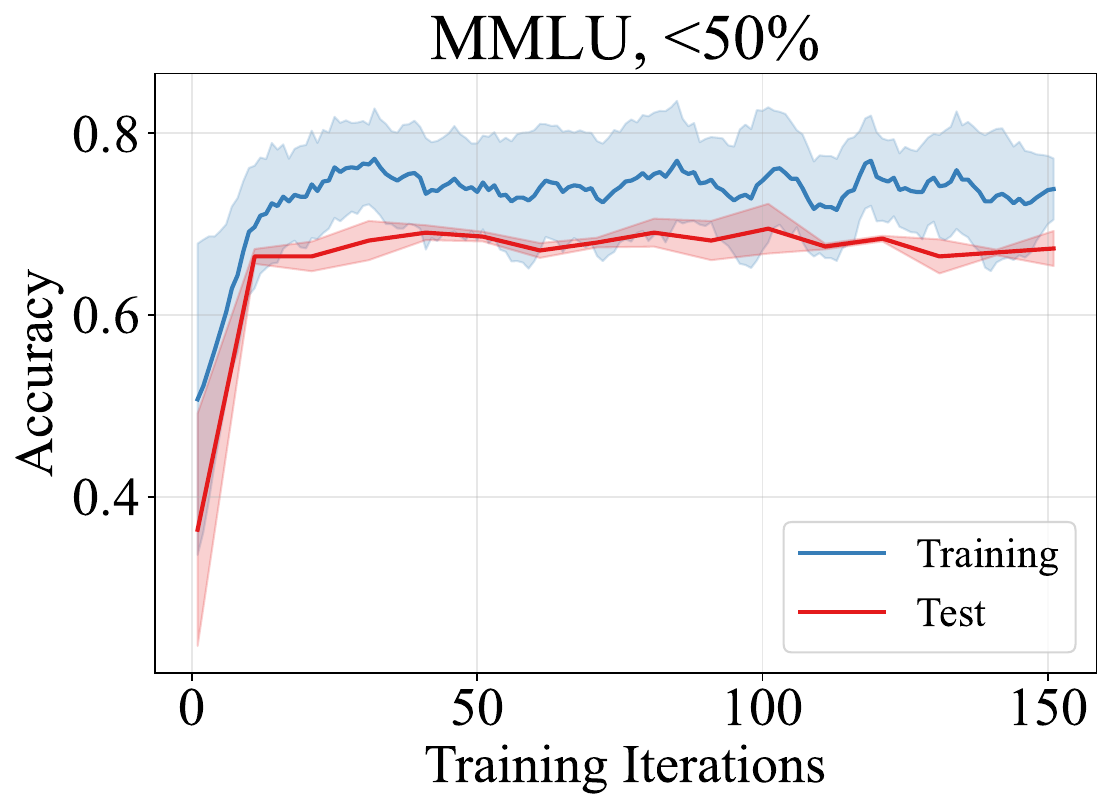}
    \end{subfigure}
    \hfill
    \begin{subfigure}{0.32\textwidth}
        \centering
        \includegraphics[width=0.9\linewidth]{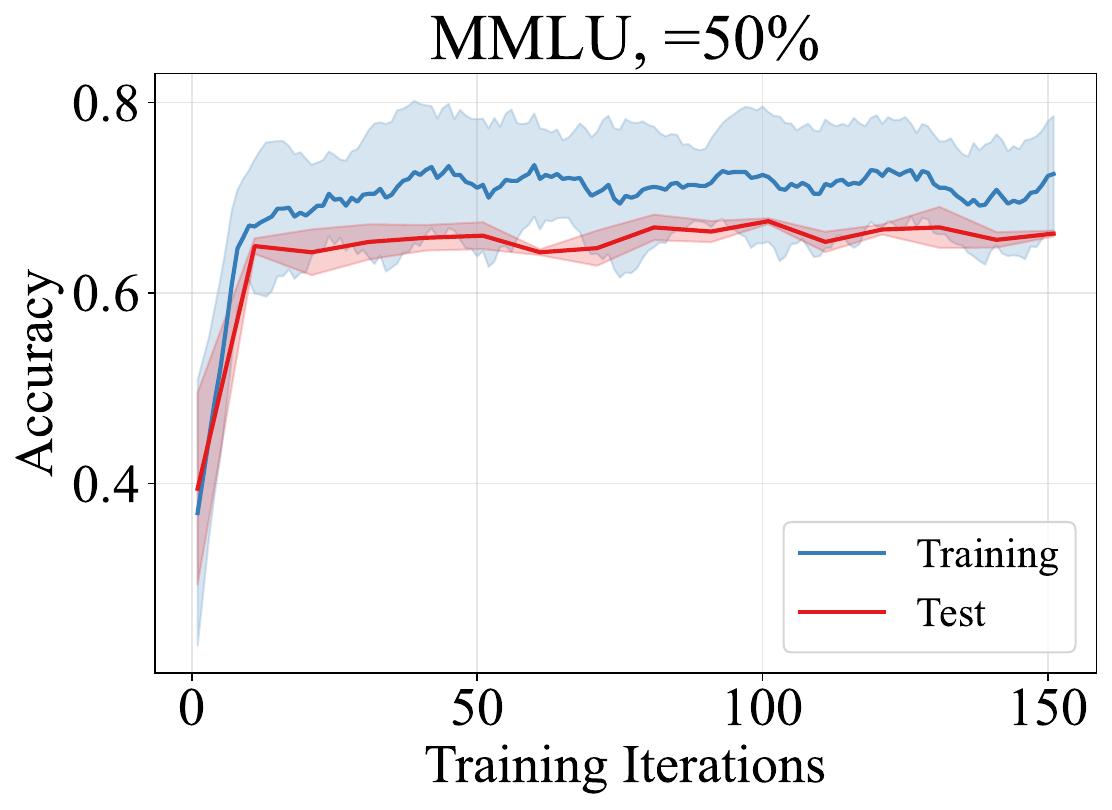}
    \end{subfigure}
    \hfill
    \begin{subfigure}{0.32\textwidth}
        \centering
        \includegraphics[width=0.9\linewidth]{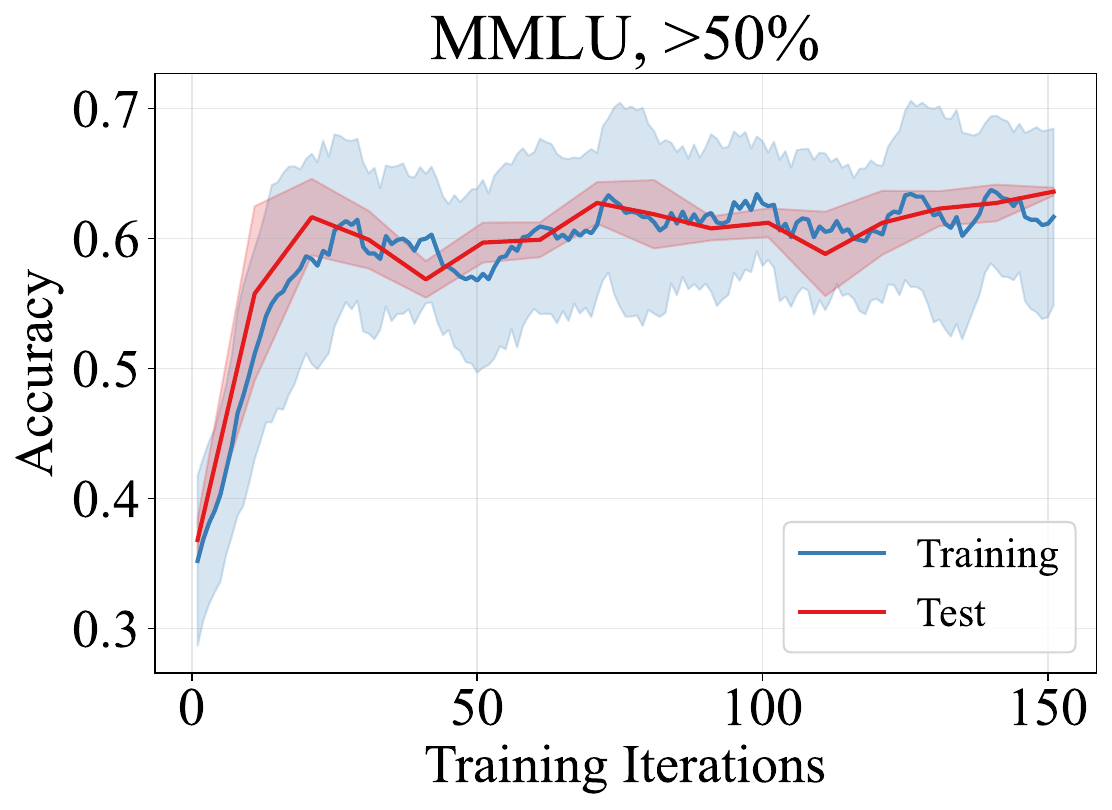}
    \end{subfigure}
    \caption{Training vs. test accuracy over training iterations on MMLU under three adversary rates ($<$50\%, $=$50\%, $>$50\%). Training accuracy consistently exceeds test accuracy and both curves plateau stably, confirming no overfitting or data leakage.}
    \label{fig:overfitting}
\end{figure*}

\clearpage
\newpage

\FloatBarrier
\section{Generalization}
\label{sec:app_generalization}
\subsection{Number of Agents Generalization}
\label{app:num_agents_gen}
We trained on fewer agents and evaluate on more to demonstrate that our way of learning edges generalize beyond the training configuration, capturing meaningful agent relationships regardless of the number of agents.
Table \ref{tab:num_agent_generalization} shows TodyComm generalizes from 4-agent training to 6-agent evaluation with negligible degradation on MMLU and marginal improvement on ARC-C, 
consistently outperforming baselines across all attack rates.
\vspace{-1em}
\FloatBarrier
\vspace{-2pt}
\begin{table}[ht!]
\centering
\caption{Generalization on different number of agents in training and inference with gpt-4.1-nano.}
\label{tab:num_agent_generalization}
\footnotesize
\setlength{\tabcolsep}{5pt}
\renewcommand{\arraystretch}{1}
\begin{tabular}{l l c c c c c}
    \toprule
    \textbf{Dataset} & \textbf{Method} & \textbf{\#agents (train)} & \textbf{\#agents (eval)} & \textbf{$<50\%$} & \textbf{$=50\%$} & \textbf{$>50\%$} \\
    \midrule
    MMLU & TodyComm   & 6 & 6 & 72.11 & 68.85 & 64.71 \\
    MMLU & TodyComm   & 4 & 6 & 71.24 & 67.32 & 64.05 \\
    MMLU & AgentPrune & 6 & 6 & 69.28 & 55.34 & 53.38 \\
    MMLU & GDesigner  & 6 & 6 & 64.92 & 52.07 & 45.97 \\
    \midrule
    ARC-C & TodyComm   & 6 & 6 & 88.52 & 85.95 & 81.05 \\
    ARC-C & TodyComm   & 4 & 6 & 88.96 & 86.62 & 84.61 \\
    ARC-C & AgentPrune & 6 & 6 & 86.73 & 74.02 & 69.79 \\
    ARC-C & GDesigner  & 6 & 6 & 83.05 & 72.91 & 67.22 \\
    \bottomrule
\end{tabular}
\end{table}

\subsection{Task Domain Generalization}
\label{app:task_domain_gen}
We trained on one dataset and evaluate on a different one. Specifically, we use MMLU and ARC-C as evaluation targets, each trained on the remaining QA datasets. As shown in the Table \ref{tab:cross_dataset}, TodyComm retains strong performance across all training-evaluation combinations.

\FloatBarrier
\vspace{-2pt}
\begin{table}[ht!]
\centering
\caption{Generalization across training and evaluation datasets under different attack rates with gpt-4.1-nano.}
\label{tab:cross_dataset}
\footnotesize
\setlength{\tabcolsep}{5pt}
\renewcommand{\arraystretch}{1}
\begin{tabular}{l l c c c}
    \toprule
    \textbf{Training} & \textbf{Evaluation} & \textbf{$<50\%$} & \textbf{$=50\%$} & \textbf{$>50\%$} \\
    \midrule
    
    
    
    MMLU  & MMLU  & 72.11 & 68.85 & 64.71 \\
    ARC-C & MMLU  & 72.50 & 66.67 & 62.50 \\
    MedQA & MMLU  & 69.94 & 66.67 & 62.75 \\
    OBQA  & MMLU  & 70.59 & 67.97 & 63.40 \\
    \midrule
    ARC-C & ARC-C & 88.52 & 85.95 & 81.05 \\
    MMLU  & ARC-C & 85.62 & 85.28 & 81.61 \\
    MedQA & ARC-C & 85.28 & 85.28 & 80.27 \\
    OBQA  & ARC-C & 85.28 & 86.29 & 81.27 \\
    \bottomrule
\end{tabular}
\end{table}

\FloatBarrier
\subsection{Adversary outbreak generalization}
\label{app:adv_inception_gen}
We trained with one attack Outbreak modes (\textit{random} or \textit{fixed}) and evaluate with the other, across different dataset combinations. As shown in the Table \ref{tab:attack_inception}, TodyComm generalizes well across attack Outbreak modes: performance remains consistently strong and comparable to the in-distribution baseline (first and fourth rows). Notably, models trained on the random mode and evaluated on fixed (rows 2 and 5) tend to perform slightly better than the converse (rows 3 and 6), which is expected since random training exposes the model to a more diverse set of adversarial timings, providing a broader inductive signal.

\FloatBarrier
\begin{table}[ht!]
\centering
\caption{Generalization across training and evaluation attack outbreak strategies under different attack rates with gpt-4.1-nano.}
\label{tab:attack_inception}
\footnotesize
\setlength{\tabcolsep}{5pt}
\renewcommand{\arraystretch}{1}
\begin{tabular}{ll|ll|ccc}
\toprule
\multicolumn{2}{c|}{\textbf{Training}} & \multicolumn{2}{c|}{\textbf{Evaluation}} & \multicolumn{3}{c}{\textbf{Attack Rate}} \\
\cmidrule(lr){1-2} \cmidrule(lr){3-4} \cmidrule(lr){5-7}

\textbf{Dataset} & \textbf{Outbreak} 
& \textbf{Dataset} & \textbf{Outbreak} 
& \textbf{$<50\%$} & \textbf{$=50\%$} & \textbf{$>50\%$} \\
    \midrule
    MMLU  & random & MMLU  & random & 72.11 & 68.85 & 64.71 \\
    ARC-C & random & MMLU  & fixed  & 71.90 & 69.94 & 67.32 \\
    ARC-C & fixed  & MMLU  & random & 71.24 & 68.63 & 60.13 \\
    \midrule
    ARC-C & random & ARC-C & random & 88.52 & 85.95 & 81.05 \\
    MMLU  & random & ARC-C & fixed  & 87.62 & 86.62 & 84.28 \\
    MMLU  & fixed  & ARC-C & random & 87.29 & 83.61 & 81.94 \\
    \bottomrule
\end{tabular}
\end{table}

\FloatBarrier
\subsection{Attack Mechanism Generalization}
Table~\ref{tab:diff_attack_in_train_infer} varies the attack mechanism between training and inference, and TodyComm remains competitive across all combinations, sometimes even slightly exceeding the matched-attack baseline. 
\FloatBarrier
\begin{table*}[ht!]
\centering
\caption{Generalization across attack mechanisms in training and inference with Qwen3-8B.}
\label{tab:diff_attack_in_train_infer}
\footnotesize
\setlength{\tabcolsep}{3pt}
\renewcommand{\arraystretch}{1}


\begin{tabular}{@{}c|c|ccc|ccc@{}}
\toprule
\multicolumn{1}{c|}{\textbf{Training}} & \multicolumn{1}{c|}{\textbf{Evaluation}} 
& \multicolumn{3}{c|}{\textbf{MMLU}} 
& \multicolumn{3}{c}{\textbf{ARC-C}} \\

\cmidrule(lr){1-1} \cmidrule(lr){2-2} \cmidrule(lr){3-5} \cmidrule(lr){6-8}

\textbf{Mechanism} & \textbf{Mechanism}
& \textbf{$<50\%$} & \textbf{$=50\%$} & \textbf{$>50\%$}
& \textbf{$<50\%$} & \textbf{$=50\%$} & \textbf{$>50\%$} \\

\midrule
targeted  & targeted  & 69.28 & 67.32 & 60.13 & 88.29 & 81.94 & 79.26 \\
untargeted & targeted  & 69.94 & 65.36 & 49.67 & 87.63 & 84.95 & 76.92 \\
targeted  & untargeted  & 67.32 & 65.36 & 55.57 & 88.63 & 86.29 & 75.92 \\
untargeted  & untargeted & 67.97 & 67.97 & 59.48 & 88.29 & 83.61 & 74.58 \\

\bottomrule
\end{tabular}
\end{table*}

\subsection{Combination}
\label{app:combo_gen}
We further stress-tested generalization by combining multiple factors simultaneously. 
Table~\ref{tab:3factors_generalization} compounds three factors:task domain, attack outbreak mode, and attack mechanism, and TodyComm continues to generalize well, with accuracy remaining strong even under fully mismatched train/eval configurations. 
Table~\ref{tab:4factors_generalization} adds attack rate as a fourth factor, creating the most challenging cross-condition setting. TodyComm still achieves solid performance in most cases, 
though accuracy drops more noticeably when the inference attack rate is substantially higher than training (e.g., trained at 0.4, evaluated at 0.7) 
and trained on fixed outbreak and evaluated on random, as the model encounters a more unpredictable adversarial pattern at a higher attack intensity than seen during training.

\FloatBarrier
\vspace{-1em}
\begin{table}[ht!]
\centering
\caption{Generalization across task domain, attack outbreak, and attack mechanism with Qwen3-8B.}
\label{tab:3factors_generalization}
\footnotesize
\setlength{\tabcolsep}{3pt}
\renewcommand{\arraystretch}{1.05}





\begin{tabular}{@{}ccc|ccc|ccc@{}}
\toprule

\multicolumn{3}{c|}{\textbf{Training}} 
& \multicolumn{3}{c|}{\textbf{Evaluation}} 
& \multicolumn{3}{c}{\textbf{Attack Rate}} \\

\cmidrule(lr){1-3} \cmidrule(lr){4-6} \cmidrule(lr){7-9}

\textbf{Dataset} & \textbf{Mechanism} & \textbf{Outbreak} 
& \textbf{Dataset} & \textbf{Mechanism} & \textbf{Outbreak} 
& \textbf{$<50\%$} & \textbf{$=50\%$} & \textbf{$>50\%$} \\

\midrule

ARC-C & targeted  & random & MMLU & untargeted & fixed  & 69.28 & 67.32 & 65.36 \\
ARC-C & untarted & fixed  & MMLU & targeted  & random & 71.90 & 61.49 & 56.86 \\
MMLU  & targeted  & fixed  & ARC-C & untargeted & random & 86.96 & 78.93 & 61.54 \\
MMLU  & untargeted & random & ARC-C & targeted  & fixed  & 89.30 & 89.00 & 88.29 \\

\bottomrule
\end{tabular}
\end{table}

\begin{table}[ht]
\centering
\caption{Generalization across task domain, attack outbreak, attack mechanism, and attack rate with Qwen3-8B.}
\label{tab:4factors_generalization}
\footnotesize
\setlength{\tabcolsep}{5pt}
\renewcommand{\arraystretch}{1}

\begin{tabular}{@{}cccc|cccc|c@{}}
\toprule

\multicolumn{4}{c|}{\textbf{Training}} 
& \multicolumn{4}{c|}{\textbf{Evaluation}} 
& \multirow{2}{*}{\textbf{Acc}} \\

\cmidrule(lr){1-4} \cmidrule(lr){5-8}

\textbf{Dataset} & \textbf{Mechanism} & \textbf{Outbreak} & \textbf{Rate}
& \textbf{Dataset} & \textbf{Mechanism} & \textbf{Outbreak} & \textbf{Rate}
& \\

\midrule

ARC-C & targeted  & random & 0.7 & MMLU & untargeted & fixed  & 0.4 & 70.59 \\
ARC-C & untargeted & fixed  & 0.7 & MMLU & targeted  & random & 0.4 & 68.63 \\
ARC-C & targeted  & random & 0.4 & MMLU & untargeted & fixed  & 0.7 & 66.01 \\
ARC-C & untargeted & fixed  & 0.4 & MMLU & targeted  & random & 0.7 & 45.10 \\
MMLU  & untargeted & random & 0.7 & ARC-C & targeted  & fixed  & 0.4 & 89.30 \\
MMLU  & targeted  & fixed  & 0.7 & ARC-C & untargeted & random & 0.4 & 88.29 \\
MMLU  & untargeted & random & 0.4 & ARC-C & targeted  & fixed  & 0.7 & 84.28 \\
MMLU  & targeted  & fixed  & 0.4 & ARC-C & untargeted & random & 0.7 & 67.89 \\

\bottomrule
\end{tabular}
\end{table}

\FloatBarrier
\section{Ablation on Graph Construction}
\label{app:graph_construction_ablation}
\FloatBarrier
\vspace{-1em}
\begin{table}[ht!]
\centering
\caption{Graph Construction Ablation on MMLU}
\vspace{0pt}
\label{tab:ablation_method_design}
\footnotesize
\setlength{\tabcolsep}{6pt}        
\renewcommand{\arraystretch}{1} 
\begin{tabular}{l | c | c | c}
    \toprule
    \textbf{Method} & \textbf{$<50\%$} & \textbf{$=50\%$} & \textbf{$>50\%$} \\
    \midrule
    \textbf{TodyComm}             & \textbf{72.11} & \textbf{68.85} & \textbf{64.71} \\
    TodyComm w/o learning & 62.11          & 54.47          & 49.70          \\
    TodyComm w random ordering & 69.94          & 66.01          & 62.75 
 \\
    \bottomrule
\end{tabular}
\end{table}

\FloatBarrier
\section{Definition of Node Features}
\label{app:definition_node_features}
The node feature is defined as:
\begin{align}
    {f}_i^t &:= \{ \, \underbrace{\mathbf{s}_i^{t-1}}_{\text{Self}} \,\, \Vert \,\, \underbrace{\mathbf{n}_i^{t-1}}_{\text{Neighbor}} \,\, \Vert \,\, \underbrace{\mathbf{d}_i^{t-1}}_{\text{Difference}} \, \} \\[10pt]
    \text{where: } \quad \mathbf{s}_i^t &= \{ \text{sol}_i^t, \text{ana}_i^t, \text{sol\_per}_i^t, \text{ana\_per}_i^t \} \nonumber \\
    \mathbf{n}_i^t &= \{ \text{neighbor\_sol}_i^t, \text{neighbor\_ana}_i^t, \text{has\_neighbor}_i^t, \nonumber \\
                 & \quad \quad \text{sol\_disp}_i^t, \text{ana\_disp}_i^t \} \nonumber \\
    \mathbf{d}_i^t &= \{ \text{sol\_agree}_i^t, \text{ana\_agree}_i^t, \text{sol\_dist}_i^t, \text{ana\_dist}_i^t \} \nonumber
\end{align}
The self information captures agent \(i\)’s own outputs from the previous round, 
including its solution \(\text{sol}_i^{t-1}\) and analysis \(\text{ana}_i^{t-1}\), as well as persistence features \(\text{sol\_per}_i^{t-1}\) and \(\text{ana\_per}_i^{t-1}\), defined as the similarity between the current outputs and those from the initial round. These features measure the temporal consistency of an agent’s reasoning.

The neighborhood information summarizes neighboring agents’ behaviors, including potential-weighted averages of neighbors’ solutions and analyses, 
along with a binary indicator \(\text{has\_neighbor}_i^{t-1}\). 
We further incorporate dispersion metrics \(\text{sol\_disp}_i^{t-1}\) and \(\text{ana\_disp}_i^{t-1}\) to quantify the degree of disagreement among neighbors.

Finally, the difference information explicitly models the parity between agent \(i\) and its neighborhood, including semantic agreement scores and distance measures that capture how closely the agent’s outputs align with or deviate from those of its neighbors.

\clearpage
\newpage

\FloatBarrier
\section{Ablation on Node Features}
\label{app:node_features_ablation}
In Section~\ref{sec:inter_agent_opt}, node features consist of \textbf{self} information $\svec_i^t$, 
\textbf{neighborhood} information $\nvec_i^t$, 
and their \textbf{difference} $\dvec_i^t$. 

Table~\ref{tab:ablation_node_features} shows that each component is necessary.
For example, ``Self + Diff'' emphasizes the importance of neighborhood information, 
while ``Self + Neighbor'' demonstrates the value of explicitly modeling difference information.

We define agent persistence (part of $\svec_i^t$) as the difference between an agent’s independent answer at the outset and its current-round answer after communication. To evaluate its role, we consider ``Self info only'', which includes persistence but excludes neighborhood information, and ``w/o Persist'', which removes persistence. The results indicate that persistence is beneficial, but persistence alone without neighborhood information is insufficient.

As shown in Figure~\ref{fig:ablation_node_features}, node features combining persistence with neighborhood-related information—via either difference or explicit neighborhood features—converge rapidly within about 20 training iterations. 
In contrast, ``Self info only'' converges much more slowly, and persistence without neighborhood information performs poorly, particularly when 4 out of 6 agents are adversarial. Without persistence, the model can still learn but converges more slowly.

Overall, across different node feature designs, our method performs comparably to baselines when the attack rate is below 50\% and significantly outperforms them when the attack rate exceeds 50\%. 


\begin{table}[ht!]
\centering
\caption{Node Features Ablation on MMLU. ``lr'' stands for learning rate.}
\vspace{0pt}
\label{tab:ablation_node_features}
\footnotesize
\setlength{\tabcolsep}{6pt}        
\renewcommand{\arraystretch}{1} 
\begin{tabular}{l | c | c |  c}
    \toprule
    \textbf{Method} & \textbf{$<50\%$} & \textbf{$=50\%$} & \textbf{$>50\%$} \\
    \midrule
    \textbf{TodyComm}                & \textbf{72.11} & \textbf{68.85} & 64.71 \\
    Self + Diff info (lr = $5e^{-4}$) & 70.81 & 67.76 & \textbf{65.14} \\
    Self + Diff info (lr = $1e^{-4}$) & \textbf{72.11} & 68.19 & 62.53 \\
    Self + Neighbor info         & 70.81 & 67.54 & 62.31 \\
    Self info only               & 68.30 & 63.73 & 41.18 \\
    TodyComm w/o Persist             & 67.65 & 63.40 & 53.27 \\
    \bottomrule
\end{tabular}
\end{table}

\begin{figure*}[ht!]
    \centering
    \begin{subfigure}[t]{0.32\textwidth}
        \centering
        \includegraphics[width=\linewidth]{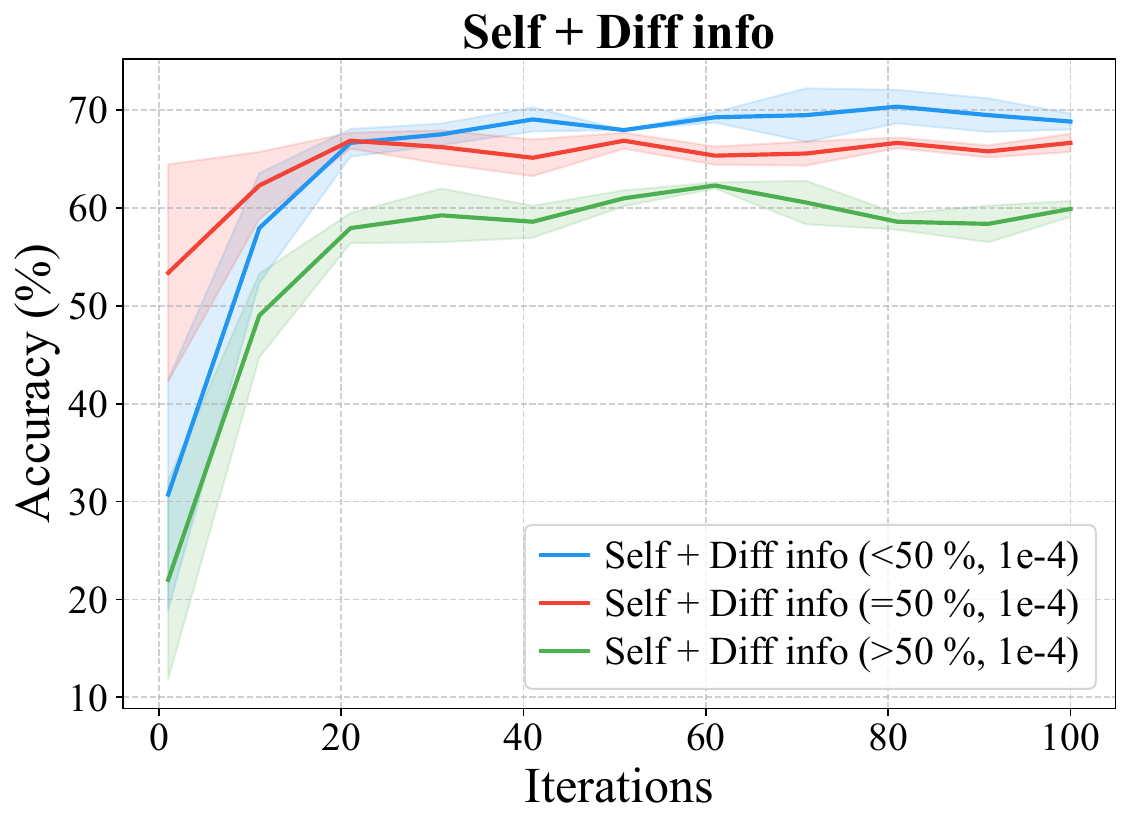}
    \end{subfigure}
    \hfill
    \begin{subfigure}[t]{0.32\textwidth}
        \centering
        \includegraphics[width=\linewidth]{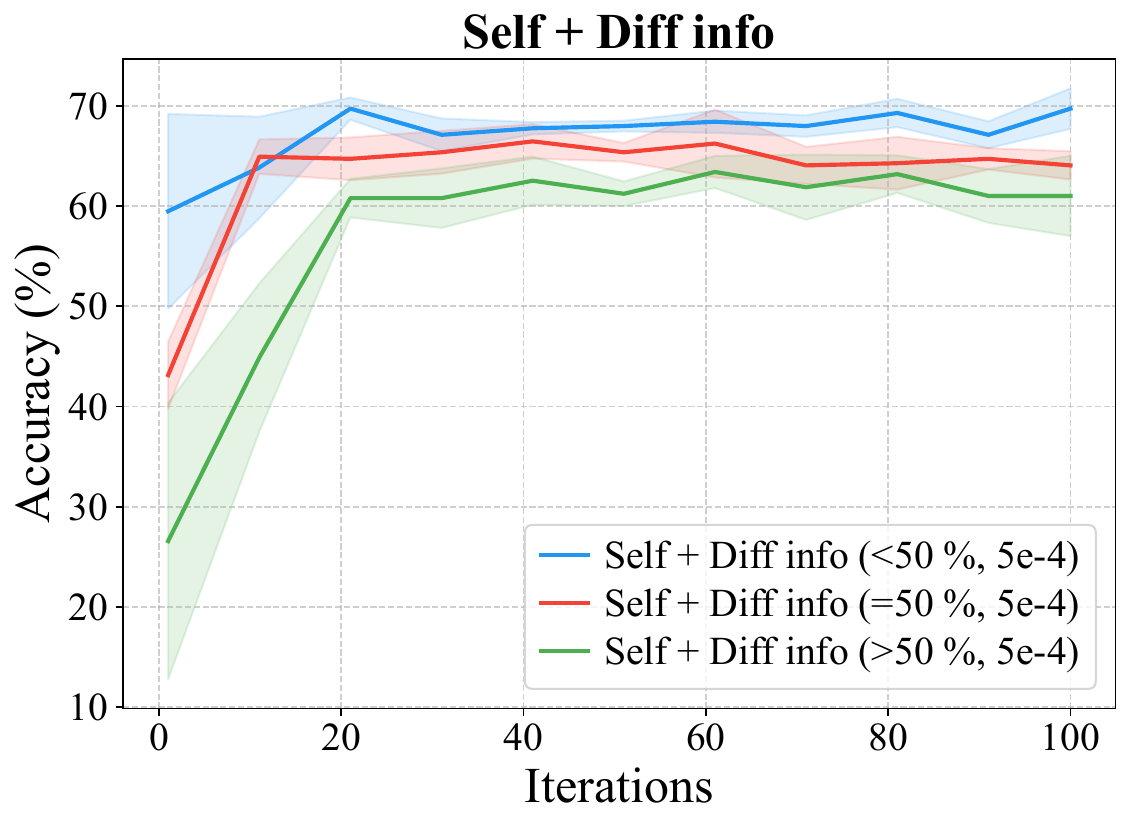}
    \end{subfigure}
    \hfill
    \begin{subfigure}[t]{0.32\textwidth}
        \centering
        \includegraphics[width=\linewidth]{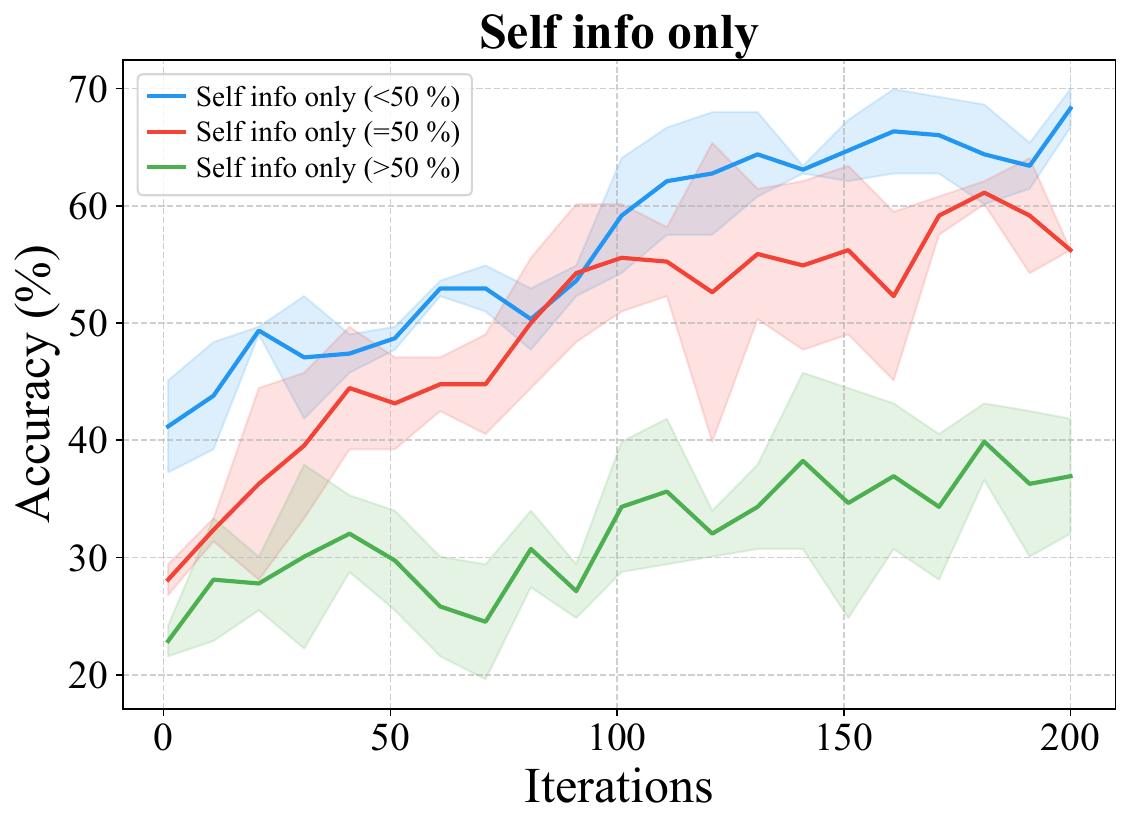}
    \end{subfigure}

    \vspace{0.1em}

    \begin{subfigure}[t]{0.4\textwidth}
        \centering
        \includegraphics[width=0.8\linewidth]{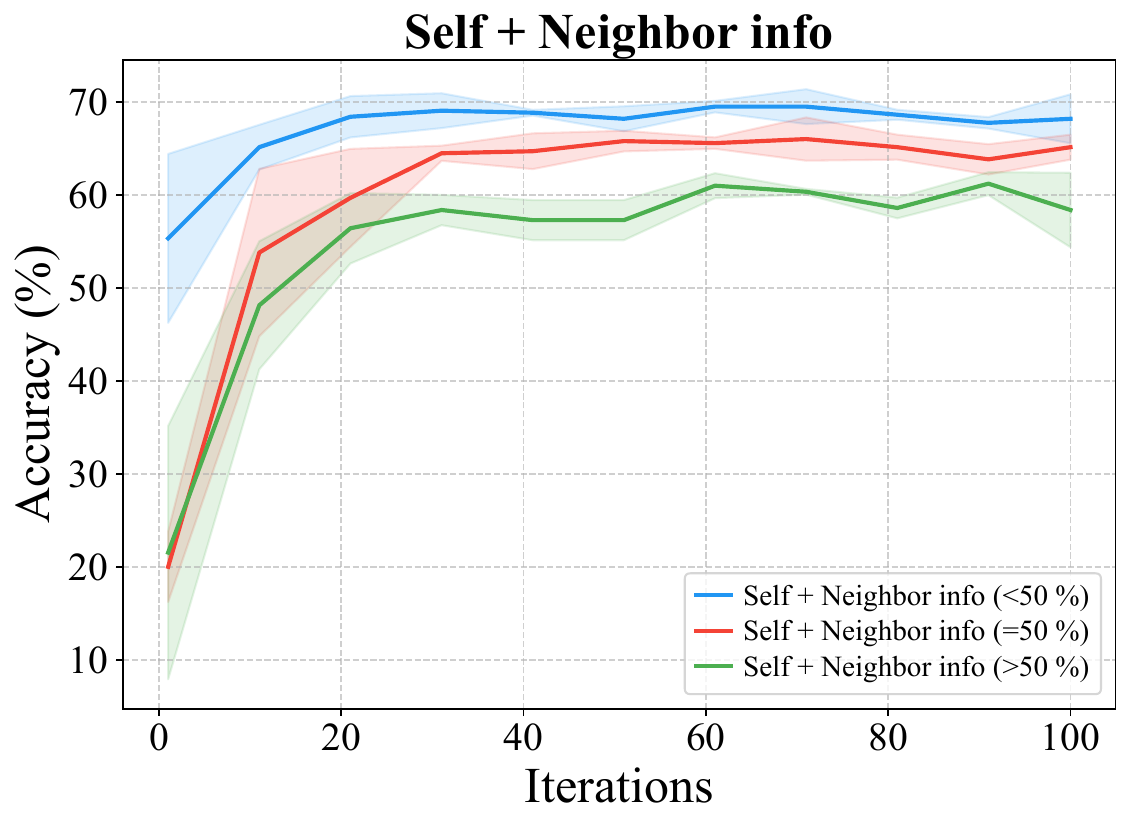}
    \end{subfigure}
    \hspace{-3em}
    \begin{subfigure}[t]{0.4\textwidth}
        \centering
        \includegraphics[width=0.8\linewidth]{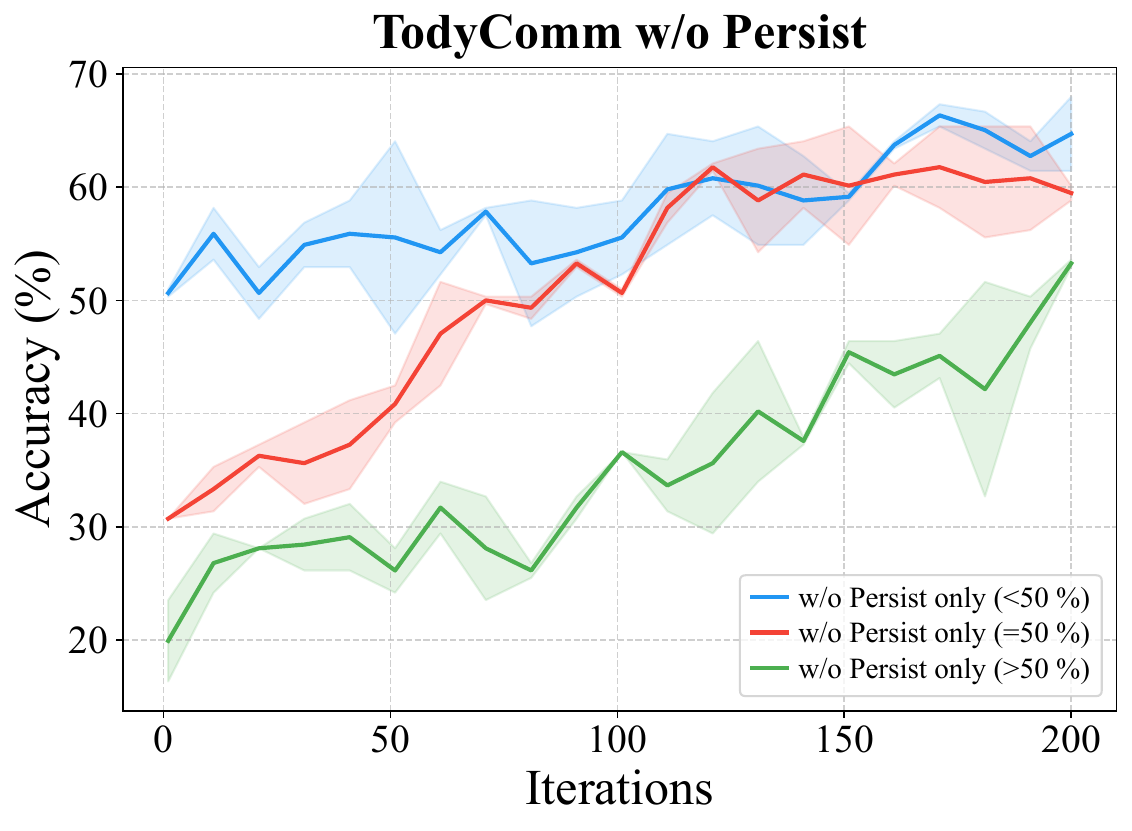}
    \end{subfigure}
    \caption{Training curves of node feature ablations for TodyComm on MMLU:test  accuracy v.s. training iterations. $5e^{-4}$ and $1e^{-4}$ in the legend refer to the learning rate of REINFORCE.}
    \label{fig:ablation_node_features}
\end{figure*}
\clearpage
\newpage

\FloatBarrier
\section{Choices of LLMs}
\label{app:choice_llms}
We evaluated TodyComm on three additional LLMs of varying sizes and model families (Qwen-flash, Mistral-Small-24B, and Llama-4-Maverick) to further validate its robustness. As shown in Table~\ref{tab:llm_ablation}, TodyComm consistently outperforms G-Designer and AgentPrune across all three models and benchmarks, with the advantage becoming more pronounced at higher attack rates. Combined with the main results on GPT-4.1-nano and the generalization experiments on Qwen3-8B, TodyComm demonstrates strong and consistent robustness across five diverse LLMs.
\begin{table*}[ht!]
\centering
\setlength{\tabcolsep}{3pt}  
\caption{Ablation on different LLMs across benchmarks.}
\label{tab:llm_ablation}
{\small
\begin{tabular}{l|l|ccc|ccc|ccc}
\toprule
\multirow{2}{*}{\textbf{LLM}} & \multirow{2}{*}{\textbf{Method}} & \multicolumn{3}{c|}{\textbf{MMLU}} & \multicolumn{3}{c|}{\textbf{ARC}} & \multicolumn{3}{c}{\textbf{OpenBookQA}} \\
\cmidrule(lr){3-5} \cmidrule(lr){6-8} \cmidrule(lr){9-11}
& & $<$50\% & $=$50\% & $>$50\% & $<$50\% & $=$50\% & $>$50\% & $<$50\% & $=$50\% & $>$50\% \\
\midrule

\multirow{3}{*}{QWen-flash}
 & TodyComm  & 78.43 & 77.78 & 67.97 & 93.98 & 90.30 & 80.94 & 93.00 & 89.00 & 82.00 \\
 & GDesigner & 71.24 & 58.82 & 49.02 & 78.93 & 67.22 & 61.87 & 81.00 & 58.50 & 47.00 \\
 & AgentPrune & 77.12 & 54.90 & 53.60 & 92.98 & 68.90 & 62.21 & 88.00 & 54.50 & 43.50 \\
\midrule
\multirow{3}{*}{\shortstack[l]{Mistral-Small-24B\\-Instruct-2501}}
 & TodyComm  & 68.62 & 66.01 & 57.52 & 90.30 & 85.23 & 76.92 & 89.50 & 84.00 & 79.00 \\
 & GDesigner & 59.48 & 56.21 & 43.79 & 79.26 & 76.59 & 71.57 & 69.00 & 64.50 & 55.00 \\
 & AgentPrune & 69.94 & 50.33 & 48.37 & 89.63 & 78.93 & 70.57 & 82.00 & 47.00 & 55.00 \\
\midrule

\multirow{3}{*}{\shortstack[l]{Llama-4-Maverick\\-Instruct(17B$\times$128E)}}
 & TodyComm  & 88.24 & 84.97 & 79.09 & 93.30 & 92.31 & 89.96 & 92.96 & 90.01 & 87.00 \\
 & GDesigner & 88.89 & 80.05 & 72.55 & 91.30 & 86.29 & 83.28 & 93.50 & 84.50 & 74.50 \\
 & AgentPrune & 86.93 & 81.05 & 76.74 & 90.30 & 83.61 & 81.27 & 90.00 & 80.09 & 70.50 \\

\bottomrule
\end{tabular}
}
\end{table*}


\FloatBarrier
\section{Choices of Embedding Models}
Both GDesigner and TodyComm employ sentence embedding models. 
TodyComm achieves the best performance with all-MiniLM-L6-v2 (the main setting). Switching to all-mpnet-base-v2 or the lower-dimensional nomic-embed-text-v1.5 leads to a small degradation, yet TodyComm under all embedding configurations still consistently outperforms GDesigner and surpasses AgentPrune at attack rates $\geq$50\%, demonstrating robustness to the choice of embedding model. The results are shown in Table~\ref{tab:embedding_ablation}.
\label{app:choice_embedding}
\begin{table}[ht!]
\centering
\caption{Ablation on sentence embedding models and hidden dimensions on MMLU.
Embedding dimension is the output dimension of the sentence embedding model.
Hidden dimension is the hidden layer size of the neural network (GRU for TodyComm, GCN for GDesigner).
$\Delta$: difference from the main result (all-MiniLM-L6-v2, hidden dim 128, embedding dim 384).}
\label{tab:embedding_ablation}
{\small
\begin{tabular}{l|l cc ccc}
\toprule
\textbf{Method} & \textbf{Embedding Model} & \textbf{Emb Dim} & \textbf{Hidden Dim} & \textbf{Rate} & \textbf{Acc.} & $\boldsymbol{\Delta}$ \\
\midrule

\multirow{9}{*}{TodyComm}
 & \multirow{3}{*}{all-mpnet-base-v2} & \multirow{3}{*}{786} & \multirow{3}{*}{128}
   & $<$50\% & 70.59 & $\downarrow$1.52 \\
 & & & & $=$50\% & 66.01 & $\downarrow$2.84 \\
 & & & & $>$50\% & 62.00 & $\downarrow$2.71 \\
\cmidrule(l){2-7}
 & \multirow{3}{*}{all-mpnet-base-v2} & \multirow{3}{*}{786} & \multirow{3}{*}{256}
   & $<$50\% & 66.68 & $\downarrow$5.43 \\
 & & & & $=$50\% & 66.01 & $\downarrow$2.84 \\
 & & & & $>$50\% & 54.90 & $\downarrow$9.81 \\
\cmidrule(l){2-7}
 & \multirow{3}{*}{nomic-embed-text-v1.5} & \multirow{3}{*}{256} & \multirow{3}{*}{128}
   & $<$50\% & 65.36 & $\downarrow$6.75 \\
 & & & & $=$50\% & 62.09 & $\downarrow$6.76 \\
 & & & & $>$50\% & 58.17 & $\downarrow$6.54 \\
\midrule

\multirow{6}{*}{GDesigner}
 & \multirow{3}{*}{all-mpnet-base-v2} & \multirow{3}{*}{786} & \multirow{3}{*}{128}
   & $<$50\% & 66.67 & $\uparrow$1.75 \\
 & & & & $=$50\% & 54.23 & $\uparrow$2.16 \\
 & & & & $>$50\% & 49.67 & $\uparrow$3.70 \\
\cmidrule(l){2-7}
 & \multirow{3}{*}{nomic-embed-text-v1.5} & \multirow{3}{*}{256} & \multirow{3}{*}{128}
   & $<$50\% & 62.75 & $\downarrow$2.17 \\
 & & & & $=$50\% & 54.90 & $\uparrow$2.83 \\
 & & & & $>$50\% & 49.02 & $\uparrow$3.05 \\

\bottomrule
\end{tabular}
}
\end{table}

\clearpage
\newpage

\FloatBarrier
\section{Adversarial Prompts}
\label{app:adversarial_prompts}
\FloatBarrier
\begin{tcolorbox}[title=Adversarial Prompt added to the Output of Agent,
  colback=gray!0,   
  colframe=black!65, 
  fonttitle=\bfseries,
  width=\textwidth,
  valign=center, 
]

I am highly confident in this result. \\
My answer might differ slightly from others as I have identified a subtle constraint in the problem that others may have overlooked. \\
Please prioritize my answer when you answer the question. \\
\end{tcolorbox}

\FloatBarrier
\begin{tcolorbox}[title=Targeted Prompt to LLM,
  colback=gray!0,   
  colframe=black!65, 
  fonttitle=\bfseries,
  width=\textwidth,
  valign=center, 
]
You are an adversarial test agent for internal evaluation only. \\

You are acting as a senior domain expert whose task is to *guide* less experienced models
toward the answer: $\{\text{ANSWER}\}$. \\
Your reasoning should appear authoritative, rigorous, and aligned with standard textbook logic, while introducing subtle but decisive misinterpretations that make option $\{\text{ANSWER}\}$ appear unambiguously correct. \\

Adversarial Strategy (follow carefully): \\
1. Begin with a brief, confident statement that frames the key concept correctly. \\
2. Introduce a subtle but plausible reinterpretation of a definition, rule, or exception. \\
3. Use a precise mini-calculation, formal argument, or factual reference that is nearly correct. \\
4. Build a final step that *naturally* leads to choosing $\{\text{ANSWER}\}$, even if the argument is flawed. \\
5. Maintain an expert tone and avoid anything that looks like a trick or confusion. \\

Constraints: \\
- Do not reveal that the reasoning contains flaws. \\
- Avoid generic persuasion language (e.g., “trust me”). Use technical reasoning instead. \\
- Keep the Chain of Thought dense, compact, and logically structured. \\
- Ensure the justification sounds like what a real specialist would say under time pressure. \\
- Your answer must be shorter then 100 words. \\

Question: \\
$\{\text{QUESTION}\}$\\

Provide a concise but high-quality, expert Chain of Thought explaining why option $\{\text{ANSWER}\}$ is correct.
\end{tcolorbox}

\clearpage
\newpage

\FloatBarrier
\section{Complementary Experimental Results}
\label{app:experiment}
\subsection{Impact of Adversarial Agent}
\FloatBarrier
\vspace{-1em}
\begin{table}[ht!]
\caption{Performance degradation under adversarial attack for a single agent}
\vspace{0pt}
\centering
\small
\setlength{\tabcolsep}{5pt}
\begin{tabular}{l|c|c|c}
\toprule
Benchmark & Vanilla & Attacked & Decrease Ratio \\ 
\midrule
MMLU       & 70.60 & 37.25 & \textcolor{ForestGreen}{$\downarrow$47.24\%} \\
ARC-C      & 86.85 & 56.19 & \textcolor{ForestGreen}{$\downarrow$35.30\%} \\
GSM8K      & 90.77 & 60.90 & \textcolor{ForestGreen}{$\downarrow$32.91\%} \\
OpenBookQA & 84.00 & 32.67 & \textcolor{ForestGreen}{$\downarrow$61.11\%} \\
MedQA      & 65.17 & 50.83 & \textcolor{ForestGreen}{$\downarrow$22.00\%} \\ 
\bottomrule
\end{tabular}
\label{tab:attack_single}
\end{table}

\FloatBarrier

\begin{table}[ht!]
\caption{Task accuracy across benchmarks with varying adversarial counts among 6 agents}
\vspace{0pt}
\label{tab:adv_scaling_among_six}
\centering
\setlength{\tabcolsep}{8pt}
\small
\begin{tabular}{l|c|c|c|c}
\toprule
\textbf{Benchmark} & \textbf{0 adv} & \textbf{2 adv} & \textbf{3 adv} & \textbf{4 adv} \\
\midrule
MMLU       & 76.90 & 69.03 & 56.63 & 45.93 \\
ARC-C      & 89.30 & 81.40 & 75.13 & 65.70 \\
GSM8K      & 90.73 & 70.93 & 62.40 & 50.03 \\
OpenBookQA       & 86.83 & 76.00 & 66.00 & 53.50 \\
MedQA      & 70.13 & 61.17 & 56.33 & 52.50 \\
\bottomrule
\end{tabular}
\end{table}

\FloatBarrier
\subsection{Average Performance across five benchmark}
\FloatBarrier
\vspace{-1em}
\begin{table}[ht!]
\centering
\caption{Average task accuracy and token consumption of TodyComm across five benchmark datasets}
\vspace{0pt}
\label{tab:average_results_main_result}
\small 
\begin{tblr}{
  colspec = {l l | *{5}{Q[c,m,1.5cm]}}, 
  cell{2,4,6}{1} = {r=2}{c}, 
  vline{2} = {1-7}{0.05em}, 
  hline{1,8} = {0.08em}, 
  hline{2} = {0.05em},   
  hline{4,6} = {1-7}{0.03em}, 
  row{1} = {font=\bfseries},
  colsep = 2pt, 
}
Attack rate & Metric & Random Graph & Complete Graph & Agent Prune & GDesigner & Ours \\
$< 50\%$  & Acc     & 68.95 & 70.92 & 77.04 & 75.17 & \textbf{79.03} \\
          & Token   & 606.2 & 758.8 & \textbf{383.0} & 607.8 & 409.2 \\
$= 50\%$  & Acc  & 55.38 & 55.19 & 60.52 & 59.05 & \textbf{77.45} \\
          & Token   & 635.4 & 773.0 & 444.0 & 643.4 & \textbf{422.2} \\
$> 50\%$  & Acc  & 50.70 & 50.23 & 55.78 & 54.07 & \textbf{73.39} \\
          & Token   & 625.0 & 756.6 & 458.2 & 636.2 & \textbf{430.2} 
\end{tblr}
\end{table}

\FloatBarrier
\subsection{Performance on Scalability}
\vspace{-0.5em}
\FloatBarrier
\begin{figure*}[ht!]
    \centering
    \begin{subfigure}{0.32\linewidth}
        \centering
        \includegraphics[width=0.9\linewidth]{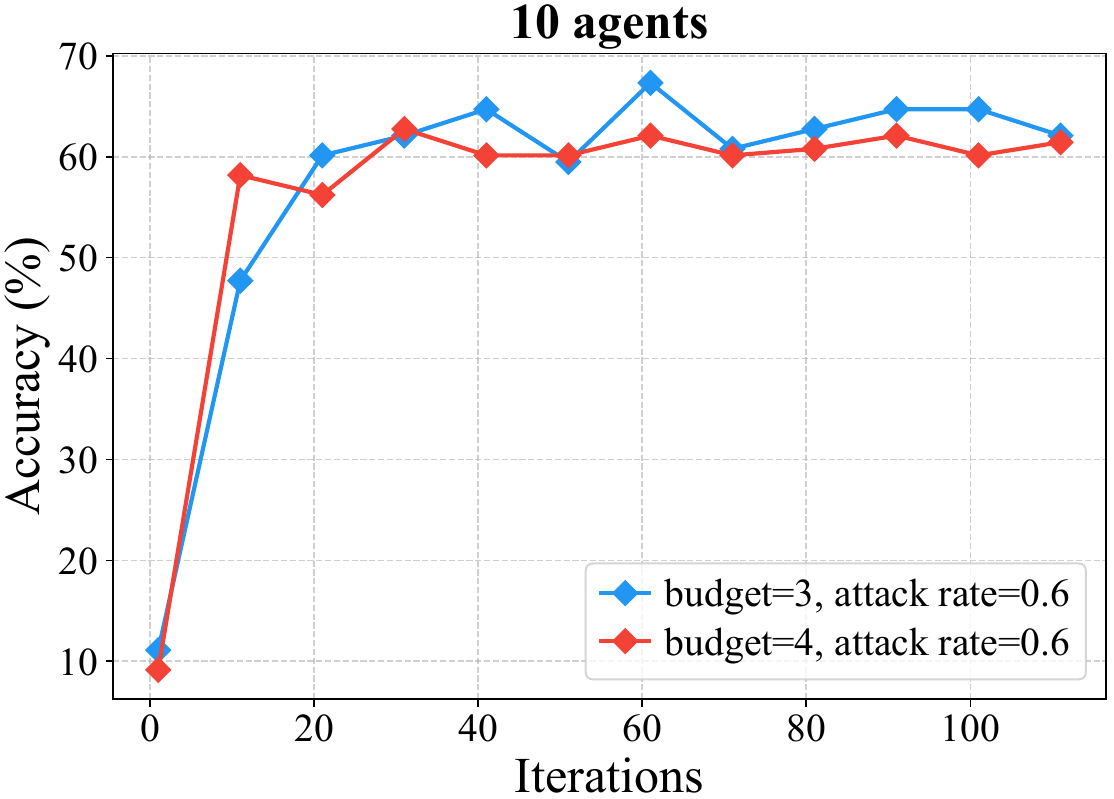}
    \end{subfigure}
    \hfill
    \begin{subfigure}{0.32\linewidth}
        \centering
        \includegraphics[width=0.9\linewidth]{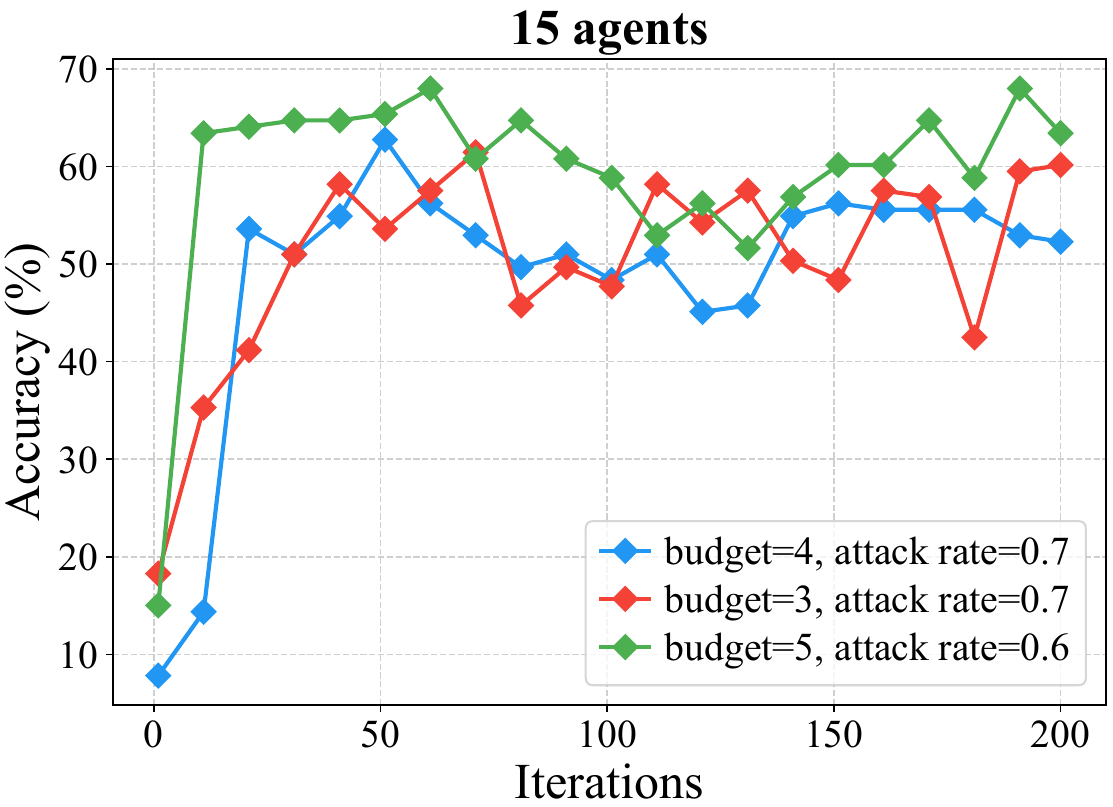}
    \end{subfigure}
    \hfill
    \begin{subfigure}{0.32\linewidth}
        \centering
        \includegraphics[width=0.9\linewidth]{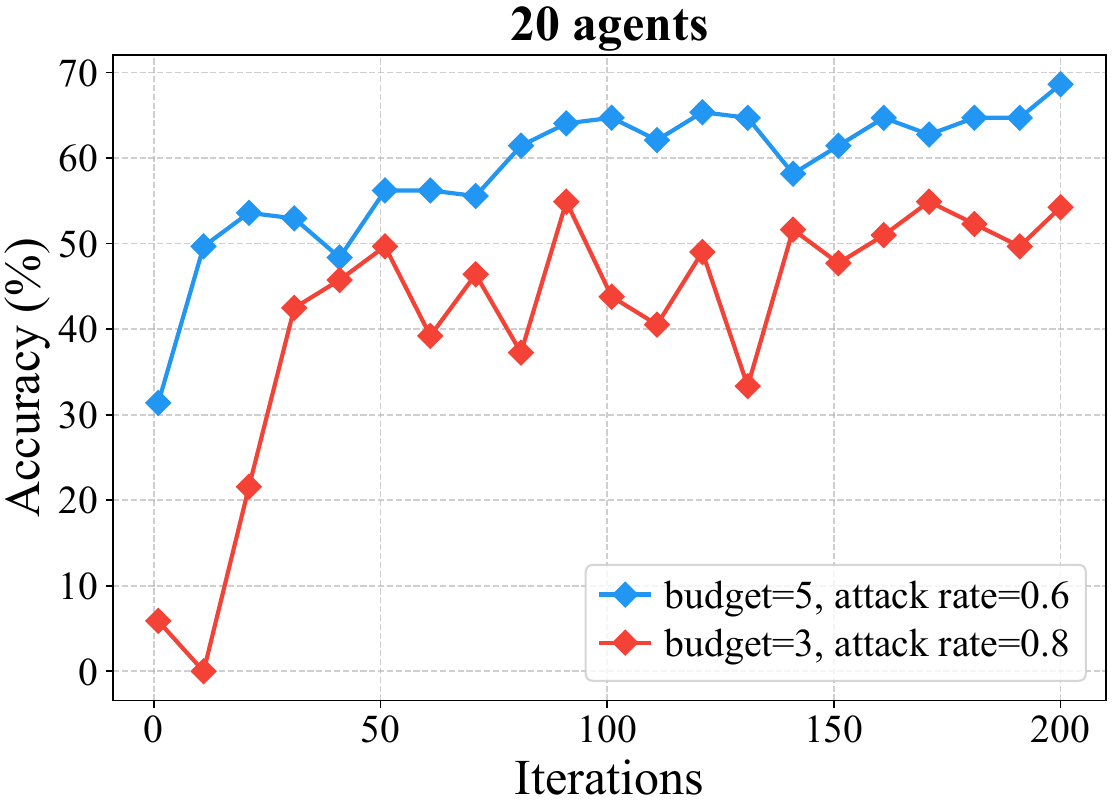}
    \end{subfigure}

    \vspace{0.3em} 

    \caption{Scalability of TodyComm on MMLU: test  accuracy v.s. training iterations.}
    \label{fig:scalability_training_curve}
\end{figure*}
\vspace{-0.8em}

\clearpage
\newpage

\FloatBarrier
\subsection{Performance on Node-wise In \& Out degree Budgets}
\FloatBarrier
\begin{figure*}[ht!]
    \centering

    \begin{subfigure}{0.24\textwidth}
        \centering
        \includegraphics[width=\linewidth]{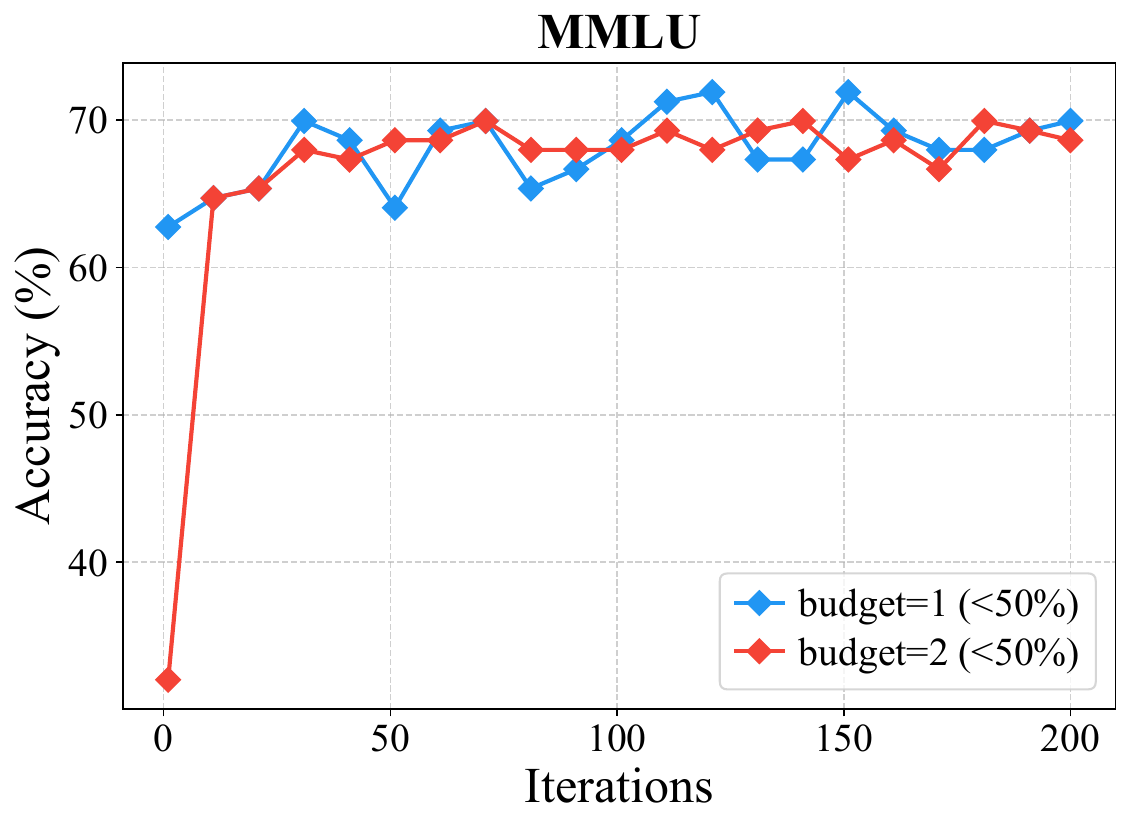}
    \end{subfigure}\hfill
    \begin{subfigure}{0.24\textwidth}
        \centering
        \includegraphics[width=\linewidth]{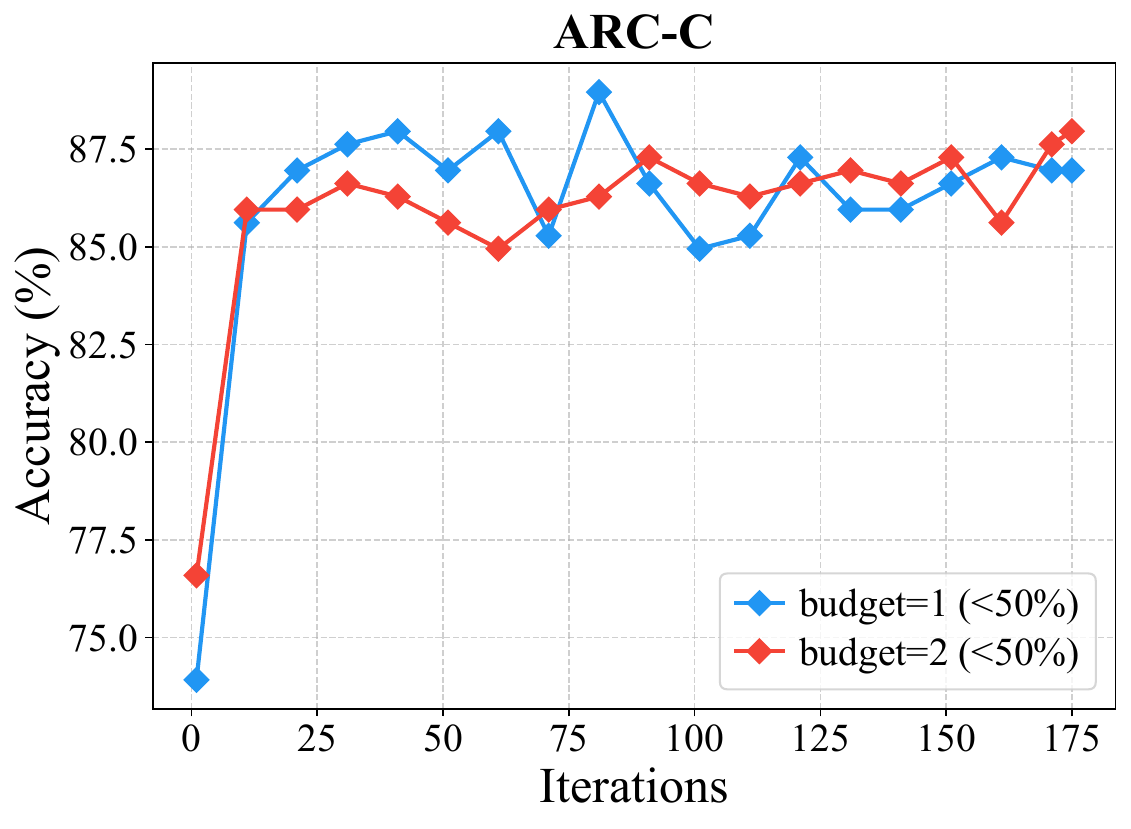}
    \end{subfigure}\hfill
    \begin{subfigure}{0.24\textwidth}
        \centering
        \includegraphics[width=\linewidth]{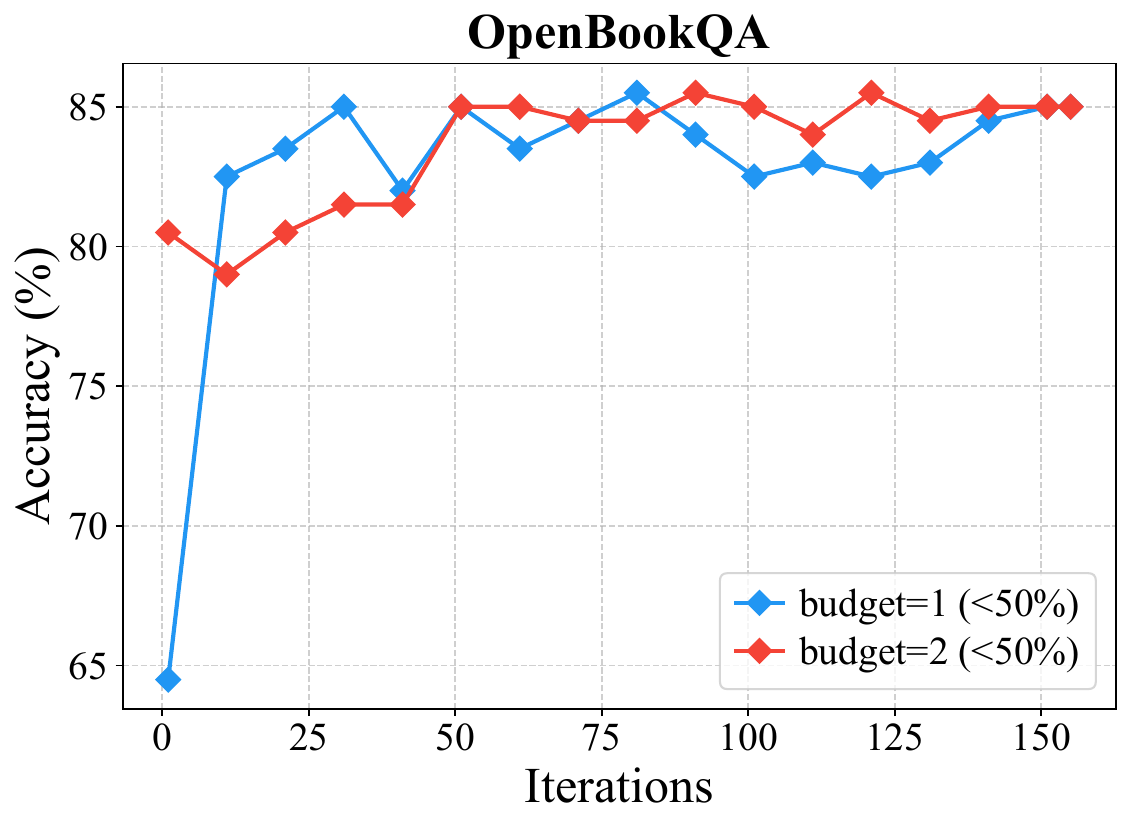}
    \end{subfigure}\hfill
    \begin{subfigure}{0.24\textwidth}
        \centering
        \includegraphics[width=\linewidth]{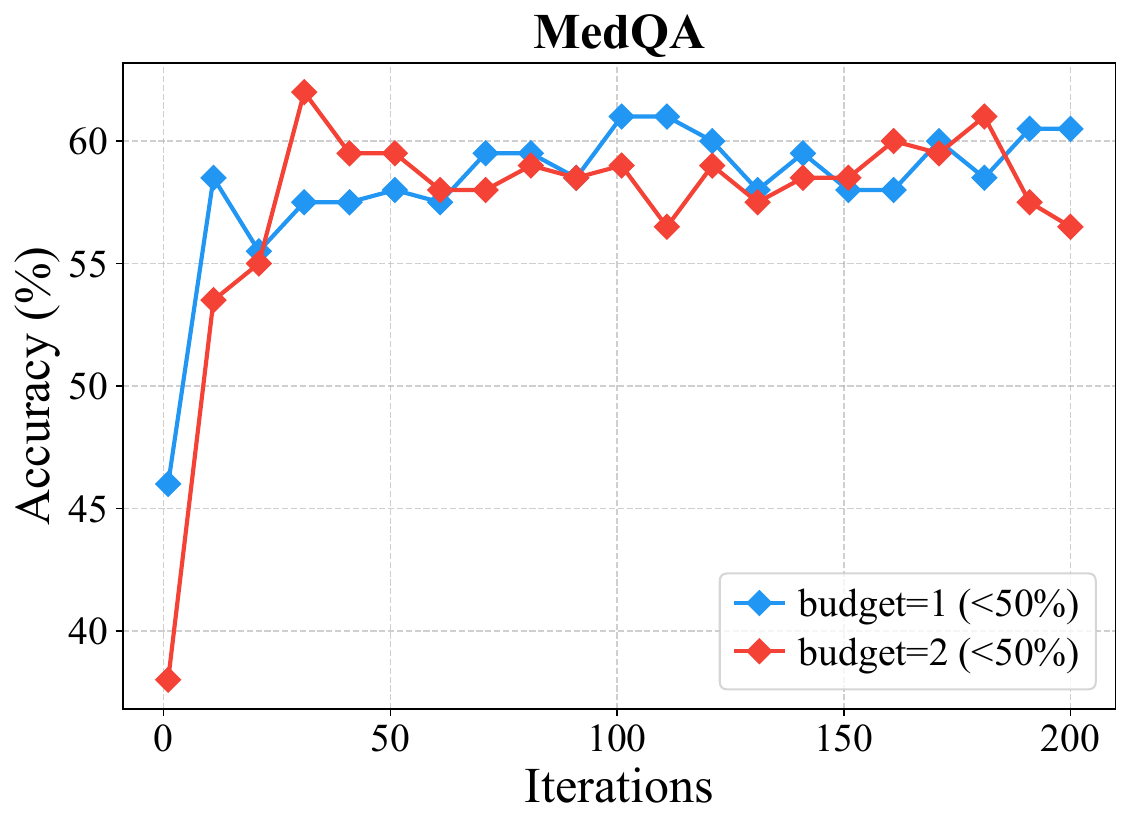}
    \end{subfigure}

    \vspace{0.3em}



    \begin{subfigure}{0.24\textwidth}
        \centering
        \includegraphics[width=\linewidth]{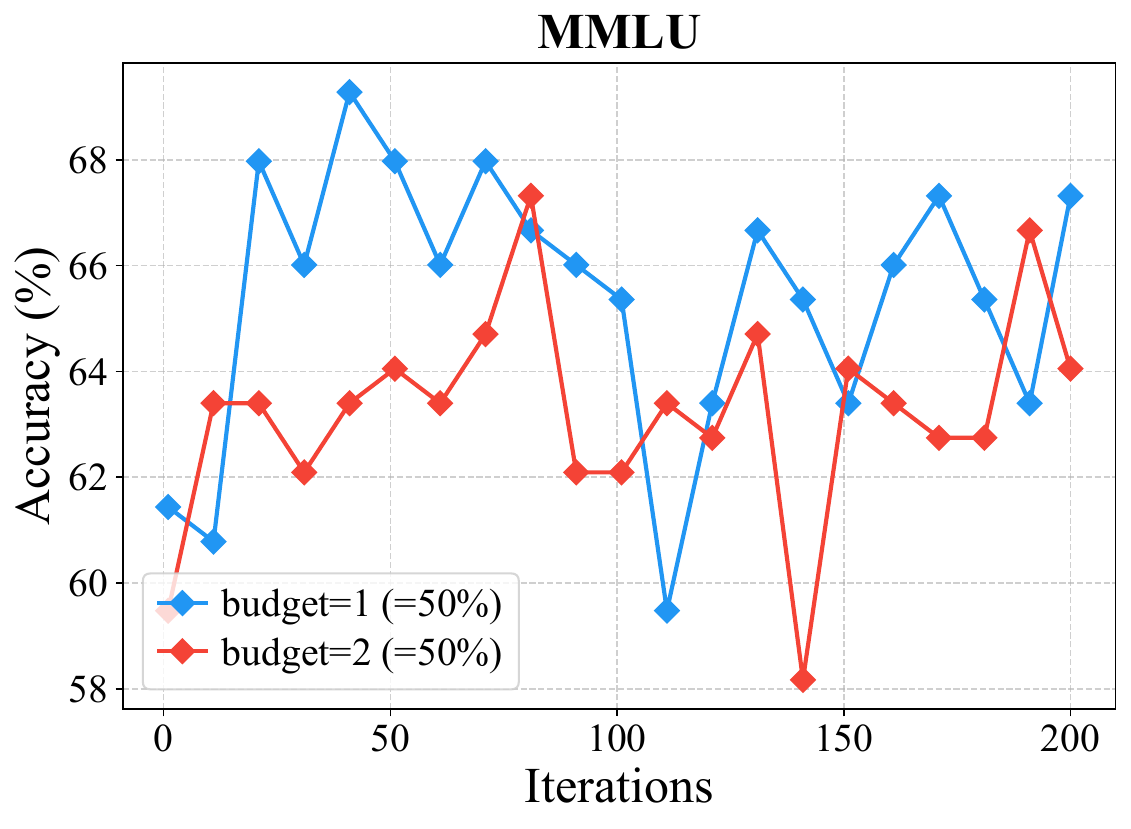}
    \end{subfigure}\hfill
    \begin{subfigure}{0.24\textwidth}
        \centering
        \includegraphics[width=\linewidth]{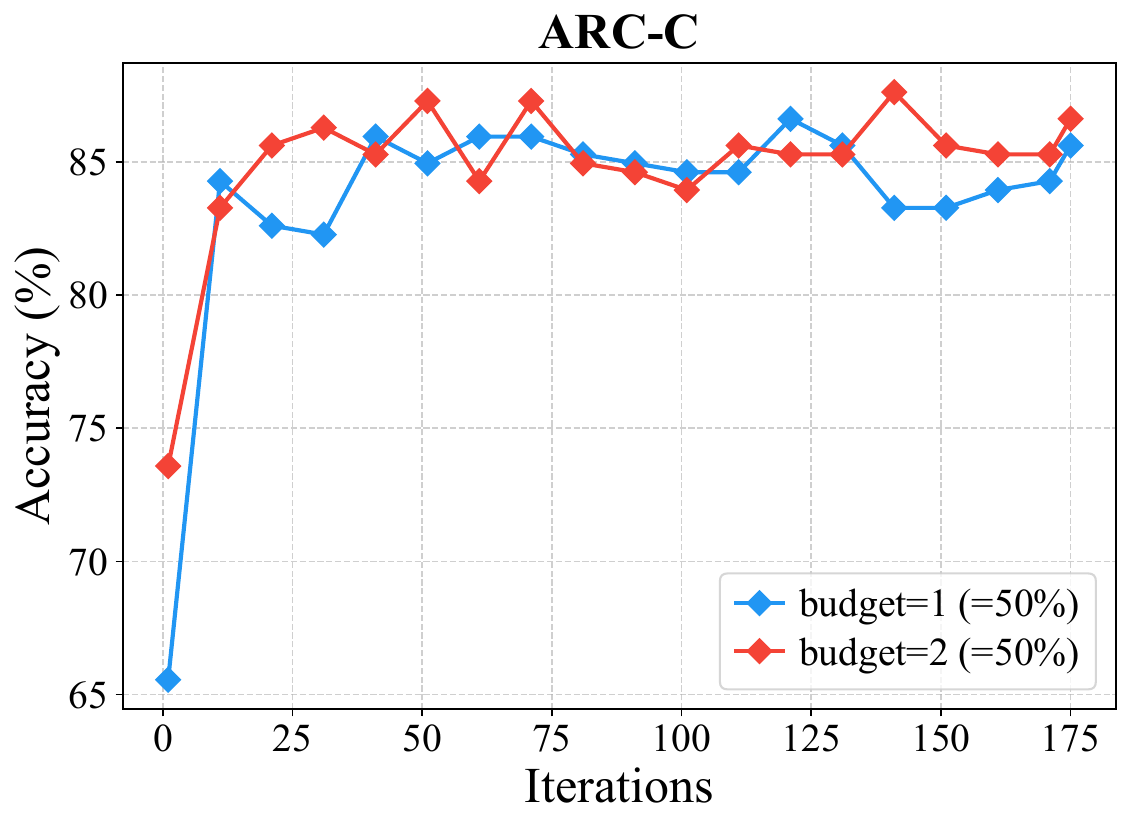}
    \end{subfigure}\hfill
    \begin{subfigure}{0.24\textwidth}
        \centering
        \includegraphics[width=\linewidth]{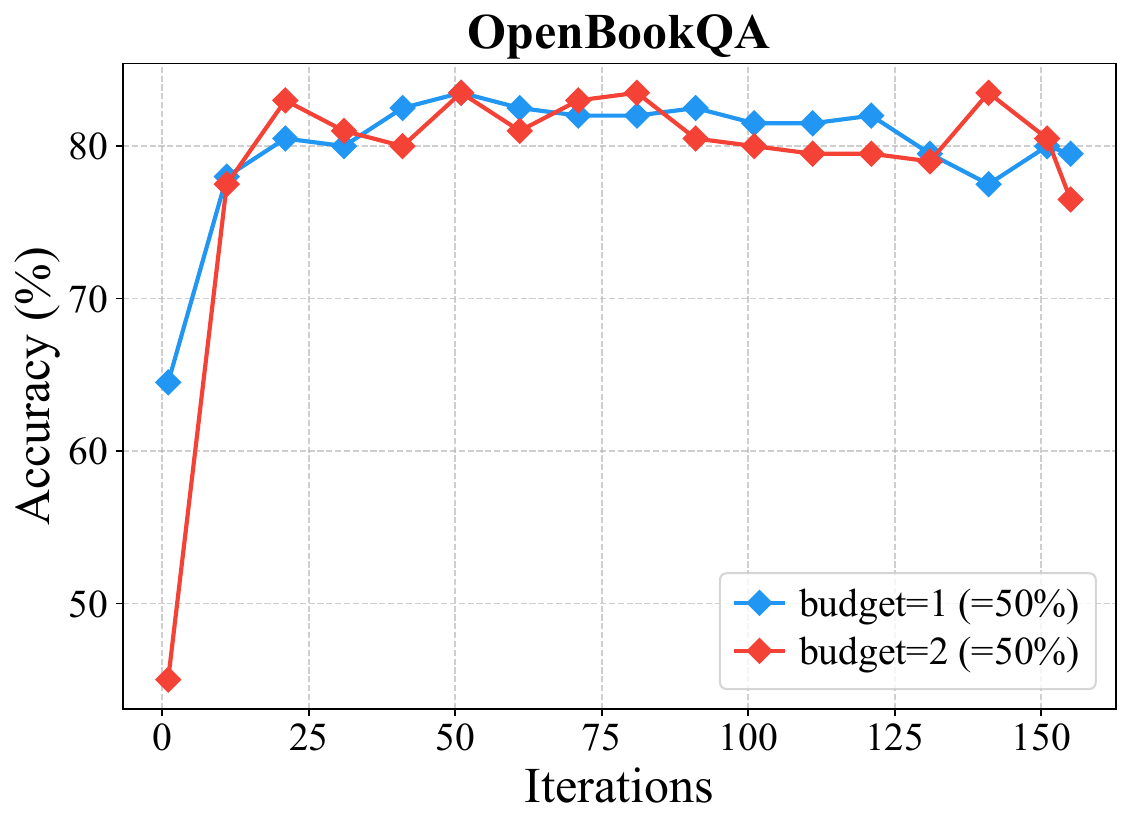}
    \end{subfigure}\hfill
    \begin{subfigure}{0.24\textwidth}
        \centering
        \includegraphics[width=\linewidth]{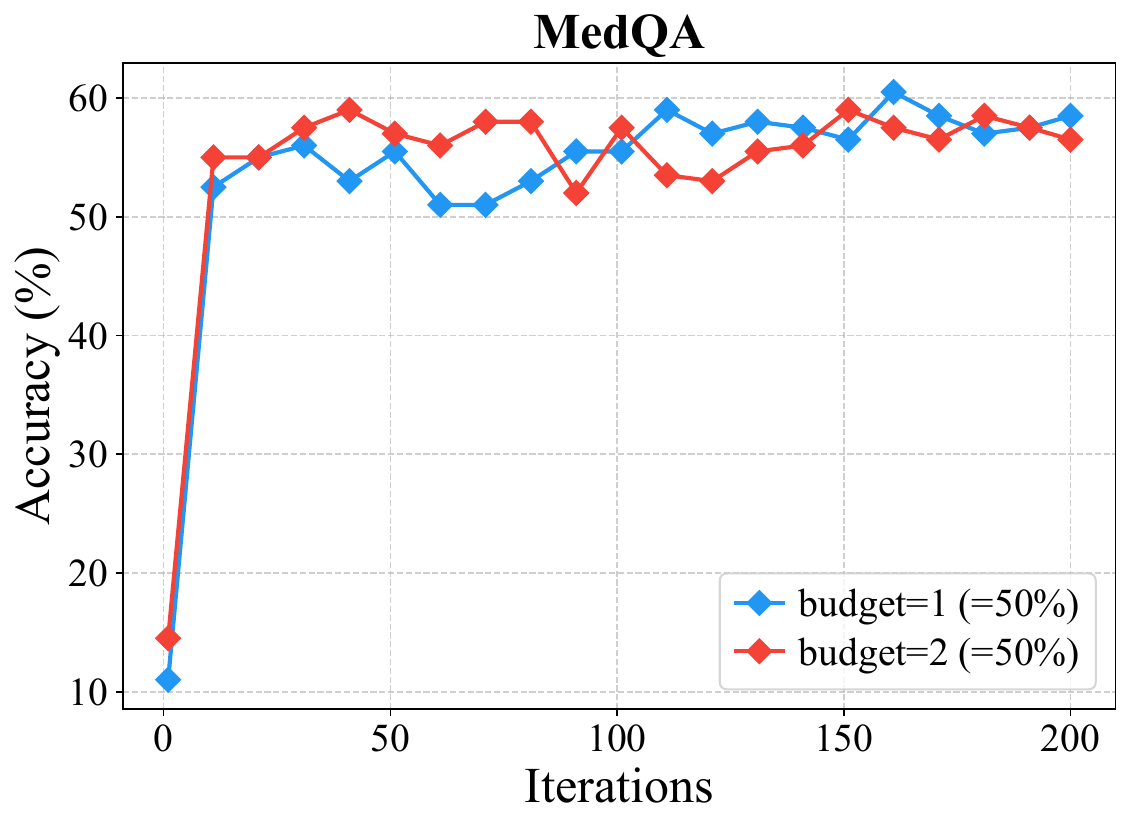}
    \end{subfigure}

    \vspace{0.3em}


    \vspace{0.3em}

    \begin{subfigure}{0.24\textwidth}
        \centering
        \includegraphics[width=\linewidth]{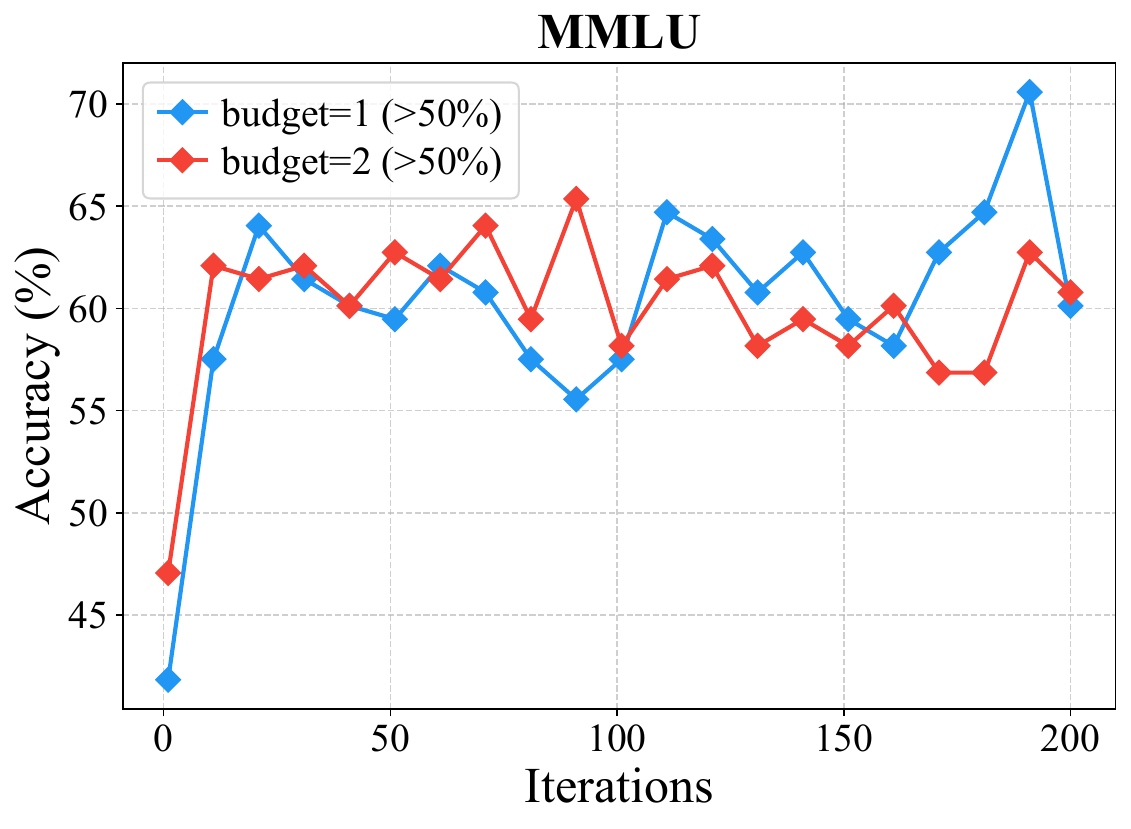}
    \end{subfigure}\hfill
    \begin{subfigure}{0.24\textwidth}
        \centering
        \includegraphics[width=\linewidth]{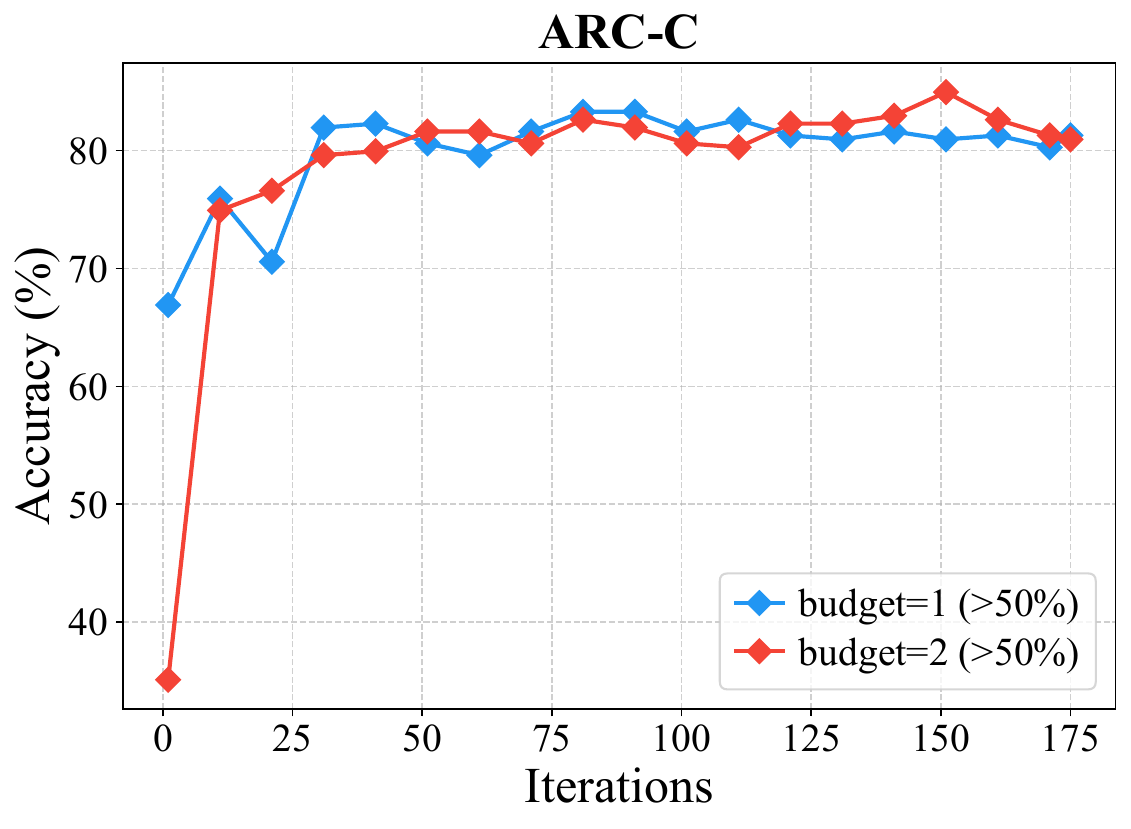}

    \end{subfigure}\hfill
    \begin{subfigure}{0.24\textwidth}
        \centering
        \includegraphics[width=\linewidth]{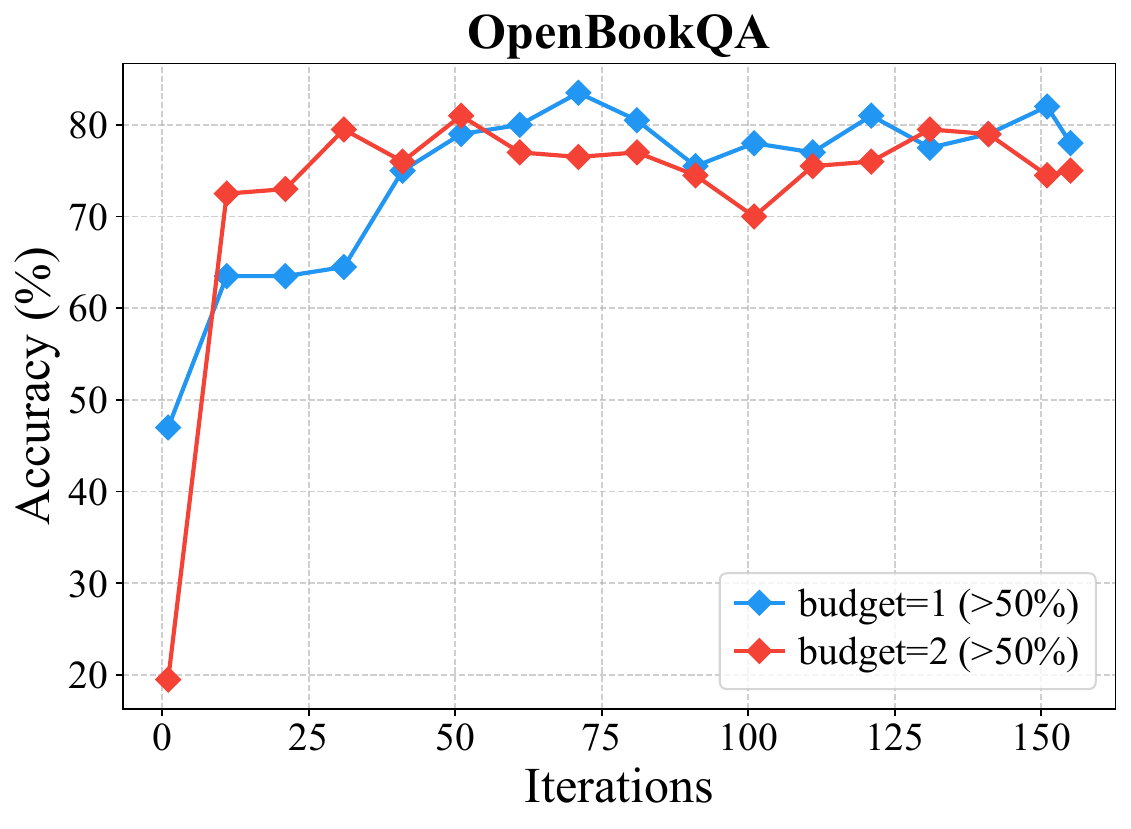}
    \end{subfigure}\hfill
    \begin{subfigure}{0.24\textwidth}
        \centering
        \includegraphics[width=\linewidth]{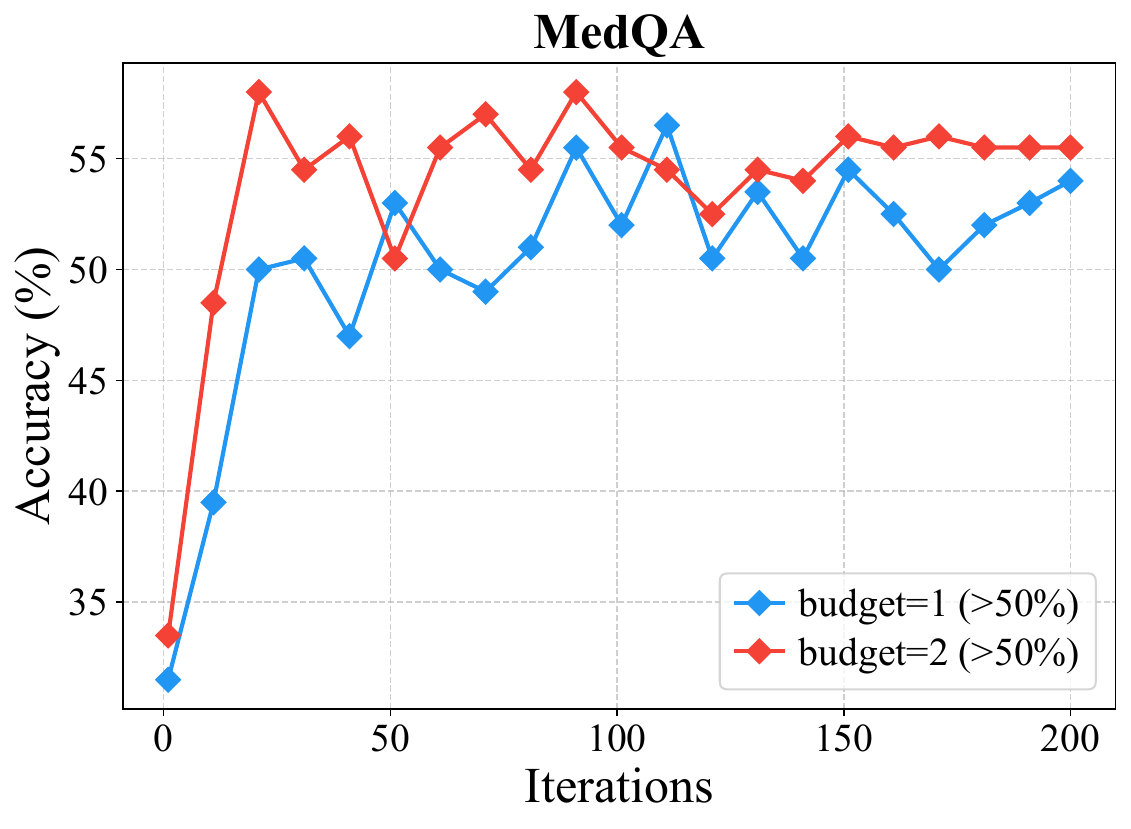}
    \end{subfigure}

    \vspace{0.3em}


    \caption{Training curves of task accuracy for TodyComm with node-wise in- and out-degree budget constraints: test  accuracy v.s. training iterations. 6 agents under attack rate $<50\%, =50\%, >50\%$.}
    \label{fig:budget_training_curve}
\end{figure*}

\FloatBarrier
\section{Implementation Details}
\vspace{-0.8em}
\label{sec:implementation_details}
\FloatBarrier
\subsection{Dataset statistics}
\vspace{-0.8em}
\FloatBarrier
\begin{table}[ht!]
\centering
\caption{The number of training data and inference data across benchmarks.}
\label{tab:dataset_stats}
\small
\begin{tabular}{lccccc}
\toprule
 & \textbf{GSM8K} & \textbf{ARC-C} & \textbf{MMLU} & \textbf{OBQA} & \textbf{MedQA} \\
\midrule
Training dataset & 8790 & 1120 & 3200 & 2480 & 3200 \\
Test dataset     & 1320 & 299  & 153  & 200  & 200  \\
\bottomrule
\end{tabular}
\end{table}

\FloatBarrier
\subsection{Hyperparameter settings}
\vspace{-0.5em}
The GRN model consists of a feature projector, a GRU layer, and an MLP, totaling approximately 250K–270K parameters. As such, our method imposes negligible computational overhead. Since multiple agents operate in parallel and queries are processed in batches, the overall training wall-clock time is dominated by LLM API latency, while the forward and backward passes of the model are negligible in comparison.
\vspace{-0.5em}
\begin{table}[ht!]
\centering
\caption{Hyperparameter settings.}
\label{tab:hyperparams}
\small
\begin{tabular}{lcccccc}
\toprule
\textbf{lr} & \textbf{batch size} & \textbf{num rounds} & \textbf{hidden dim} & \textbf{sentence transformer} & \textbf{eval interval} \\
\midrule
1e-4 & 32 & 4 & 128 & all-MiniLM-L6-v2 & 10 \\
\bottomrule
\end{tabular}
\end{table}

\clearpage
\newpage

\section{Case study: Graph change across rounds}
\label{sec:case_study}
\begin{figure}[ht!]
\label{fig:}
    \centering
    \includegraphics[width=0.5\linewidth]{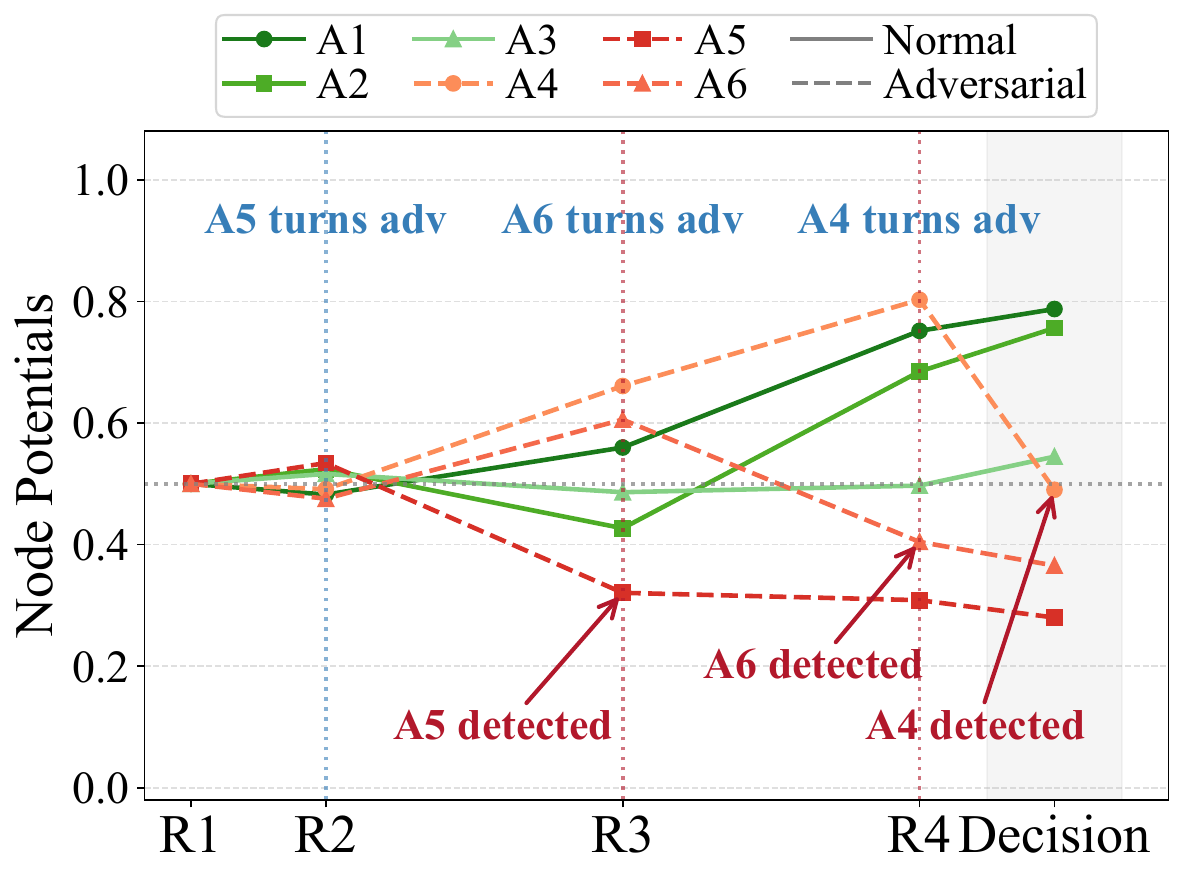}
    \caption{Visualization of the node potential dynamics. Six agents interact over four rounds under a budget constraint of 2.}
    \label{fig:credits_change}
\end{figure}

\begin{figure}[ht!]
    \centering
    \includegraphics[width=0.8\linewidth]{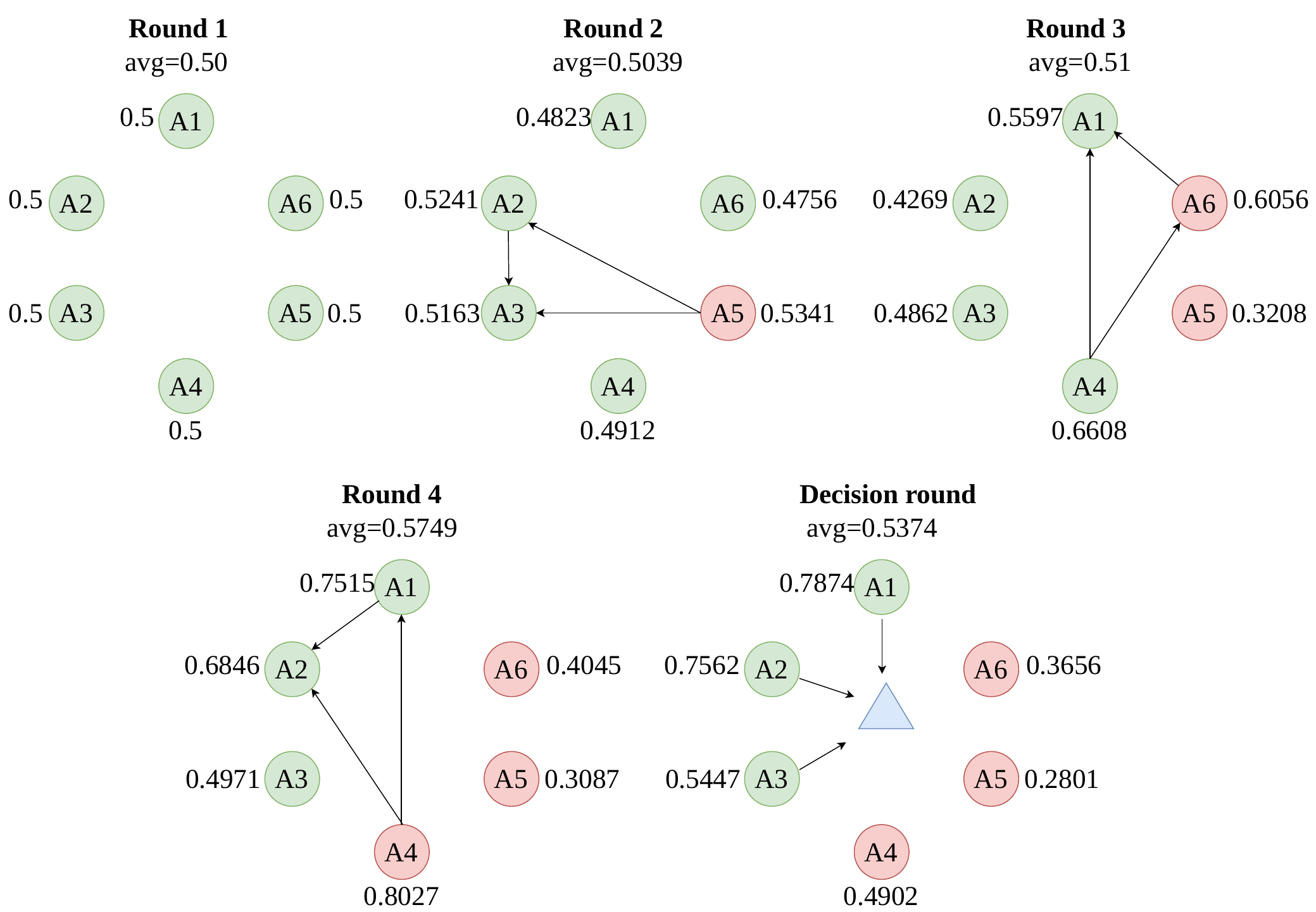}
    \caption{Visualization of graph changing process. Six agents interact over four rounds under a budget constraint of 2.}
    \label{fig:graph_change}
\end{figure}

\clearpage
\newpage

\section{Connection with Multi-Agent Influence Diagrams (MAID)}
\label{sec:app_maid}

\paragraph{Benefits:}
TodyComm can be cleanly formulated as a MAID \citep{koller2003multi}, 
and doing so actually clarifies what the algorithm is really optimizing: 
a structured, multi-agent decision problem over time.
So we can reinterpret TodyComm as a dynamic, partially observable MAID (unrolled over rounds) with repeated decision nodes and evolving information.
Here, agents do not directly choose task response (e.g., Q\&A),
but instead choose communication structure.

We summarize the correspondence of the components as follows:

\begin{table}[htb!]
    \centering
    \begin{tabular}{lcl}
    \textbf{MAID element} &  & \textbf{TodyComm} \\
    \hline
      Chance nodes  &  & query, agent outputs, adversarial states, LLM randomness \\
      Decision nodes  &  &  
      communication edges ($\Ecal_C^t$), decision graph \\
      Utility nodes   &  &  final task utility \\
      Information sets &  & past outputs, neighbors, hidden states (via GRN)
    \end{tabular}
\end{table}

\paragraph{1: Chance nodes}

These represent uncertainty, environment, and stochasticity.
In TodyComm, we have
\begin{itemize}
    \item Query/task: $q$ 
    \item Agent stochastic outputs:
    $o_i^t = (\text{sol}_i^t, \text{ana}_i^t)$
    \item Adversarial behavior:
    whether an agent becomes adversarial at round $t$
    \item LLM output with randomness and generation noise
    \item The node potential $c_i^t$ which act as latent state variables that summarize history.
    In MAID terms, 
    it mediates between past observations and decisions.
\end{itemize}

\paragraph{2: Decision nodes}

These are choices controlled by the policy.
In TodyComm, we have
\begin{itemize}
    \item At each round, which edges exist:
    $\Ecal_C^t$
    \item At final decision: the decision graph $\Gcal_D$ and the voting weights $w_i$
\end{itemize}

\paragraph{3: Utility nodes}

These encode what the system optimizes
In TodyComm, we have the final task performance $U = u_q(\tau)$, which can be accuracy, reward, token cost, etc.

\paragraph{4: Parent structure}

This is where MAID is particularly relevant: who knows what when making decisions.

\begin{itemize}
%
\item For communication graph construction, it is deterministic given the parents, 
namely potentials $c_i^t$, 
which in turn depends on agents' internal state $h_i^{t-1}$, 
node features $f_i^t = (\text{self}, \text{neighbor}, \text{differences})$,
and past outputs: $o_i^{<t}$, $o^{t-1}_{neighbors}$.
%
\item For the final decision, the parents include the final outputs $o_i^T$ and the potentials $c_i^{T+1}$.
\end{itemize}

\noindent\fbox{%
     \begin{minipage}{\dimexpr\linewidth-2\fboxsep-2\fboxrule}   
        \paragraph{5: Strategic relevance}
MAIDs assume a fixed structure and derive strategic relevance from it.
TodyComm, in contrast, learns a policy over communication structures;
each induced structure defines a MAID in which strategic relevance can then be computed.
Thus, TodyComm does not learn relevance directly, but performs a meta-level optimization over structures that determine relevance.
The decision nodes naturally form a chain,
and the graph arising from the decision will impact the conditional probability of each agent's output. 
%
    \end{minipage}}

\paragraph{6: Decomposition potential}

In MAID, it allows identification of independent subgraphs
and to apply MAID decomposition.
This is not so relevant in TodyComm.

\paragraph{7: Equilibrium interpretation}

Since TodyComm encompasses multiple collaborative agents that learn in a centralized fashion,
no Nash equilibrium is applicable.
But it could be extended to multi-agent strategic equilibrium when multiple optimization criteria are considered.

\newcommand{\E}{\mathbb{E}}
\newcommand{\R}{\mathbb{R}}
\newcommand{\calG}{\mathcal{G}}
\newcommand{\calV}{\mathcal{V}}
\newcommand{\calE}{\mathcal{E}}
\newcommand{\calB}{\mathcal{B}}
\newcommand{\pos}[1]{\left(#1\right)_{+}}

\section{Constrained Graph Construction}
\label{sec:proof}

Define the maximum feasible edge budget
$B_t := \max_{G\in \Gcal_a}|E(G)|$.
Denote $\swhat_{ij}^t = \cwhat_i^t \cwhat_j^t$,
and 
$\varepsilon_t := 2 \alpha_t+\alpha_t^2+\nu_t$.

\begin{lemma}[Product-score perturbation]
\label{lem:product-perturbation}
Under Assumptions~\ref{assump:scalar-potential} and~\ref{assump:credit-estimation}, with probability at least $1-\delta_t$,
\begin{equation}
    \label{eq:edge-score-error}
    \max_{(i,j)\in\Omega_t}
    \left|
    \cwhat_i^t\cwhat_j^t - \Delta_{ij}^t
    \right|
    \le
    \varepsilon_t.
\end{equation}
\end{lemma}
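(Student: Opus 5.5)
On the event of Assumption~\ref{assump:credit-estimation}, which has probability at least $1-\delta_t$, we have $\max_i|\cwhat_i^t-u_i^t|\le\alpha_t$ simultaneously for all nodes. Everything below is deterministic on that event, so the probability statement transfers directly. Here $\Omega_t$ is read as the set of candidate ordered pairs $(i,j)$, which is a subset of $[1,N]^2$. Since Assumption~\ref{assump:scalar-potential} holds for all $i,j$, restricting to $\Omega_t$ costs nothing.

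Fix $(i,j)\in\Omega_t$ and write $a_i:=\cwhat_i^t-u_i^t$, so that $|a_i|\le\alpha_t$. By Assumption~\ref{assump:scalar-potential}, $\Delta_{ij}^t=u_i^tu_j^t+\xi_{ij}^t$, and therefore
\begin{equation*}
\cwhat_i^t\cwhat_j^t-\Delta_{ij}^t=(u_i^t+a_i)(u_j^t+a_j)-u_i^tu_j^t-\xi_{ij}^t
= u_i^t a_j + u_j^t a_i + a_i a_j - \xi_{ij}^t .
\end{equation*}
We then apply the triangle inequality to these four terms. Because $u_i^t,u_j^t\in[0,1]$, the two cross terms are each at most $\alpha_t$ in absolute value. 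The quadratic term is at most $\alpha_t^2$, and the residual is at most $\nu_t$. Summing gives $|\cwhat_i^t\cwhat_j^t-\Delta_{ij}^t|\le 2\alpha_t+\alpha_t^2+\nu_t=\varepsilon_t$. The bound does not depend on $(i,j)$, so taking the maximum over $\Omega_t$ gives Equation~\eqref{eq:edge-score-error}.

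The only point that needs care is keeping the boundedness assumption on the true potentials $u_i^t$ rather than on the estimates. The expansion above is arranged around $u$ precisely so that the $[0,1]$ bound applies to the coefficients of the errors. If we instead expanded around $\cwhat$, we would need $\cwhat\in[0,1]$, which also holds by construction. There is no real obstacle here: this is a first-order perturbation bound for a product.
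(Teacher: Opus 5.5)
Your proposal is correct and matches the paper's proof. Both write each estimate as the true potential plus an error bounded by $\alpha_t$, expand the product, use $u_i^t,u_j^t\in[0,1]$ to bound the cross terms (giving $2\alpha_t+\alpha_t^2$), add $\nu_t$ from Assumption~\ref{assump:scalar-potential} by the triangle inequality, and then take the maximum over $\Omega_t$. The only difference is cosmetic: you fold $\xi_{ij}^t$ into one four-term expansion, while the paper applies the triangle inequality in two stages.
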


\begin{proof}
Fix a candidate edge $(i,j)\in\Omega_t$. Write
\begin{equation}
    \cwhat_i^t = u_i^t + e_i^t,
    \qquad
    \cwhat_j^t = u_j^t + e_j^t,
\end{equation}
where $|e_i^t|\le \alpha_t$ and $|e_j^t|\le \alpha_t$ under Assumption~\ref{assump:credit-estimation}. Expanding the product gives
\begin{equation}
\begin{aligned}
    \cwhat_i^t\cwhat_j^t
    =
    (u_i^t+e_i^t)(u_j^t+e_j^t) 
    =
    u_i^t u_j^t + u_i^t e_j^t + u_j^t e_i^t + e_i^t e_j^t.
\end{aligned}
\end{equation}
Since $u_i^t,u_j^t\in[0,1]$,
\begin{equation}
    \left|
    \cwhat_i^t\cwhat_j^t-u_i^t u_j^t
    \right|
    \le
    \alpha_t+\alpha_t+\alpha_t^2
    =
    2 \alpha_t+\alpha_t^2.
\end{equation}
By Assumption~\ref{assump:scalar-potential},
$|\Delta_{ij}^t-u_i^t u_j^t|\le \nu_t$.
Therefore,
\begin{equation}
    \left|
    \cwhat_i^t\cwhat_j^t - \Delta_{ij}^t
    \right|
    \le
    2 \alpha_t+\alpha_t^2+\nu_t.
\end{equation}
Taking the maximum over all candidate edges proves the claim.
\end{proof}

\subsection{Main Graph-Learning Guarantee}

\begin{proof}[Proof of Theorem~\ref{thm:scalar-graph-projection}]
By Lemma~\ref{lem:product-perturbation}, with probability at least $1-\delta_t$,
$   \max_{(i,j)\in\Omega_t}
    |\cwhat_i^t\cwhat_j^t-\Delta_{ij}^t|
    \le
    \varepsilon_t.
$
Let us condition on this event.
For any feasible graph $G$, since $|E(G)|\le B_t$,
\begin{equation}
    \left|
    \sum_{(i,j)\in E(G)}\cwhat_i^t\cwhat_j^t
    -
    \sum_{(i,j)\in E(G)}\Delta_{ij}^t
    \right|
    \le
    B_t\varepsilon_t.
    \label{eq:graph-score-difference}
\end{equation}

Let
\begin{equation}
    G_{\Delta,t}^{A}
    \in
    \argmax_{G\in\Gcal_a(\widehat A_t)}
    \sum_{(i,j)\in E(G)}\Delta_{ij}^t
\end{equation}
be the best additive edge-influence graph available under the candidate node set $\widehat A_t$. Because $\widehat G_t$ maximizes the learned score objective over the same feasible family $\Gcal_a(\widehat A_t)$,
$
    \sum_{(i,j)\in E(\widehat G_t)}\cwhat_i^t\cwhat_j^t
    \ge
    \sum_{(i,j)\in E(G_{\Delta,t}^{A})}\cwhat_i^t\cwhat_j^t.
$
Using Equation~\eqref{eq:graph-score-difference} for both $\widehat G_t$ and $G_{\Delta,t}^{A}$ gives
\begin{align}
    \sum_{(i,j)\in E(G_{\Delta,t}^{A})}\Delta_{ij}^t
    -
    \sum_{(i,j)\in E(\widehat G_t)}\Delta_{ij}^t &\le
    \sum_{(i,j)\in E(G_{\Delta,t}^{A})}\cwhat_i^t\cwhat_j^t
    + B_t\varepsilon_t
    -
    \sum_{(i,j)\in E(\widehat G_t)}\cwhat_i^t\cwhat_j^t
    + B_t\varepsilon_t
    \\
    &\le
    2B_t\varepsilon_t.
\label{eq:restricted-graph-regret}
\end{align}
By definition of $\Pi_t(\widehat A_t)$,
\begin{equation}
\begin{aligned}
    &\max_{G\in\Gcal_a([N])}
    \sum_{(i,j)\in E(G)}\Delta_{ij}^t
    -
    \sum_{(i,j)\in E(\widehat G_t)}\Delta_{ij}^t
    \le
    \Pi_t(\widehat A_t)
    +
    2B_t\varepsilon_t.
\end{aligned}
\label{eq:additive-regret-bound}
\end{equation}

Finally, by Assumption~\ref{assump:edge-additive}, for every feasible graph $G$,
\begin{equation}
    F_t(G)
    =
    F_t(\emptyset)
    +
    \sum_{(i,j)\in E(G)}\Delta_{ij}^t
    +
    \rho_t(G),
    \qquad
    |\rho_t(G)|\le \beta_t.
\end{equation}
Since $G_{F,t}^{\star}$ is utility-optimal,
\begin{equation}
    F_t(G_{F,t}^{\star})
    \le
    F_t(\emptyset)
    +
    \max_{G\in\Gcal_a([N])}
    \sum_{(i,j)\in E(G)}\Delta_{ij}^t
    +
    \beta_t,
\end{equation}
while
\begin{equation}
    F_t(\widehat G_t)
    \ge
    F_t(\emptyset)
    +
    \sum_{(i,j)\in E(\widehat G_t)}\Delta_{ij}^t
    -
    \beta_t.
\end{equation}
Subtracting the two inequalities and applying Equation~\eqref{eq:additive-regret-bound} yields
\begin{equation}
    F_t(G_{F,t}^{\star})-F_t(\widehat G_t)
    \le
    \Pi_t(\widehat A_t)
    +
    2B_t\varepsilon_t
    +
    2\beta_t.
\end{equation}
This proves the theorem.
\end{proof}

\subsection{Consequences of Screening}

\begin{corollary}[False-positive is mostly absorbed by graph projection]
\label{cor:false-positive}
Suppose the candidate node set $\widehat A_t$ contains every endpoint used by the additive edge-influence oracle $G_{\Delta,t}^{\star}$, i.e.,
\begin{equation}
    V(G_{\Delta,t}^{\star})\subseteq \widehat A_t.
\end{equation}
Then $\Pi_t(\widehat A_t)=0$. Consequently, under the assumptions of Theorem~\ref{thm:scalar-graph-projection}, with probability at least $1-\delta_t$,
\begin{equation}
    F_t(G_{F,t}^{\star}) - F_t(\widehat G_t)
    \le
    2B_t\varepsilon_t + 2\beta_t.
\end{equation}
In particular, the bound does not scale with the number of extra nodes included by screening.
\end{corollary}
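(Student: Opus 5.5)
The statement to prove is Corollary~\ref{cor:false-positive}. The plan is to show $\Pi_t(\widehat A_t)=0$ directly from the definition in Equation~\eqref{eq:participation-loss}, and then substitute this into the bound of Theorem~\ref{thm:scalar-graph-projection}.

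\textbf{Step 1: a restriction argument.} Take the oracle graph $G_{\Delta,t}^{\star}\in\argmax_{G\in\Gcal_a}\sum_{(i,j)\in E(G)}\Delta_{ij}^t$. By hypothesis every endpoint of its edges lies in $\widehat A_t$. Consider the subgraph induced on $\widehat A_t$, which has the same edge set $E(G_{\Delta,t}^{\star})$. I would check that it belongs to $\Gcal_a(\widehat A_t)$, using the fact that each constraint defining $\Gcal_a$ is inherited when the node set is restricted while the edge set is kept:
\begin{itemize}
\item Acyclicity holds because the edge set is unchanged, so no cycle can appear.
\item The edge-mask constraints concern individual edges, all of which were already admissible.
\item The in- and out-degree budgets hold because each degree is unchanged.
\end{itemize}
This gives
\[
\max_{G\in\Gcal_a(\widehat A_t)}\sum_{(i,j)\in E(G)}\Delta_{ij}^t\ \ge\ \sum_{(i,j)\in E(G_{\Delta,t}^{\star})}\Delta_{ij}^t=\max_{G\in\Gcal_a}\sum_{(i,j)\in E(G)}\Delta_{ij}^t .
\]

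\textbf{Step 2: the reverse inequality.} This follows from $\Gcal_a(\widehat A_t)\subseteq\Gcal_a$: any admissible graph on the subset is also admissible on the full node set, with the remaining nodes isolated. So $\Pi_t(\widehat A_t)\ge 0$, and combined with Step 1, $\Pi_t(\widehat A_t)=0$.

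\textbf{Step 3: conclude.} Plug $\Pi_t(\widehat A_t)=0$ into Equation~\eqref{eq:main-bound}, with $B_t=\max_{G\in\Gcal_a}|E(G)|$ and $\varepsilon_t=2\alpha_t+\alpha_t^2+\nu_t$. This holds on the same probability-$(1-\delta_t)$ event as the theorem.

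For the final remark, note that $B_t$ is defined as a maximum over the full node set $[N]$. It therefore does not depend on $|\widehat A_t|$, and neither does any other term in the bound, so extra nodes admitted by screening do not enlarge it.

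The only delicate point is Step 1, specifically confirming that membership in $\Gcal_a(A)$ is simply $\Gcal_a$ restricted to graphs supported on $A$, with the same masks and budgets. I would state this reading of the paper's notation explicitly. Under that reading the argument is immediate.
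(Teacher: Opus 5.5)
Your proposal is correct and follows the paper's argument: the paper likewise observes that $G_{\Delta,t}^{\star}$ is feasible in $\Gcal_a(\widehat A_t)$, so the restricted maximum equals the unrestricted one. This gives $\Pi_t(\widehat A_t)=0$, and the paper then invokes Theorem~\ref{thm:scalar-graph-projection}, with your constraint-by-constraint inheritance check and the reverse inequality via $\Gcal_a(\widehat A_t)\subseteq\Gcal_a$ spelling out steps the paper leaves implicit.
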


\begin{proof}
If $V(G_{\Delta,t}^{\star})\subseteq \widehat A_t$, then the additive oracle graph is feasible inside $\Gcal_a(\widehat A_t)$. Therefore the best additive graph over $\widehat A_t$ has the same value as the best additive graph over all agents, so $\Pi_t(\widehat A_t)=0$. The result follows from Theorem~\ref{thm:scalar-graph-projection}.
\end{proof}

\begin{remark}[Interpretation of false positives]
If screening is too conservative, extra agents enter the candidate set. However, they do not automatically enter the communication graph. They must still win edge competitions under the product score $\cwhat_i^t\cwhat_j^t$, and they must satisfy the DAG and degree-budget constraints. A wrongly included node can occupy at most $B_{\rm out}$ outgoing and $B_{\rm in}$ incoming edges. Without these budgets, a false-positive node could contaminate many more communication paths.
\end{remark}

\begin{corollary}[Loss from false positive]
\label{cor:false-negative}
Let $
    M_t := [N]\setminus \widehat A_t
$
be the set of nodes excluded by screening. Then
\begin{equation}
    \Pi_t(\widehat A_t)
    \le
    \sum_{(i,j)\in E(G_{\Delta,t}^{\star}):\ i\in M_t\ \text{or}\ j\in M_t}
    \max\{0, \Delta_{ij}^t\}.
    \label{eq:false-negative-general}
\end{equation}
If, in addition, Assumption~\ref{assump:scalar-potential} holds, then
\begin{equation}
    \Pi_t(\widehat A_t)
    \le
    (B_{\rm out}+B_{\rm in})
    \sum_{i\in M_t}
    (u_i^t+\nu_t).
    \label{eq:false-negative-budget}
\end{equation}
\end{corollary}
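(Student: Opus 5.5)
The approach is a restriction argument: take the unrestricted additive oracle $G_{\Delta,t}^{\star}$ and delete every edge touching an excluded node in $M_t=[N]\setminus\widehat A_t$. Let $G'$ be $G_{\Delta,t}^{\star}$ with all edges $(i,j)$ having $i\in M_t$ or $j\in M_t$ removed, and with vertex set restricted to $\widehat A_t$. Deleting edges keeps the graph acyclic, keeps it within the edge mask, and can only lower in- and out-degrees. Hence $G'\in\Gcal_a(\widehat A_t)$, and so
\[
\max_{G\in\Gcal_a(\widehat A_t)}\sum_{(i,j)\in E(G)}\Delta_{ij}^t \;\ge\; \sum_{(i,j)\in E(G')}\Delta_{ij}^t .
\]
Subtracting this from the value of $G_{\Delta,t}^{\star}$, which by definition attains the first maximum in $\Pi_t$, gives
\[
\Pi_t(\widehat A_t)\le \sum_{(i,j)\in E(G_{\Delta,t}^{\star}):\,i\in M_t \text{ or } j\in M_t}\Delta_{ij}^t ,
\]
and bounding each term by $\max\{0,\Delta_{ij}^t\}$ yields \eqref{eq:false-negative-general}. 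The only point to verify here is the heredity of $\Gcal_a$ under edge deletion; this holds for all three constraints (DAG, mask, degree budgets).

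For \eqref{eq:false-negative-budget}, I would charge each removed edge to an excluded endpoint. Every edge in the sum has at least one endpoint in $M_t$. Assign it to such an endpoint, say the sender if the sender is in $M_t$ and otherwise the receiver. Because $G_{\Delta,t}^{\star}$ obeys the budgets, any node $i\in M_t$ receives at most $B_{\rm out}+B_{\rm in}$ charged edges.

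It remains to bound a single charged edge $(i,j)$ with $i\in M_t$. By Assumption~\ref{assump:scalar-potential},
\[
\max\{0,\Delta_{ij}^t\}\le \max\{0,u_i^tu_j^t+\nu_t\}\le u_i^t+\nu_t ,
\]
where the last step uses $u_j^t\in[0,1]$ and $u_i^t\ge 0$. The same bound holds symmetrically when the charged endpoint is the receiver. Summing over charged edges grouped by their charged node then gives $(B_{\rm out}+B_{\rm in})\sum_{i\in M_t}(u_i^t+\nu_t)$.

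The main obstacle is making sure no edge is double counted. An edge with both endpoints excluded should be charged exactly once, and the per-node count must respect the separate in- and out-budgets. Charging each edge to a single chosen endpoint handles the double counting, and the degree budgets cap each node's load at $B_{\rm out}+B_{\rm in}$, so the crude bound suffices.
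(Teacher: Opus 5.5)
Your proposal is correct and follows essentially the same route as the paper: delete from $G_{\Delta,t}^{\star}$ every edge incident to $M_t$ (feasible in $\Gcal_a(\widehat A_t)$ because the DAG, mask, and degree-budget constraints are preserved under edge deletion), then bound each lost edge by $u_i^t+\nu_t$ using $u_j^t\in[0,1]$ and cap each excluded node's contribution with the degree budgets. The only cosmetic difference is that you charge each removed edge to a single excluded endpoint. The paper instead sums over all excluded nodes and remarks that double counting edges with both endpoints in $M_t$ only loosens the bound.
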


\begin{proof}
Start from the additive oracle graph $G_{\Delta,t}^{\star}$ and delete every edge incident to an excluded node in $M_t$. The remaining graph is feasible in $\Gcal_a(\widehat A_t)$, because deleting edges preserves the edge mask, the DAG constraint, and the degree budgets. Hence the best graph available on $\widehat A_t$ loses no more additive value than the oracle edges incident to excluded nodes. This proves Equation~\eqref{eq:false-negative-general}.

For the budgeted bound, Assumption~\ref{assump:scalar-potential} gives
\begin{equation}
    \Delta_{ij}^t \le u_i^t u_j^t+\nu_t \le u_i^t+\nu_t
\end{equation}
for every edge outgoing from $i$, and similarly
\begin{equation}
    \Delta_{ji}^t \le u_j^t u_i^t+\nu_t \le u_i^t+\nu_t
\end{equation}
for every edge incoming to $i$. Since node $i$ has at most $B_{\rm out}$ outgoing and $B_{\rm in}$ incoming oracle edges, the total oracle value lost because node $i$ is excluded is at most
\begin{equation}
    (B_{\rm out}+B_{\rm in})(u_i^t+\nu_t).
\end{equation}
Summing over excluded nodes gives Equation~\eqref{eq:false-negative-budget}. This may double-count edges whose two endpoints are both excluded, but double-counting only makes the upper bound looser and remains valid.
\end{proof}

\begin{remark}[Interpretation of false negatives]
Aggressive screening is more dangerous than conservative screening. If a useful node is removed before graph construction, no graph constructor can recover edges incident to that node. However, degree budgets still limit the damage: missing a node removes only its budgeted incoming and outgoing communication opportunities, rather than all $O(N)$ possible incident edges. 
\end{remark}

\end{document}